\newtheorem{thm}{Theorem}
\newtheorem{lemma}{Lemma}
\newtheorem{remark}{Remark}
\newcommand{\eqnref}[1]{(\ref{#1})}
\def\hlinewd#1{%
	\noalign{\ifnum0=`}\fi\hrule \@height #1 \futurelet
	\reserved@a\@xhline}
\newcommand{\hthickline}{\hlinewd{1.2pt}}
\algnewcommand\algorithmicforeach{\textbf{for each}}
\renewcommand{\COMMENT}[2][.7\linewidth]{%
	\leavevmode\hfill\makebox[#1][l]{//~#2}}
\algnewcommand\RETURN{\State \textbf{return} }
\newcommand\CONDITION[2]%
\newcommand\Algphase[1]{%
	\vspace*{-.7\baselineskip}\Statex\hspace*{\dimexpr-\algorithmicindent-2pt\relax}\rule{\textwidth}{0.4pt}%
	\Statex\hspace*{-\algorithmicindent}\textbf{#1}%
	\vspace*{-.7\baselineskip}\Statex\hspace*{\dimexpr-\algorithmicindent-2pt\relax}\rule{\textwidth}{0.4pt}%
}
\title{Sampling-Based Tour Generation of Arbitrarily Oriented Dubins Sensor Platforms}
\author{
	Doo-Hyun Cho\thanks{Ph.D. Candidate, Department of Aerospace Engineering.
		E-mail: dhcho@lics.kaist.ac.kr}\\
	{\normalsize\itshape
		KAIST, Daejeon, 34141, Republic of Korea}\\
	{
		Dae-Sung Jang\thanks{Computer Scientist, Intelligent Systems Division.
			E-mail: dsjang@kau.ac.kr}
	}\\
	{\normalsize\itshape
		Korea Aerospace University, 10540, Republic of Korea}\\
	Han-Lim Choi\thanks{Associate Professor, Department of Aerospace Engineering.
		E-mail: hanlimc@kaist.ac.kr}\\
	{\normalsize\itshape
		KAIST, Daejeon, 34141, Republic of Korea}
}
\begin{document}
	
	\maketitle
	
	\begin{abstract}
		This paper describes a formulation and develops a novel procedure for a fleet of unmanned aerial vehicles (UAVs) from the perspective of remotely executable tasks.
		In a complex mission environment, the characteristics of vehicles can be different in terms of sensing capability, range, direction, or the motion constraints.
		The purpose of this paper is to find a set of paths that minimizes the sum of costs while every task region is visited exactly once under certain reasonable assumptions.
		The heterogeneous multi-UAV path planning problem is formulated as a generalized, heterogeneous, multiple depot traveling salesmen problem (GHMDATSP), which is a variant of the traveling salesman problem.
		The proposed transformation procedure changes an instance of the GHMDATSP into a format of an Asymmetric, Traveling Salesman Problem (ATSP) to obtain tours for which the total cost of a fleet of vehicles is minimized.
		The instance of the ATSP is solved using the Lin-Kernighan-Helsgaun heuristic, and the result is inversely transformed to the GHMDATSP-formatted instance to obtain a set of tours.
		An additional local optimization based path refinement process helps obtain a high-quality solution.
		Numerical experiments investigate and confirm for the validity and applicability of the proposed procedure.
	\end{abstract}
	
	\newpage
	\section*{Nomenclature}\label{nomenclature}
	\noindent
	\begin{table*}[htbp]
		\centering
		\begin{adjustbox}{max width=\textwidth}
			\begin{tabular}{@{}lcl@{}}
				$ c_{s,s'}, c(V^{k}_{t_1,i_1},V^{k}_{t_2,i_2}) $ &=& Cost of an edge from $ s $ to $ s' $ and from $ V^{k}_{t_1,i_1} $ to $ V^{k}_{t_2,i_2} $, respectively.\\
				$ \textrm{Cost}^{k} $ &=& Total tour cost of vehicle $ k $.\\
				$ D $ &=& Set of depots and terminals. $ D = \{(n+1)^1, \cdots, (n+1)^m, (n+2)^1, \cdots, (n+2)^m\} $.\\
				$ E $ &=& Set of all directed edges.\\
				$ g $ &=& Gravitational acceleration.\\
				$ i, k $ &=& Index of a sample node and a vehicle, respectively.\\
				$ K $ &=& Set of vehicles.\\
				$ l_{\textrm{max}}^{k} $ &=& Maximum load factor of vehicle $ k $.\\
				$ m,n $ &=& Number of vehicles and tasks, respectively.\\
				$ (n+1)^k,(n+2)^k $ &=& Vehicle $ k $'s depot and terminal cluster, respectively.\\
				$ N_{t} $ &=& Neighborhood of task $ t $.\\
				$ r_{min}, r^{k} $ &=& Minimum turning radius, for vehicle $ k $.\\
				$ r_{sen} $ &=& Radius of sensor footprint.\\
				$ s $ &=& Sample node.\\
				$ S^{\textrm{NIN}}_{t} $ &=& Set of sample nodes necessarily intersecting task $ t $.\\
				$ t $ &=& Index of a task.\\
				$ T $ &=& Set of tasks. $ T = \{1, \cdots, n\} $.\\
				$ T^{\textrm{NIN}}_{s} $ &=& Set of tasks necessarily intersected by sample node $ s $.\\
				$ u^k $ &=& Control input to change $ \theta_k $ of vehicle $ k $.\\
				$ v^k $ &=& Speed of vehicle $ k $.\\
				$ V $ &=& Set of all sample nodes.\\
				$ V^k, V_t, V^{k}_{t} $ &=& Clusters. Set of sample nodes in vehicle $ k $, in task $ t $, and in vehicle $ k $ and task $ t $, respectively. \\
				$ V^{k}_{t,i} $ &=& $ i^{\textrm{th}} $ sample node in cluster $ V^{k}_{t}, \quad V^{k}_{t,i} \in V^{k}_{t} \quad \forall k\in K, t\in \{T\cup D\} $.\\
				$ V^{NIN}(s), V^{NIN}(s)[p] $ &=& Set of necessarily intersected virtual nodes by sample node $ s $ and its $ p^{th} $ element.\\
				$ x_{s,s'} $ &=& Indicator variable associated with edge $ (s,s') $ that the edge is included in the solution.\\
				$ x^k, y^k $ &=& Coordinates of the position of vehicle $ k $.\\
				$ \dot{x^k}, \dot{y^k} $ &=& Coordinates' derivative of vehicle $ k $.\\
				$ \gamma^k $ &=& Normalization constant of vehicle $ k $.\\
				$ \theta^k, \dot{\theta^k} $ &=& Heading angle and its derivative of vehicle $ k $.
			\end{tabular}
		\end{adjustbox}
	\end{table*}%

	\section{Introduction}\label{sec:intro}
	
	In the field of surveillance and reconnaissance, the ability of unmanned aerial vehicles (UAVs) to perform tasks has recently been greatly enhanced through mass production and miniaturization of sensor systems and improvement of the accuracy of data obtained by measurement\cite{colomina2014unmanned, pajares2015overview}.
	Particularly, as the technologies related to the UAV, such as computing power and battery capacity, have rapidly developed \cite{valavanis2014handbook}, the frequency of UAVs used in practical situations has increased more than ever before.
	Despite the technological advances, however, the limited dynamics and maximum operating distance of a UAV are still a major constraint on mission performance \cite{cai2014survey}.
	Much research has been done in the past decades on path planning of UAVs to efficiently perform the given tasks within the given constraints \cite{cho2016informative, Kim2016cubature, Choi2015informative, ceccarelli2007micro, bortoff2000path}.
	
	In the traditional UAV maneuvering scheme when a human operator designates a specific location within a mission area, the operator moves it to a designated location according to the given control law.
	In terms of the decision-making hierarchy \cite{cummings2007automation}, in the conventional methodology, the mission scheduling of when and where to go is done by a human operator.
	Automation of the decision making is a major research effort, and one of the representative methods is to create a sequence for how to visit the area according to the given mission purpose.
	For fixed-wing UAV path planning, sequencing problems of visiting task location have been modeled as the traveling salesman problem (TSP) for a Dubins vehicle \cite{savla2008traveling, le2012dubins}.
	
	In this study, we assume that a sensor or communication system mounted on a vehicle can perform observations, measurements, or communications for a given mission area at a distance from a given location (e.g., optical image measurement, radar/lidar applications, etc.) \cite{yuan2007optimal, liu2013path}.
	In particular, in the area of the sensor network and telecommunication, a problem with the above assumption has been seen as a \textit{data mule} problem \cite{shah2003data, yuan2007optimal, liu2013path, chang2015artificial}.
	The impact of the above assumption in constructing the problem of path planning is quite significant because it can be interpreted as accomplishing the task by simply passing near the given task, not visiting it exactly \cite{herwitz2004imaging}.
	For determining the order of locations to visit, a problem can be modeled using a variant of the TSP called the Generalized TSP (GTSP) or TSP with Neighborhoods (TSPN) \cite{srivastava1969generalized, arkin1994approximation, dumitrescu2001approximation, de2005tsp, behdani2014integer}.
	In the GTSP, clusters are generated for each task to be visited, where a cluster is a set of sample nodes in a task-capable area.
	When the created path visits one or more of the nodes in the cluster for every given task, the mission is configured as completed.	
	However, the GTSP-based formulation does not include the dynamics of a given vehicle, in other words, point-to-point visits in an instance are approximated by straight line segments. 
	Since the maximum flight path curvature of a fixed-wing UAV is constrained at a given speed, its path planning problem has been often modeled as the Dubins TSPN (DTSPN) \cite{isaacs2013dubins}.
	
	The problem described in this paper assumes that multiple aerial vehicles perform the entire mission to visit around the designated locations instead of directly visiting the points (Generalized).
	In addition, the moving costs in different directions between two locations and the costs of the same task location pair for each vehicle can be different (Asymmetric, Heterogeneous), and the initial position of a vehicle is called a depot which can be located in a different place from the others (Multiple Depot).
	Instead of approaching the problem in a continuous domain, we sample multiple feasible states for each task-vehicle pair to move the problem into the discrete domain.
	Then the generated graph of the problem falls into the class of sampling-based roadmap \cite{choset2005principles, lavalle2006planning, obermeyer2012sampling}, and there are two main advantages; the complexity of the problem is reduced, and the state-of-the-art solvers or heuristics for the sampling-based roadmap can be used to solve the problem in interest.
	Therefore, the problem addressed in this paper can be simplified using a variant of TSP, which is called the Generalized Heterogeneous Multiple Depot Asymmetric TSP (GHMDATSP).
	
	Among similar problems addressed in the literature, the Generalized Multiple Depot TSP (GMDTSP) \cite{sundar2016generalized} is the most similar to the GHMDATSP but it still easier due to fewer constraints.
	In addition, the transformation method proposed in \cite{oberlin2009transformation} converts a graph of the Heterogeneous Multiple Depot TSP (HMDTSP) into the form of the ATSP, but the HMDTSP does not address the problem of handling the remote tasks because the vehicle must pass through the designated locations with a certain heading.
	One of the main contributions of this paper is proposing a novel procedure to overcome the above issues and obtain a high-quality solution using the GHMDATSP.	
	The procedure can be divided into two major parts: i) solving the GHMDATSP, and ii) the path refinement process.
	The first part belongs to the category of the transformation method, and it is mainly based on the Noon-Bean transformation \cite{noon1993efficient} and Oberlin's work \cite{oberlin2010today}.
	The problem that is transformed into the form of the ATSP is calculated by the famous heuristic technique called the Lin-Kernighan-Helsgaun heuristic (LKH) \cite{helsgaun2000effective} among several heuristics that solve the ATSP problem. 
	Then the inverse transformation helps obtain the solution in the form of the GHMDATSP.
	In the second part, the solution from the above part is refined by local optimization.	
	While preliminary ideas and results were addressed in our previous research \cite{jang2016optimal, cho2018heterogeneous}, this paper presents the mixed-integer linear programming formulation and theoretical results about the relationship between the solutions in the form of the ATSP and the GHMDATSP, respectively.
	In addition, this paper extends the concept called the `Necessarily Intersecting Neighborhood' \cite{jang2016optimal} from the omni-directional sensor to more general situations where the sensor has directionality, and utilizes it for the problem of interest.
	
	The rest of the paper is organized as follows.
	In Section \ref{sec:prob_form}, the notations are introduced and the problem is mathematically formulated as a mixed-integer linear program.
	Detailed explanations on the necessarily intersecting node and region are provided in Section \ref{sec:nin}.
	A procedure of obtaining a solution of the multi-vehicle path planning problem is described in Section \ref{sec:proc}.
	Computational results are provided and discussed in Section \ref{sec:sim}.
	Finally, Section \ref{sec:conc} discusses the conclusion of this study.

	\section{Problem Formulation}\label{sec:prob_form}

	In the case of a fixed-wing type UAV, each vehicle follows a curvature constrained Dubins vehicle dynamics.
	The mathematical model of Dubins vehicle dynamics is shown in the following set of equations \eqnref{eq:dubins}.
	\begin{subequations}\label{eq:dubins}
		\begin{align}
		\label{eq:dubins1} \dot{x}^k \quad = \quad & v^k\cos\theta^k \\
		\label{eq:dubins2} \dot{y}^k \quad = \quad & v^k\sin\theta^k \\
		\label{eq:dubins3} \dot{\theta}^k \quad = \quad & \dfrac{\gamma^k}{v^k}u^k
		\end{align}
	\end{subequations}	
	
	For every parameter, the superscript $ k \in K$ $ (K=\{1, 2, \cdots, m\}) $ denotes the index of a vehicle where $ m $ is the number of vehicles.
	$ x^k $ and $ y^k $ designate the coordinates of the position of vehicle $ k $;
	$ v^k $ is the speed of vehicle $ k $ which is fixed for the duration of the entire simulation scenarios;
	and $ \theta^k $ is the heading angle of the vehicle $ k $.
	Each vehicle has a control input $ u^k $ to change $ \theta^k $, and $ \gamma^k $ is a normalization constant that makes the range of $ u^k $ as $ [-1, 1] $.
	The value of $ u^k $ in this paper is set as $ -1 $, $ 0 $, or $ 1 $ to make the length of the Dubins path between an arbitrary sample node pair as small as possible.
	
	A graph of the GHMDATSP is defined as a set of sample nodes belonging to a set of tasks $ T = \{1,2,\cdots,n\} $ and a set of the depot and terminal nodes (the initial and final locations of vehicles) $ D = \{(n+1)^1,\cdots,(n+1)^m,(n+2)^1,\cdots,(n+2)^m\} $, along with the edges connecting the sample nodes.
	In $ D $, $ n+1 $ denotes a depot and $ n+2 $ is a terminal.
	Basically the TSP problems with depots assume that vehicles depart from and finally returns to their given depot.
	In this study, however, vehicles finally return to one of the terminal nodes which is different from the sample nodes in depot clusters; this makes it possible to arbitrarily set the end point of each vehicle.	
	The cardinality of $ T $ and $ D $ is denoted as $ |T|=n $ and $ |D|=2m $, and it is assumed that $ n \geq 1 $.
	We let each vehicle have a separate depot.
	If multiple vehicles are located in a single depot, then the positions of depots for relevant vehicles are repeated in the same place.
	
	Let $ V^k_t $ be a set of sample nodes for vehicle $ k $ to visit task $ t $, and $ G=(V,E) $ be a directed graph where $ V = \bigcup_{k\in K, t\in T\cup D} V^k_t $ is the union of sample node sets of the tasks, depot, and terminal for each vehicle.
	We define the $ i^{\textrm{th}} $ sample node in cluster $ V^k_t $ as $ V_{t,i}^{k} \in V^k_t $, or simply $ s \in V^k_t $ for simplicity if it does not compromise the understanding of the context.
	Also for a simplicity, a set of all sample nodes relevant to a vehicle $ k $ is $ V^k = \bigcup_{t\in T\cup D}  V^k_t $, and a set of all sample nodes relevant to a task is $ V_t = \bigcup_{k\in K}  V^k_t $.
	For any $ s \in V $ or $ S \subset V $, $ t(s) $ and $ t(S) $ are a task or a set of tasks to which the sample node $ s $ or the set $ S $ belong, respectively.	
	For any sample nodes $ s, s' \in V $, $ x_{s,s'} $ is defined as a binary decision variable, and the value equals 1 if the tour has an edge from $ s $ to $ s' $, and $ y_s $ is a binary decision variable which equals to 1 when the tour visits a sample node $ s $.
	The cost for each edge in edge set $ E $ of graph $ G $ are calculated before obtaining the solution of the GHMDATSP, let $ c_{s,s'} $ be the cost of from sample node $ s $ to $ s' $.
	The cost can be computed using the parameters of the vehicle based on the Dubins model.
	
	In this study, we use the objective function as Eq. \eqref{eq:obj} which minimizes the total cost (or total required time), which is one of the most commonly used in the aerospace and robotics fields.
	We associate with each edge $ (s,s') $ a variable $ x_{s,s'} $ which indicates that the edge is included in the solution.
	Inspired by the models for the variants of the generalized TSP in \cite{sundar2017algorithms, cho2018heterogeneous},the following is the formulation of the GHMDATSP using a mixed integer linear programming:
	\begin{alignat}{3}
		\MoveEqLeft[3] \text{Minimize: }\notag\\
		& \qquad\qquad\qquad\qquad\quad \label{eq:obj} \sum_{k \in K} \left(\sum_{s,s'\in V^k}c_{s,s'}\cdot x_{s,s'}\right) \\[1ex]
		\MoveEqLeft[3] \text{subject to}\notag\\
		& \label{eq:vtassign2} \sum_{s \in V_t} y_s = 1  & \forall t \in T \\
		& \label{eq:vtassign3} \sum_{s \in V^k_t} y_s = 1    &	\forall k \in K, t \in D \\
		& \label{eq:degree1} \sum_{s' \in V^k_{T\cup D\setminus \{t(s)\} }} x_{s',s} = y_s \qquad \qquad & \forall k \in K, s \in V^k \\
		& \label{eq:degree2} \sum_{s' \in V^k_{T\cup D\setminus \{t(s)\} }} x_{s,s'} = y_s \qquad \qquad & \forall k \in K, s \in V^k \\
		& \label{eq:subelim} \sum_{s' \in V\setminus S} x_{s',s} + x_{s,s'} \geq 2y_s \qquad & \forall S\subseteq \bigcup_{t\in T}V_t, s\in S
	\end{alignat}
	Constraints \eqref{eq:vtassign2} ensure that each task is visited at one of the nodes in its cluster.
	Constraints \eqref{eq:vtassign3} imply that one of the sample nodes in the depot or the terminal cluster for each vehicle has to be chosen.
	Equations in \eqref{eq:degree1} and \eqref{eq:degree2} are the degree constraints; if sample node $ s $ of vehicle $ k $ is chosen to be visited and the value of $ y_s $ is 1, the in-degree and out-degree of the node are 1, respectively.
	In other words, one of the edges towards sample node $ s $ and one of the outward edges from node $ s $ should be selected.
	Constraints in \eqref{eq:subelim} are the subtour elimination constraints; they prevent the generation of subtours in the solution for each vehicle.

	\section{Necessarily Intersecting Node and Region}\label{sec:nin}
	
	To solve the path planning problem, suppose a sampling-based roadmap is created in the form of the GTSP or its variants.
	The GTSP or GTSP-like graph can be converted to the form of ATSP using Noon-Bean transformation or other methods similar to it, and then exact or high-performance heuristic TSP solvers can be used to solve the instance.
	However, the generated roadmap basically discretizes the path planning problem which is in the continuous domain, and the performance of the above approach might be worse when the tasks are located quite close to each other.
	To relieve this issue, a concept called the \textit{Intersecting Node} and the \textit{Necessarily Intersecting Neighborhood} have been proposed \cite{isaacs2013dubins, jang2016optimal}.
	The intersecting node takes into account whether the footprint of a sensor includes other tasks when the vehicle is located on the sample node of the roadmap, and the necessarily intersecting node is an extension of the intersecting node which uses the nonholonomic and curvature-constrained vehicle dynamics as well as the footprint.
	Previous studies only handle the omni-directional sensors, so we extended the necessarily intersecting node to be able to handle the cases when the sensors are designed to look at the front or right sides based on the direction of the vehicle's heading.
	The shape of the footprint is assumed to be a circle with the radius $ r_{sen} $.

	\begin{figure*}[]
		\centering
		\captionsetup{justification=centering}
		\begin{minipage}{\linewidth}\centering
			\subfloat[3D view.]{\label{fig:ninOmni1}\includegraphics[frame,height=6cm]{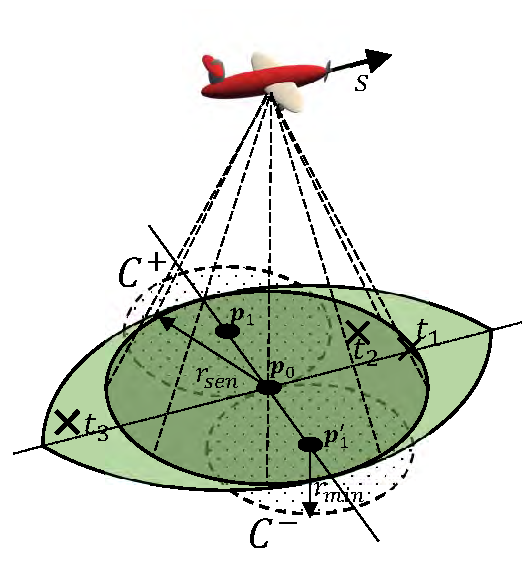}}\hspace{0.2cm}
			\subfloat[Top view.]{\label{fig:ninOmni2}\includegraphics[frame,height=6cm]{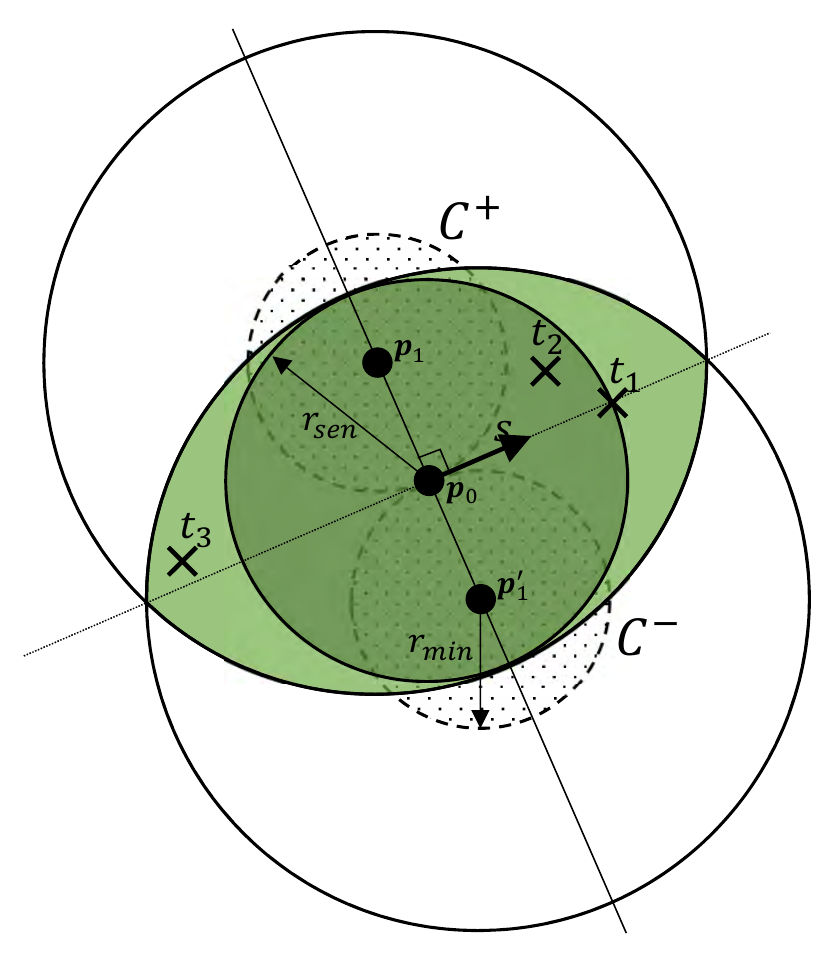}}
		\end{minipage}	
		\caption{An example of the necessarily intersecting nodes. A vehicle is passing through three tasks using an omni-directional sensor.}
		\label{fig:ninOmni}
	\end{figure*}
	
	\begin{figure*}[]
		\centering
		\captionsetup{justification=centering}
		\begin{minipage}{\linewidth}\centering
			\subfloat[Tour of an instance with 5 tasks and 2 vehicles, omni-directional sensors.]{\label{fig:ninPath1}\includegraphics[frame,width=0.495\linewidth]{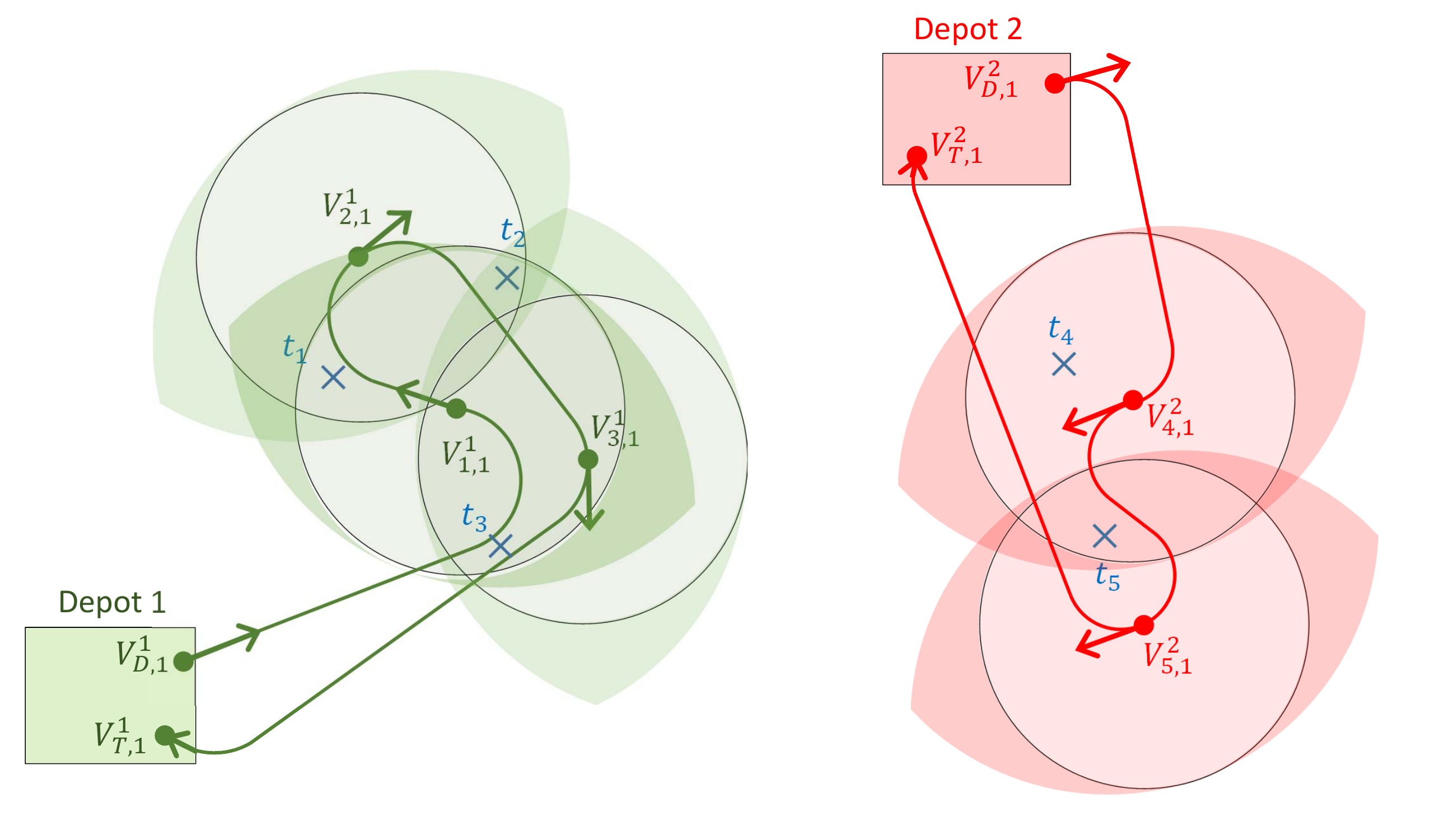}}
			\subfloat[Tour of an instance using necessarily intersecting nodes.]{\label{fig:ninPath2}\includegraphics[frame,width=0.495\linewidth]{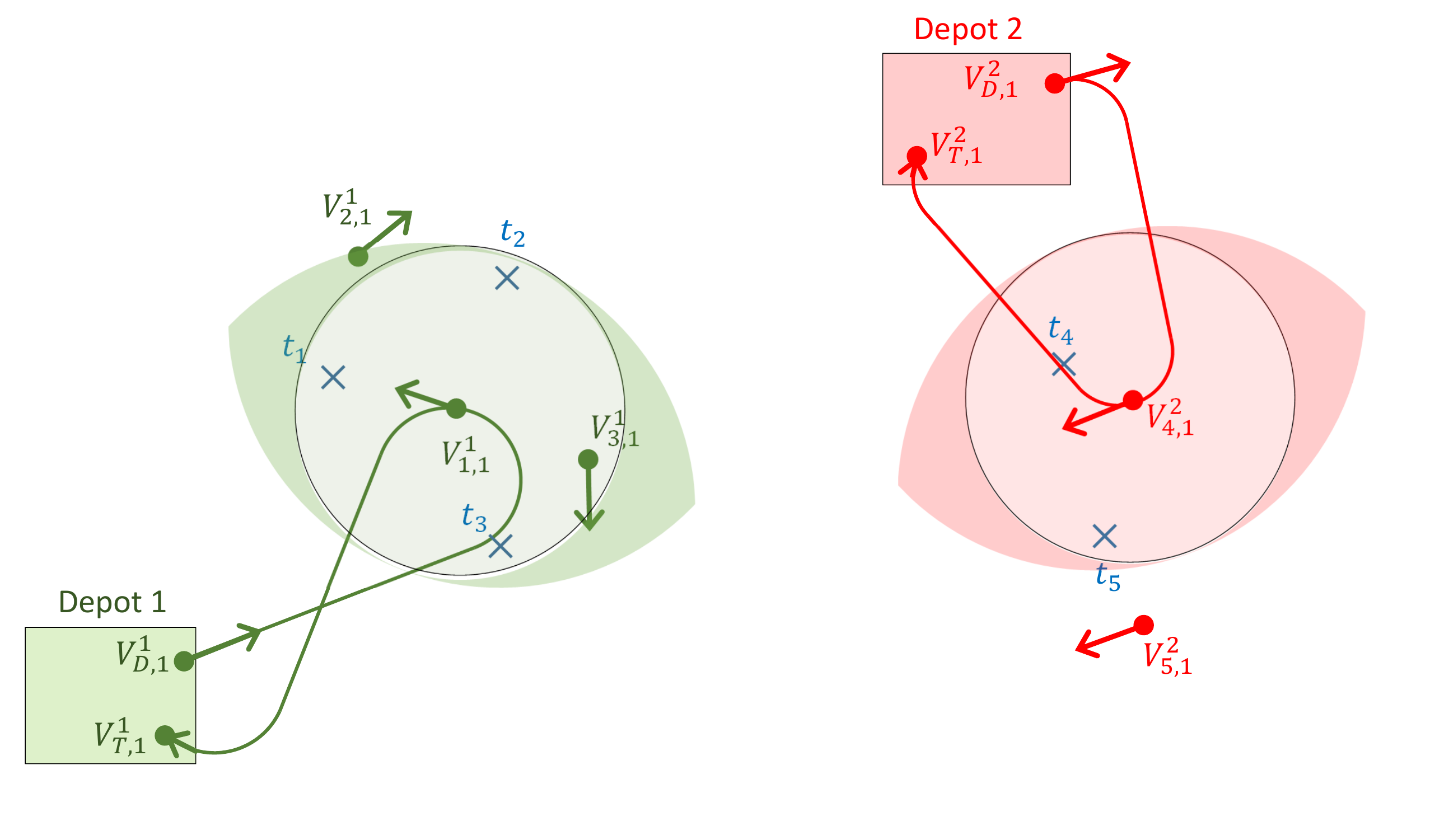}}	
		\end{minipage}	
		\caption{Tours of a GHMDATSP instance, omni-directional sensor.}
		\label{fig:ninPathOmni}
	\end{figure*}

	In Figure \ref{fig:ninOmni}, sample node $ s $ is one of the nodes visiting task $ t_1 $; simply, $ s $ belongs to the cluster $ V_{t1} $ of task $ t_1 $.
	Because the distance between node $ \textbf{p}_0 $ and task $ t_2 $ is less than $ r_{sen} $ in Figure \ref{fig:ninOmni}, the footprint (dark green area) contains $ t_2 $ when the vehicle is located on the sample node $ s $.
	Thus, if the position of any task $ t $ belongs to the sensor footprint when the vehicle is on the sample node $ s $, task $ t $ is called the intersecting node of sample node $ s $, and the dark green colored footprint area of sample node $ s $ is called the intersecting region.
	Furthermore, the path of the vehicle before and after the sample node $ s $ must exist in the non-dotted area between the circles $ C^+ $ and $ C^- $ for a short distance because the vehicle's dynamics is nonholonomic and curvature-constrained.
	The circles $ C^+ $ and $ C^- $ tangent at $ \textbf{p}_0 $ denote maximally steered paths from sample node $ s $, where the radius of each circle is $ r_{min} $ and the center is $ \textbf{p}_1 $ and $ \textbf{p}_1' $, respectively.
	Therefore, task $ t_3 $ is included in the footprint before the vehicle passes through sample node $ s $, and this node is called the necessarily intersecting node.
	The union of the light green and dark green colored areas is called the necessarily intersecting region.
	Because the intersecting region belongs to the necessarily intersecting region, all the task nodes in Figure \ref{fig:ninOmni} are the necessarily intersecting nodes of sample node $ s $.
	
	Suppose five tasks and two vehicles with omni-directional sensors are given in the instance as shown in Figure \ref{fig:ninPathOmni}.
	In the instance, the motion constraints of each vehicle follow the Dubins model, and each vehicle-task cluster has one sample node.
	Given the minimum turning radius of the vehicles, the optimal set of paths to visit the sample node of every task is as shown in Figure \ref{fig:ninPath1}.
	Since the necessarily intersecting region of sample node $ V^1_{1,1} $ includes both tasks $ t_2 $ and $ t_3 $ as well as $ t_1 $ in the figure, tasks $ t_2 $ and $ t_3 $ are the necessarily intersecting nodes of sample node $ V^1_{1,1} $.
	Likewise, since the necessarily intersecting region of the sample node $ V^2_{4,1} $ contains both the tasks $ t_4 $ and $ t_5 $, task $ t_5 $ is the necessarily intersecting node of the sample node $ V^2_{4,1} $.
	As a result, every task in the instance can be measured by the set of paths that vehicle 1 visits $ V^1_{1,1} $ and vehicle 2 visits $ V^2_{4,1} $ as shown in Figure \ref{fig:ninPath2}.

	\begin{figure*}[]
		\centering
		\captionsetup{justification=centering}
		\begin{minipage}{\linewidth}\centering
			\subfloat[3D view.]{\label{fig:ninFront1}\includegraphics[frame,height=5cm]{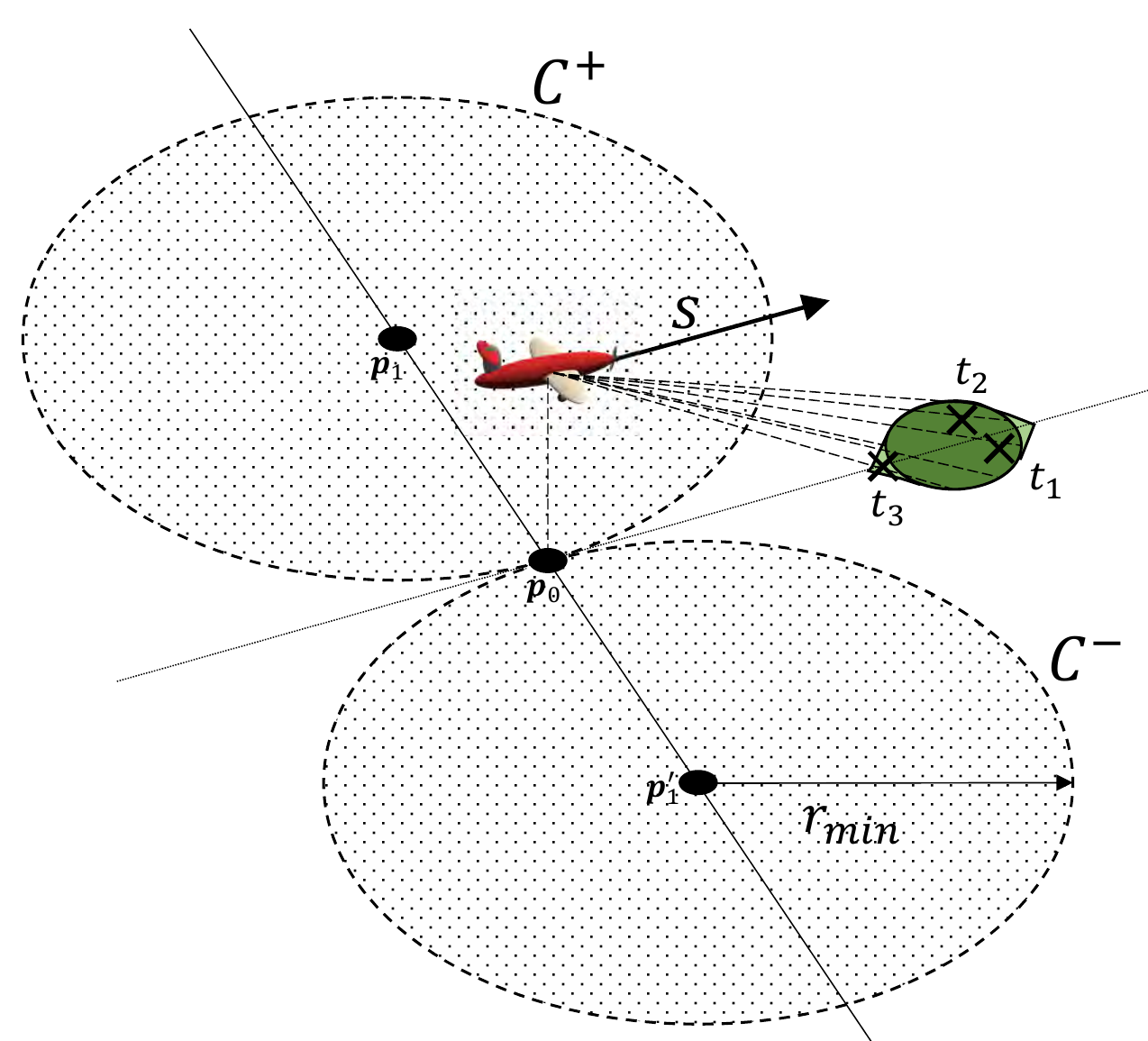}}\hspace{0.2cm}
			\subfloat[Top view.]{\label{fig:ninFront2}\includegraphics[frame,height=5cm]{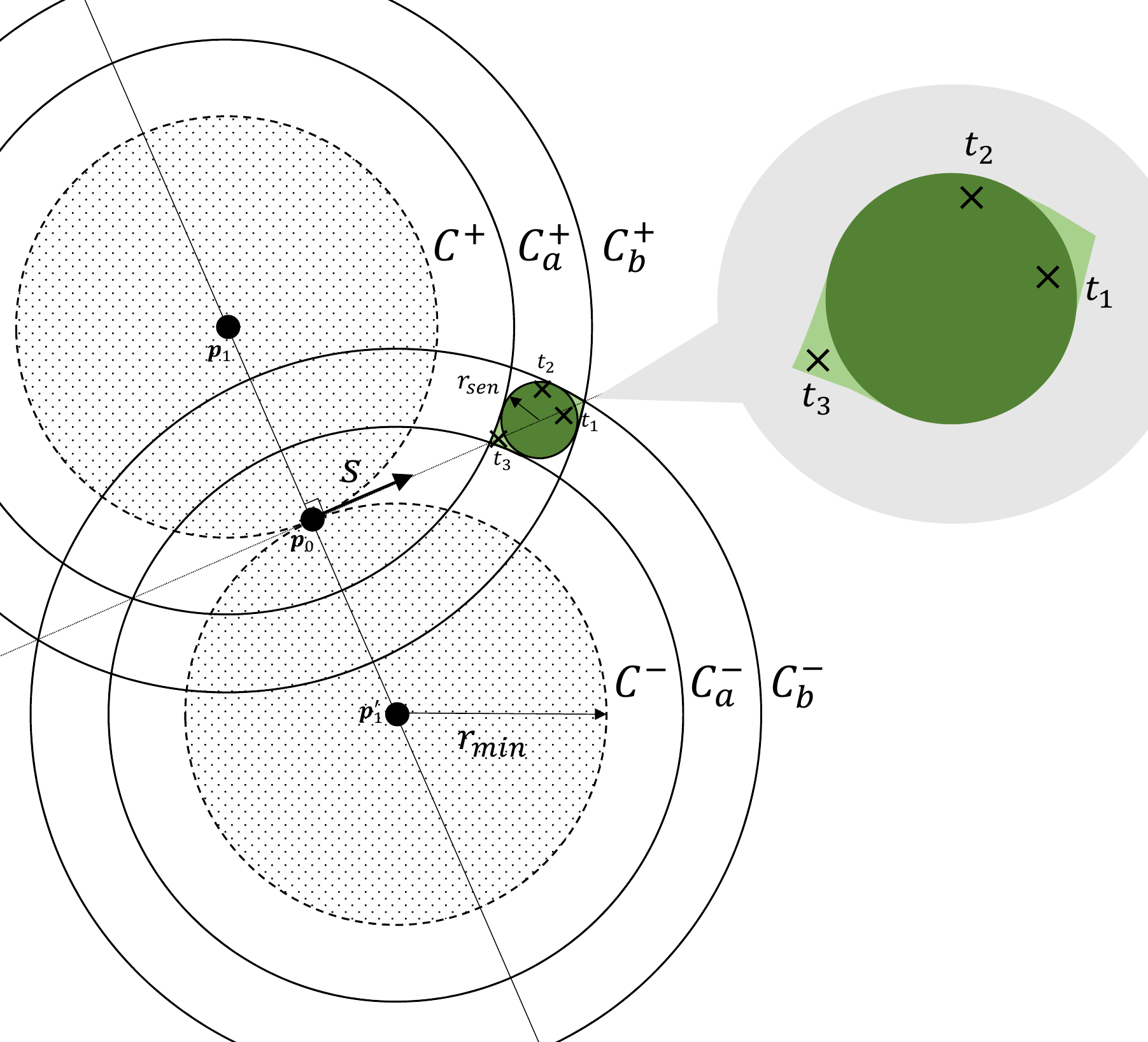}}
		\end{minipage}	
		\caption{An example of the necessarily intersecting nodes. A vehicle is passing through three tasks using a forward sensor}
		\label{fig:ninFront}
	\end{figure*}

	\begin{figure*}[]
		\centering
		\captionsetup{justification=centering}
		\begin{minipage}{\linewidth}\centering
			\subfloat[Tour of an instance with 5 tasks and 2 vehicles, frontward sensors.]{\label{fig:ninPath3}\includegraphics[frame,width=0.495\linewidth]{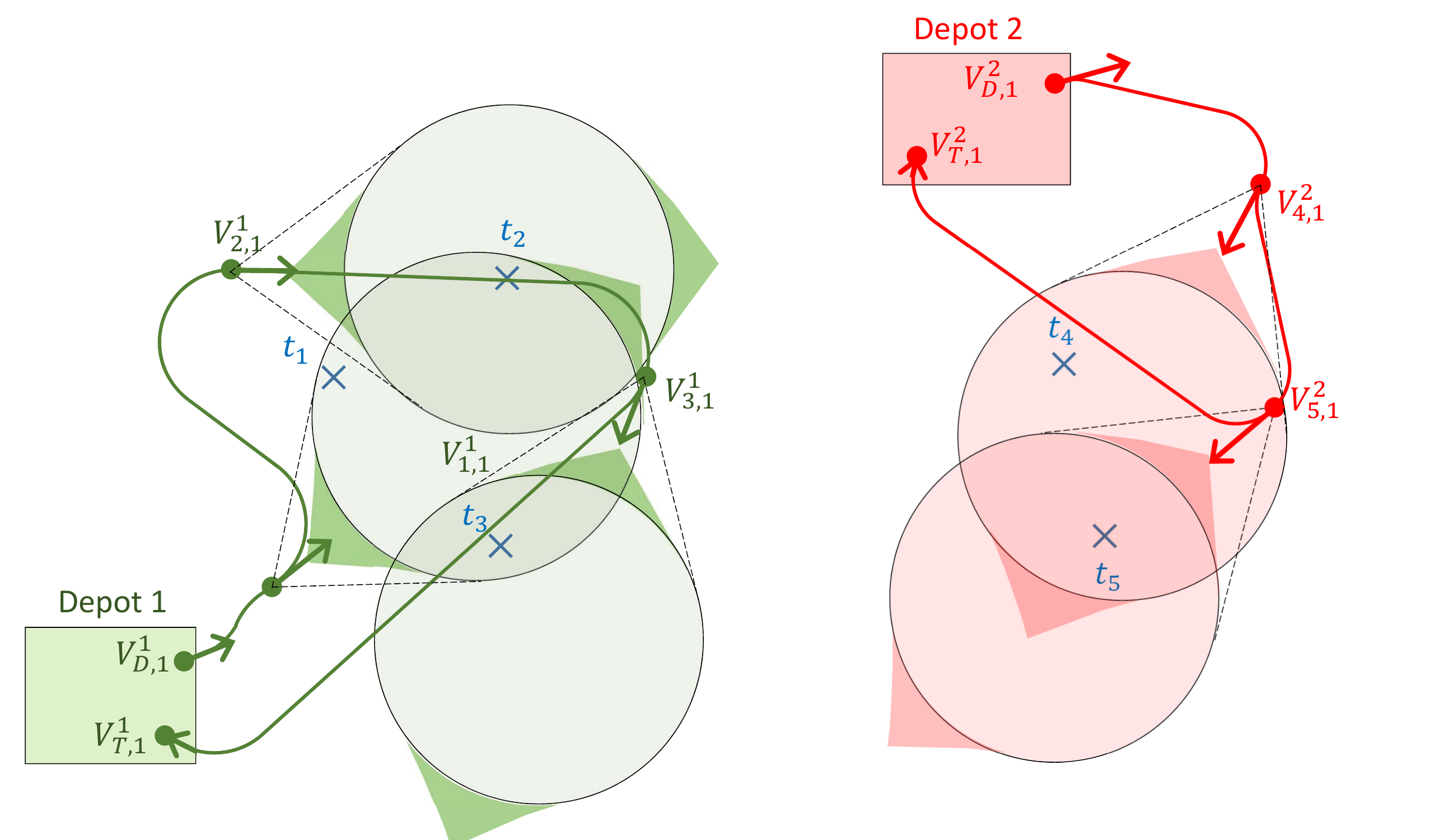}}
			\subfloat[Tour of an instance using necessarily intersecting nodes.]{\label{fig:ninPath4}\includegraphics[frame,width=0.495\linewidth]{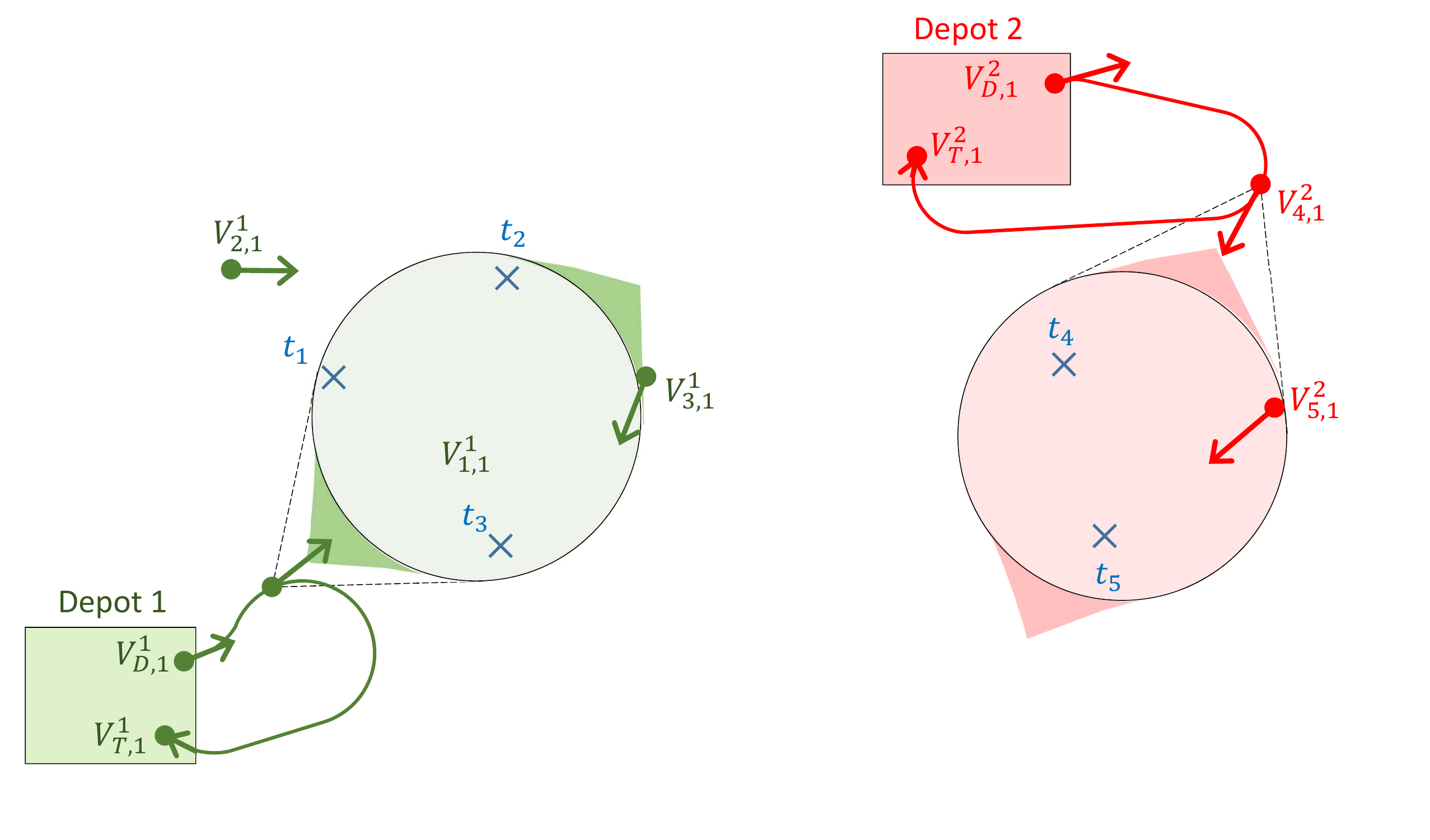}}	
		\end{minipage}	
		\caption{Tours of a GHMDATSP instance, frontward sensor.}
		\label{fig:ninPathFront}
	\end{figure*}
	
	\begin{figure*}[]
		\centering
		\captionsetup{justification=centering}
		\begin{minipage}{\linewidth}\centering
			\subfloat[3D view.]{\label{fig:ninRight1}\includegraphics[frame,height=6cm]{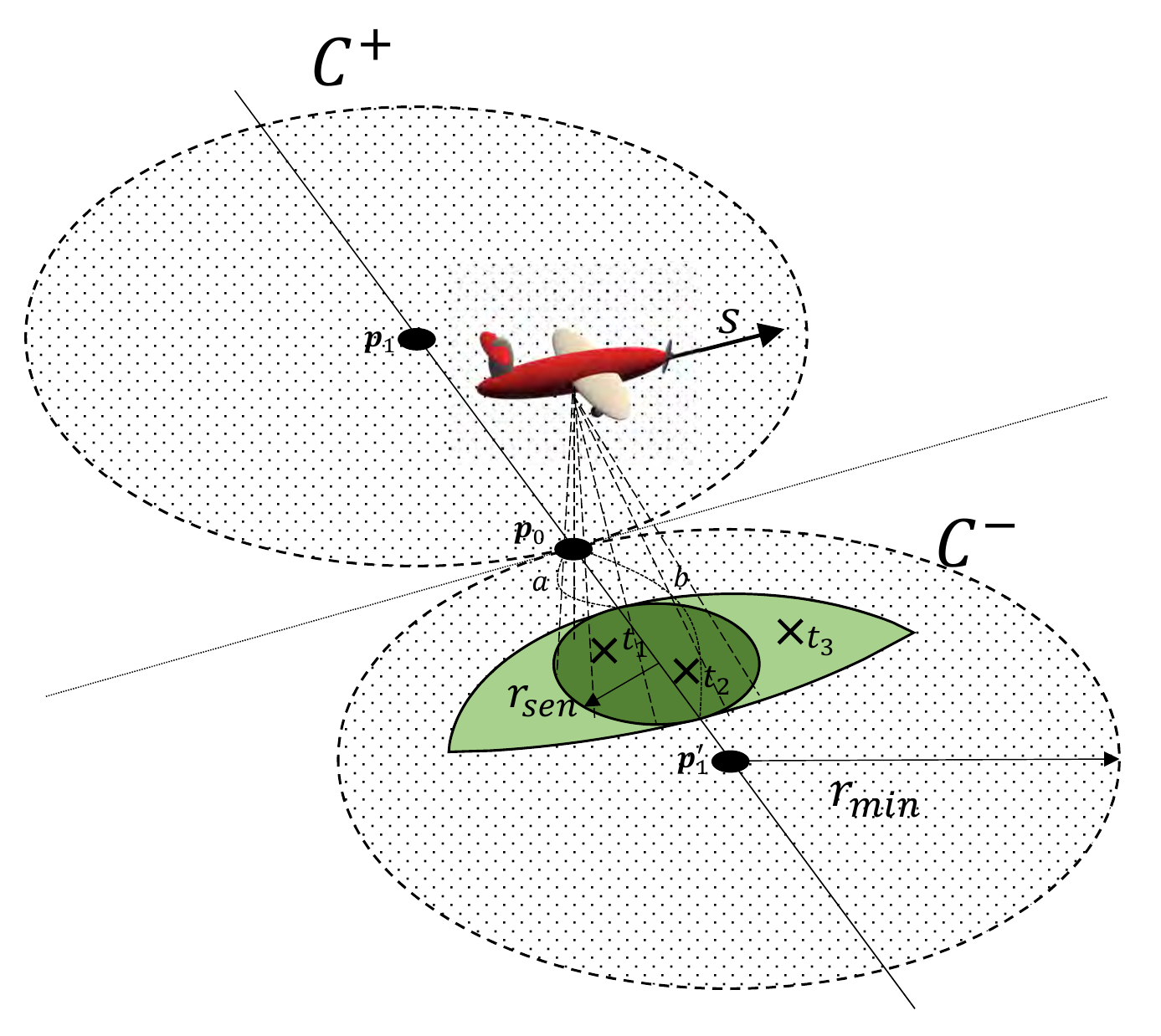}}\hspace{0.2cm}
			\subfloat[Top view.]{\label{fig:ninRight2}\includegraphics[frame,height=6cm]{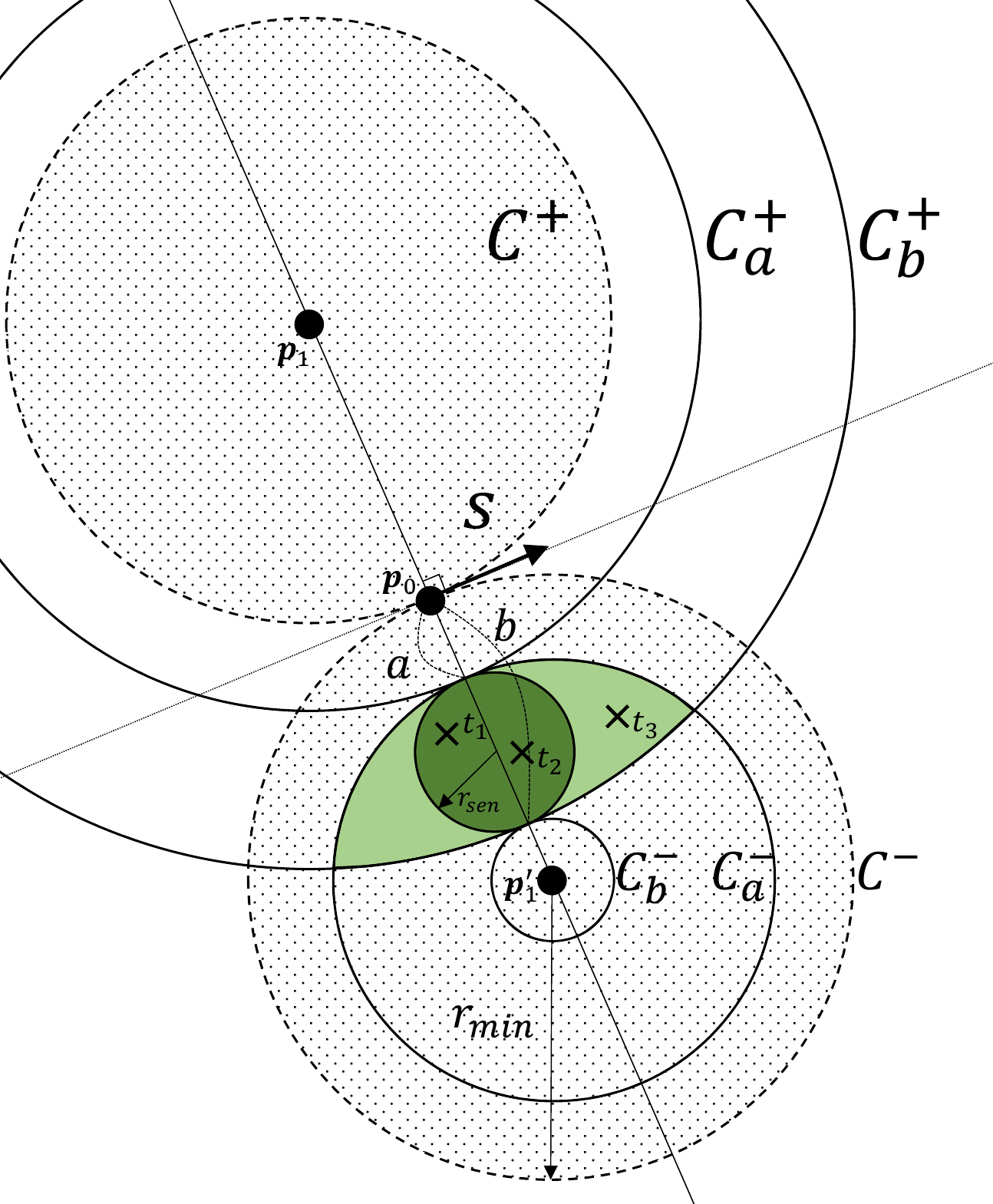}}	
		\end{minipage}	
		\caption{An example of the necessarily intersecting nodes. A vehicle is passing through three tasks using a rightward sensor.}
		\label{fig:ninRight}
	\end{figure*}
	
	\begin{figure*}[]
		\centering
		\captionsetup{justification=centering}
		\begin{minipage}{\linewidth}\centering
			\subfloat[Tour of an instance with 5 tasks and 2 vehicles, rightward sensors.]{\label{fig:ninPath5}\includegraphics[frame,width=0.495\linewidth]{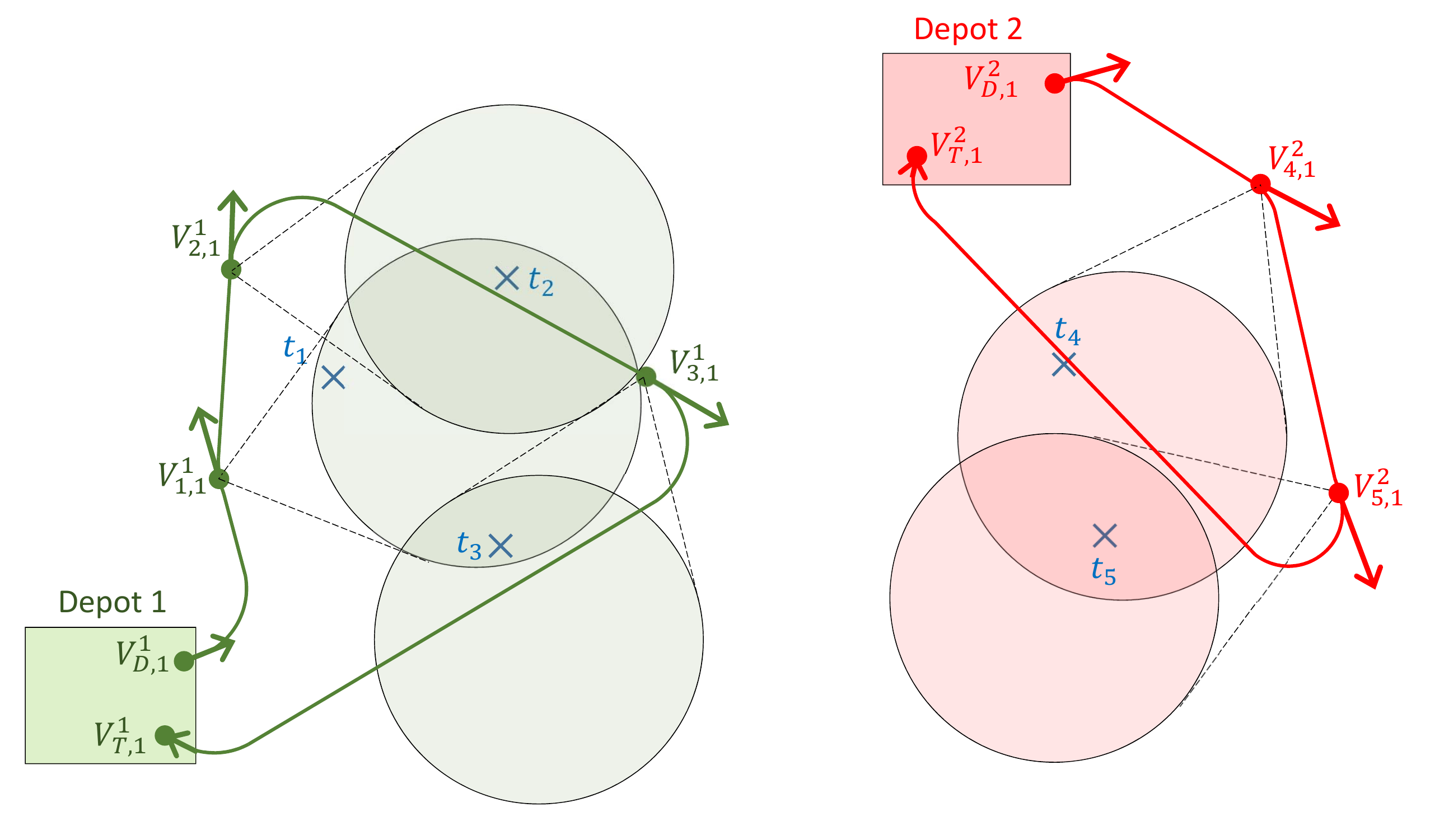}}
			\subfloat[Tour of an instance using necessarily intersecting nodes.]{\label{fig:ninPath6}\includegraphics[frame,width=0.495\linewidth]{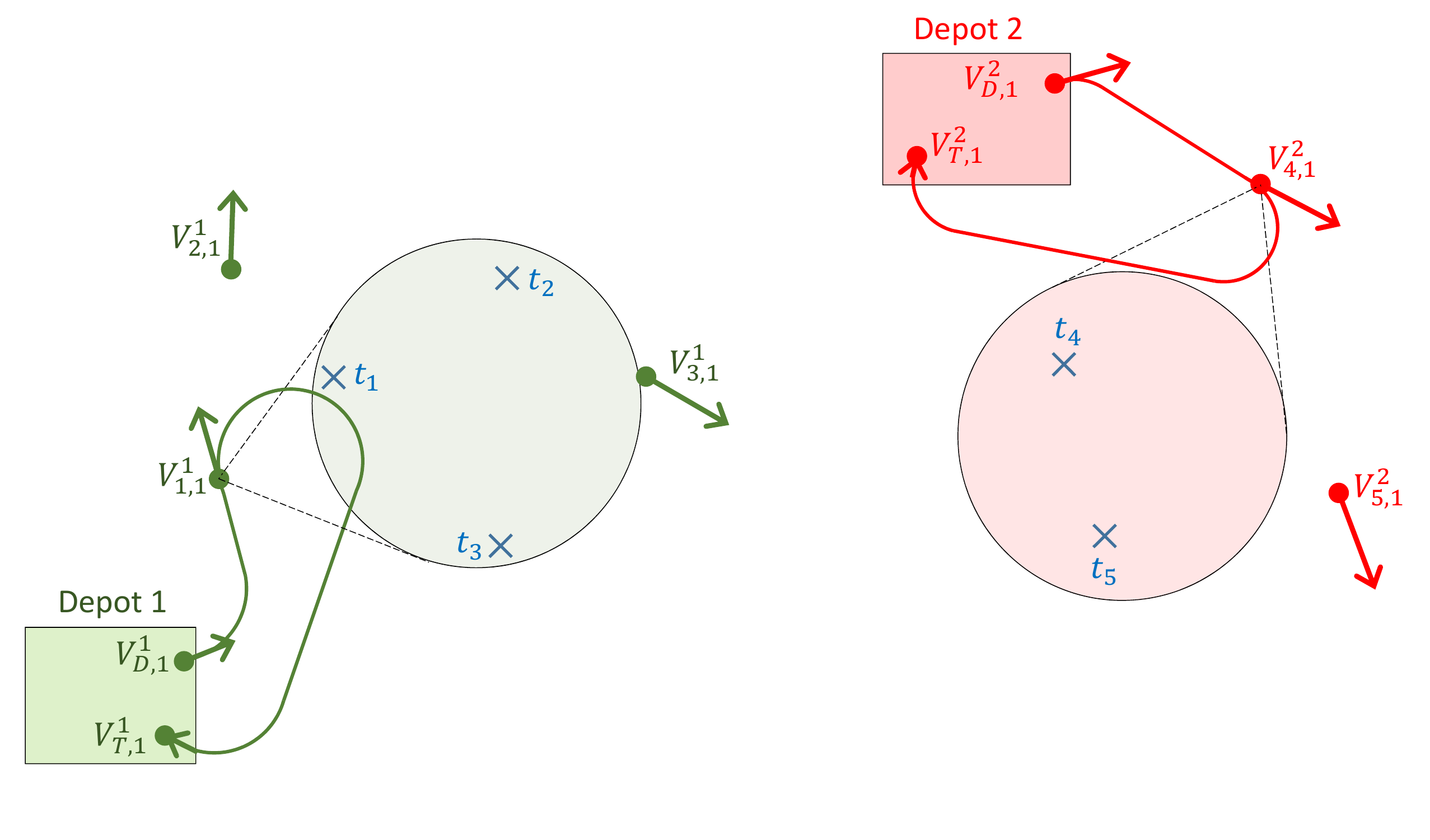}}	
		\end{minipage}	
		\caption{Tours of a GHMDATSP instance, rightward sensor.}
		\label{fig:ninPathRight}
	\end{figure*}

	Figures \ref{fig:ninFront} and \ref{fig:ninRight} show the situation in which the directional sensor looks ahead and right, respectively, with respect to the heading of the vehicle. 
	Similar to the situation of Figure \ref{fig:ninOmni}, sample node $ s $ belongs to the cluster $ V_{t1} $ of task $ t_1 $, and $ t_2 $ is located in the footprint of the sensor (dark green) when the vehicle follows the position and heading angle of sample node $ s $.
	In addition, the sensing area that is inevitably covered due to the vehicle dynamics is colored light green.
	For convenience, the sensing area of the vehicle when it passes through sample node $ s $ is zoomed in at the top right of Figure \ref{fig:ninFront2}.
	If the vehicle turns left along $ C^+ $, the footprint is generated along the donut-like area between the circles $ C^+_a $ and $ C^+_b $, whereas the footprint is generated along the donut-like area between the circles $ C^-_a $ and $ C^-_b $ if the vehicle turns right along $ C^- $.
	The superscripts + and - of circle $ C $ mean that a circle is centered on points $ \textbf{p}_1 $ and $ \textbf{p}_1' $, respectively, and the subscripts $ a $ and $ b $ mean that the radius of a circle is $ r_{sen} + a $ and $ r_{sen} + b $, respectively. 
	In an exceptional case, when the sensor is facing to the right, the radii of $ C^+_a $ and $ C^+_b $ are $ r_{sen} + a $ and $ r_{sen} + b $, respectively.
	In Figures \ref{fig:ninFront} and \ref{fig:ninRight}, the colored part is the intersection area for all cases where the vehicle moves with LL, LR, RL, and RR (L: left, R: right) turns before and after passing sample node $ s $, which is the necessarily intersecting region of sample node $ s $.
	If $ r_{min} $ is less than $ b $ in Figure \ref{fig:ninRight}, the additional intersecting region (light green color) caused by the vehicle dynamics becomes smaller than the footprint of the sensor, thus the necessarily intersecting region becomes the same as the footprint of the sensor.
	Refer to Appendix \ref{app:nir} for the relationship of the parameters and the necessarily intersecting region in detail when the sensor looks forward.
	
	As in Figure \ref{fig:ninPathOmni}, suppose that there is a simple instance of GHMDATSP in each of Figures \ref{fig:ninPathFront} and \ref{fig:ninPathRight} with five tasks, two Dubins vehicles, and one sample node for each vehicle-task cluster.
	The frontward sensor is mounted on each vehicle in Figure \ref{fig:ninPathFront} and the rightward sensor is mounted in Figure\ref{fig:ninPathRight}.
	Given the turning rate of each vehicle, the sets of optimal paths to visit the sample node of every task are shown in Figures \ref{fig:ninFront1} and \ref{fig:ninRight1}, respectively.
	In Figures \ref{fig:ninPathFront} and \ref{fig:ninPathRight}, tasks $ t_2 $ and $ t_3 $ are necessarily intersecting nodes of sample node $ V^1_{1,1} $ and task $ t_5 $ is a necessarily intersecting node of sample node $ V^2_{4,1} $.
	As a result, all the tasks in each instance can be measured through the set of paths that vehicle 1 visits $ V^1_{1,1} $ and vehicle 2 visits $ V^2_{4,1} $ as shown in Figures \ref{fig:ninPath4} and \ref{fig:ninPath6}.

	\subsection{Necessarily Intersecting Region for Arbitrary Footprints}

	\begin{figure*}[t]
		\centering
		\captionsetup{justification=centering}
		\begin{minipage}{\linewidth}\centering
			\subfloat[3D view.]{\label{fig:ninArbi1}\includegraphics[frame,height=5cm]{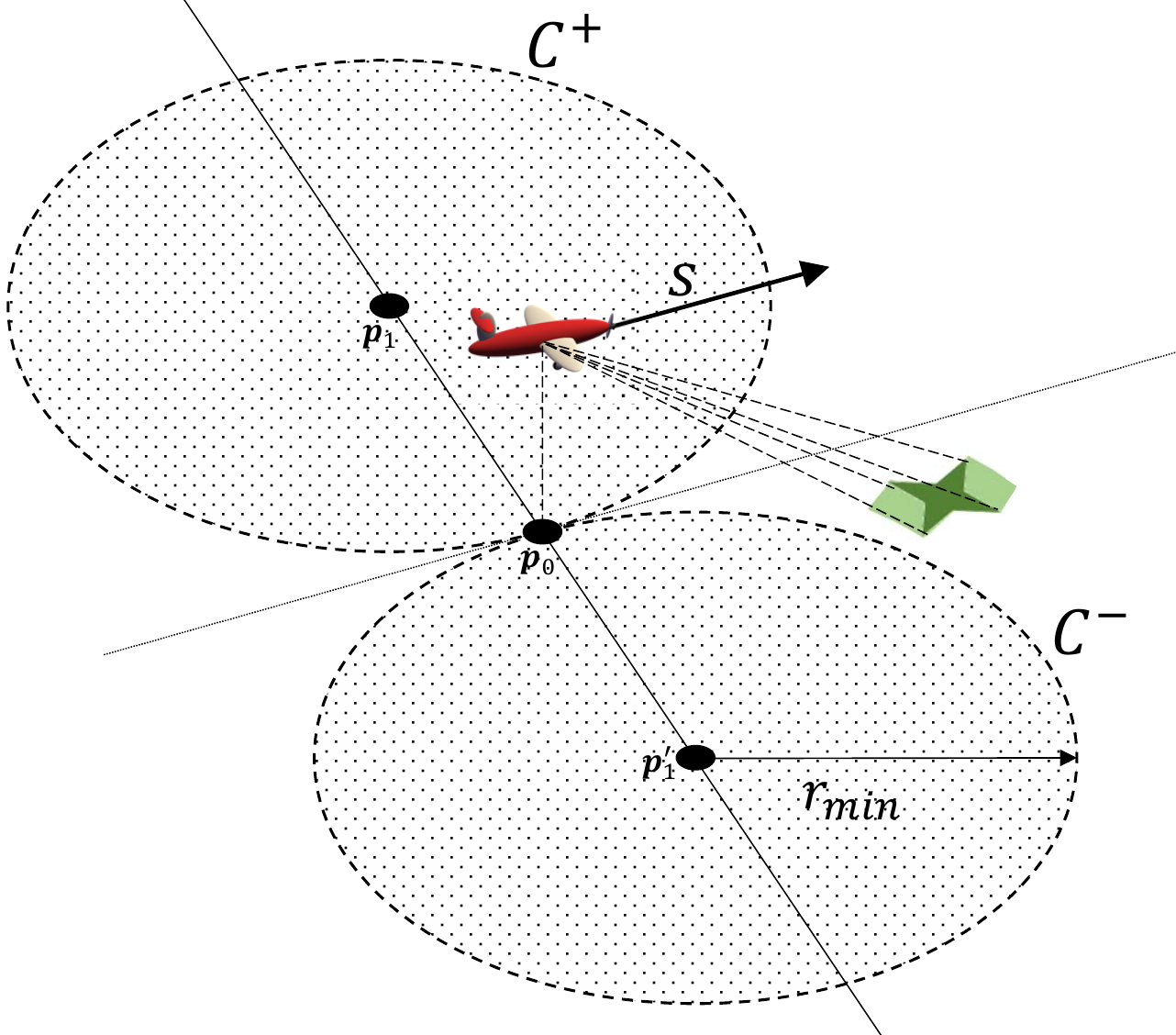}}\hspace{0.2cm}
			\subfloat[Top view.]{\label{fig:ninArbi2}\includegraphics[frame,height=5cm]{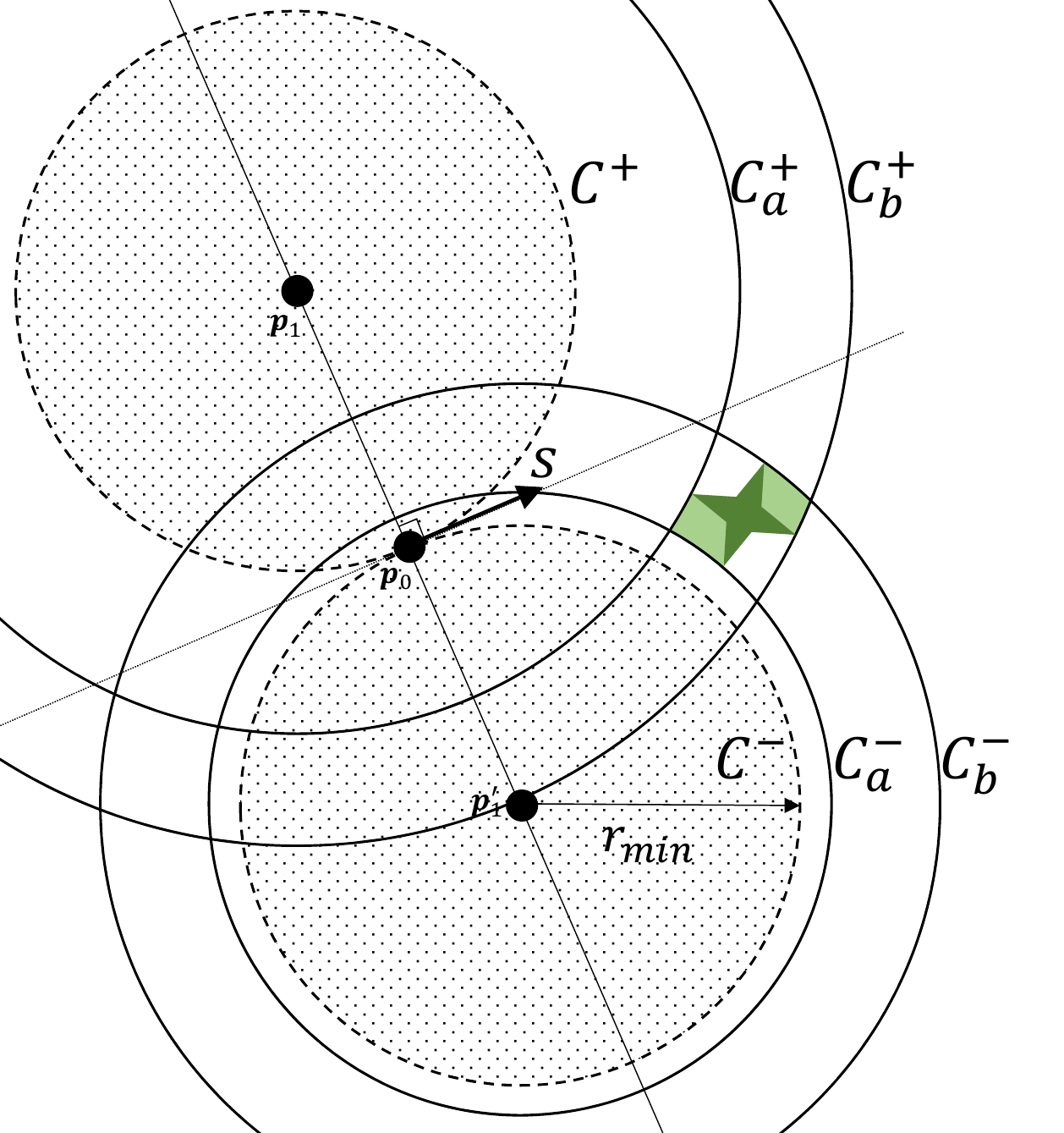}}
		\end{minipage}	
		\subfloat[An intersection to compute the necessarily intersecting region.]{\label{fig:ninArbi3}\includegraphics[height=4cm]{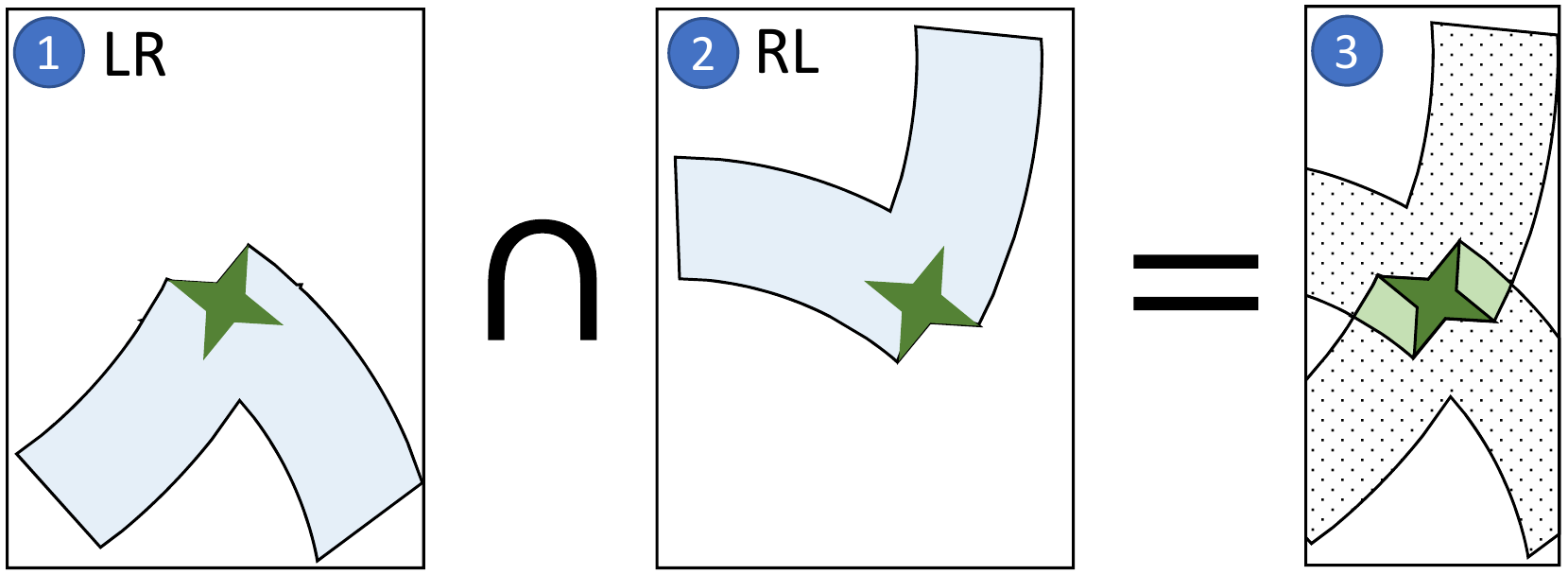}}
		\caption{Necessarily intersecting region for a footprint with arbitrary shape.}
		\label{fig:ninArbi}
	\end{figure*}
	
	This section describes the way of computing necessarily intersecting region for arbitrarily shaped and located footprints.
	In the example in Figure \ref{fig:ninArbi}, it is assumed that the shape of the footprint is similar to a four-point star which is quite non-convex, and the position is in front of the vehicle but not at the exact front.
	The Dubins vehicle has three maneuvers: left turn (L), right turn (R), and straight (S), therefore there are nine cases of the Dubins path: L-(L,R,S), R-(L,R,S), and S-(L,R,S), if the path is combined before and after the certain point.
	Necessarily intersecting region is the intersection of the trace of the footprint for all of the above cases, but the result is the same as the intersection of both cases LR and RL.
	The trace of the footprint along the trajectories LR and RL is shaded in sky blue in Figure \ref{fig:ninArbi3}, and the intersection of each area is indicated by the colored area on the right side of the figure.
	Let the points $ \textbf{p}_1 $ and $ \textbf{p}_1' $ the center of left-turning and right-turning circles, respectively.
	The procedure of obtaining the necessarily intersecting region for sample node $ s $ is as follows.
	
	\begin{enumerate}
		\item Obtain circles centered at $ \textbf{p}_1 $ with radii set to the minimum and maximum distance between $ \textbf{p}_1 $ and the footprint (Circles $ C^+_a $ and $ C^+_b $ in Figure \ref{fig:ninArbi2}).
		\item Obtain circles centered at $ \textbf{p}'_1 $ with radii set to the minimum and maximum distance between $ \textbf{p}'_1 $ and the footprint (Circles $ C^-_a $ and $ C^-_b $ in Figure \ref{fig:ninArbi2}).
		\item The left-turning trace of the footprint can be obtained through circles $ C^+_a $ and $ C^+_b $, and the right-turning trace of the footprint can be obtained through circles $ C^-_a $ and $ C^-_b $. Obtain the traces of the footprint L-R and R-L based on sample node $ s $ and calculate their intersection.
	\end{enumerate}

	\section{Procedure}\label{sec:proc}

	The procedure to obtain a solution for the multi-vehicle path planning problem in this study is as follows:
	\begin{enumerate}
		\item Given the scenario, construct an instance of the GHMDATSP.
		\item Transform the GHMDATSP instance into the form of the ATSP.
		\item Solve the ATSP instance and transform the solution into the form of the GHMDATSP.
		\item Locally optimize each state in the continuous domain.
	\end{enumerate}
	
	The details for each step are described below.

	\subsection{Constructing the GHMDATSP instance}\label{subsec:proc1}
	
	An instance of the GHMDATSP can be expressed as a union of the directed graph for each vehicle.
	To constitute the GHMDATSP instance, inputs must be given: for example, information of vehicles and tasks.
	More specifically, about the vehicle, information about the starting position, speed, and the minimum radius of rotation for each vehicle is required.
	For a task, the radius of the neighboring area and the number of sample nodes including an entering information for one of the vehicles are required.
	Each sample node holds information about the entry position and entering angle of the vehicle.
	When creating sample nodes, the quasi-random Halton sequence is used to prevent the position and direction of the sample node from being located on a specific side on a neighboring circle.
	The sample nodes are created under the following constraints: the position of the sample nodes are on the boundary of the neighboring circle of each task; the entering direction is towards the neighboring circle, but not directly towards the center of the circle (exact task position). 
	Based on the given inputs, the paths between all of the different sample nodes except the sample nodes belonging to the same task are represented as a directed edge with weights as its moving cost (flight time).
	The moving cost of each vehicle's path is calculated as a minimum Dubins path given the turning radius and velocity.
	
	\subsection{Transformation from GHMDATSP to ATSP with Necessarily Intersecting Nodes}\label{subsec:proc2}
	
	To express that a task is finished when at least one of the sample nodes in each task has been visited, we add the zero-cost edges between the sample nodes in the same task, depot, or terminal cluster.
	The GTSP can be transformed into the form of the ATSP using the Noon-Bean transformation \cite{noon1993efficient}, and we apply a similar method to the GHMDATSP.
	
	\begin{figure*}[]
		\centering
		\captionsetup{justification=centering}
		\begin{minipage}{\linewidth}\centering
			\subfloat[Sample nodes for vehicle \#1.]{\label{fig:trans1}\includegraphics[frame,width=.495\linewidth]{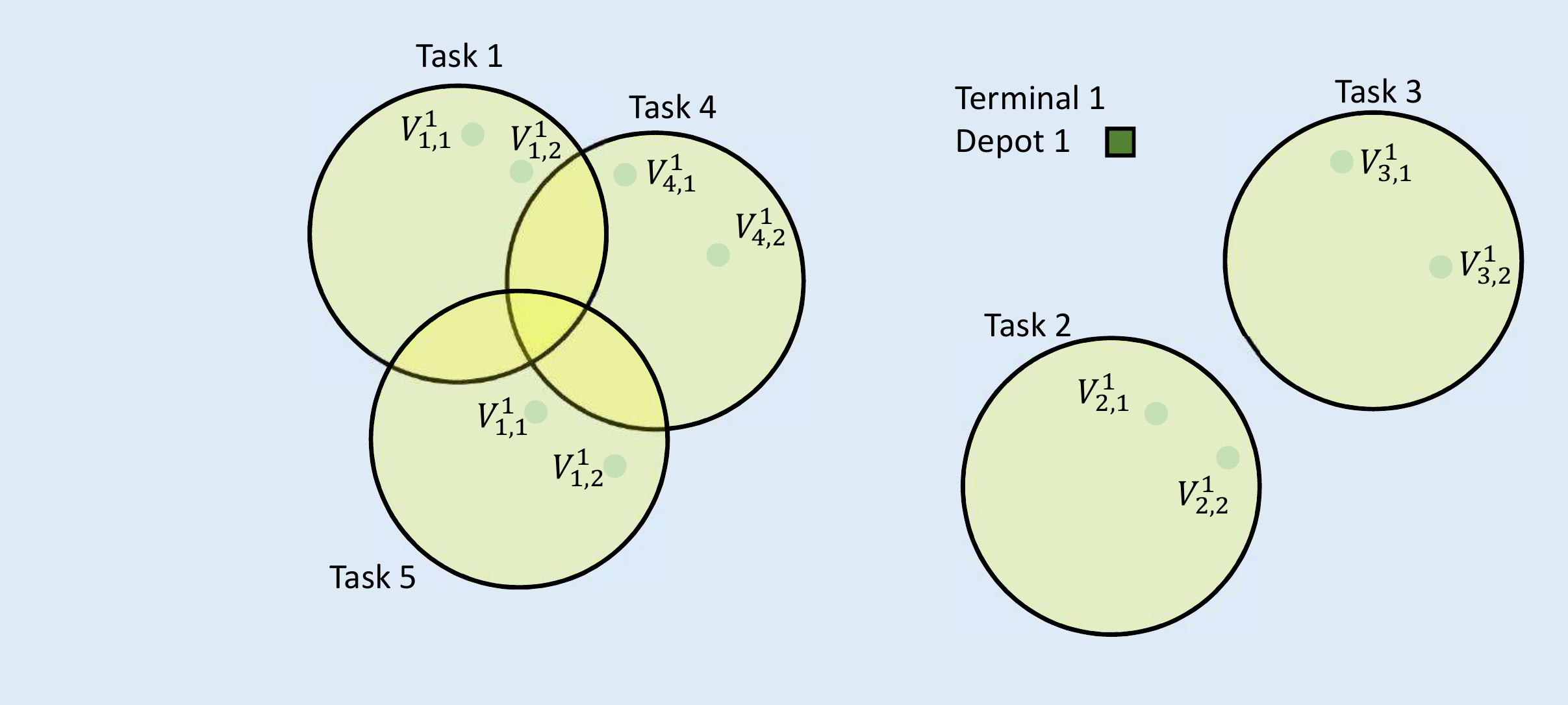}}
			\subfloat[Sample nodes for vehicle \#2.]{\label{fig:trans2}\includegraphics[frame,width=.495\linewidth]{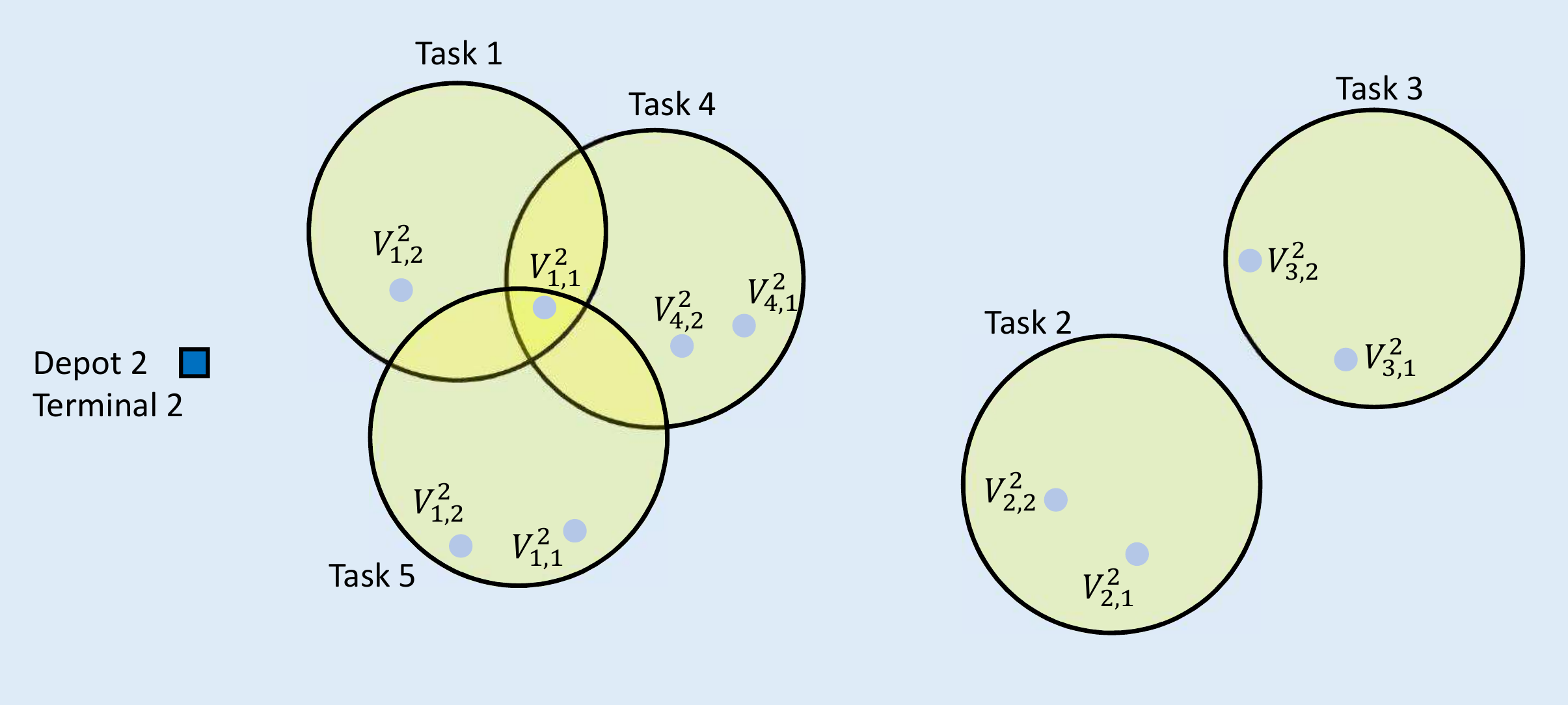}}
		\end{minipage}
		\begin{minipage}{\linewidth}\centering
			\subfloat[All sample nodes in the scenario. Depots and terminals also have multiple samples, but are skipped for simplicity.]{\label{fig:trans3}\includegraphics[frame,page=3,width=.495\linewidth]{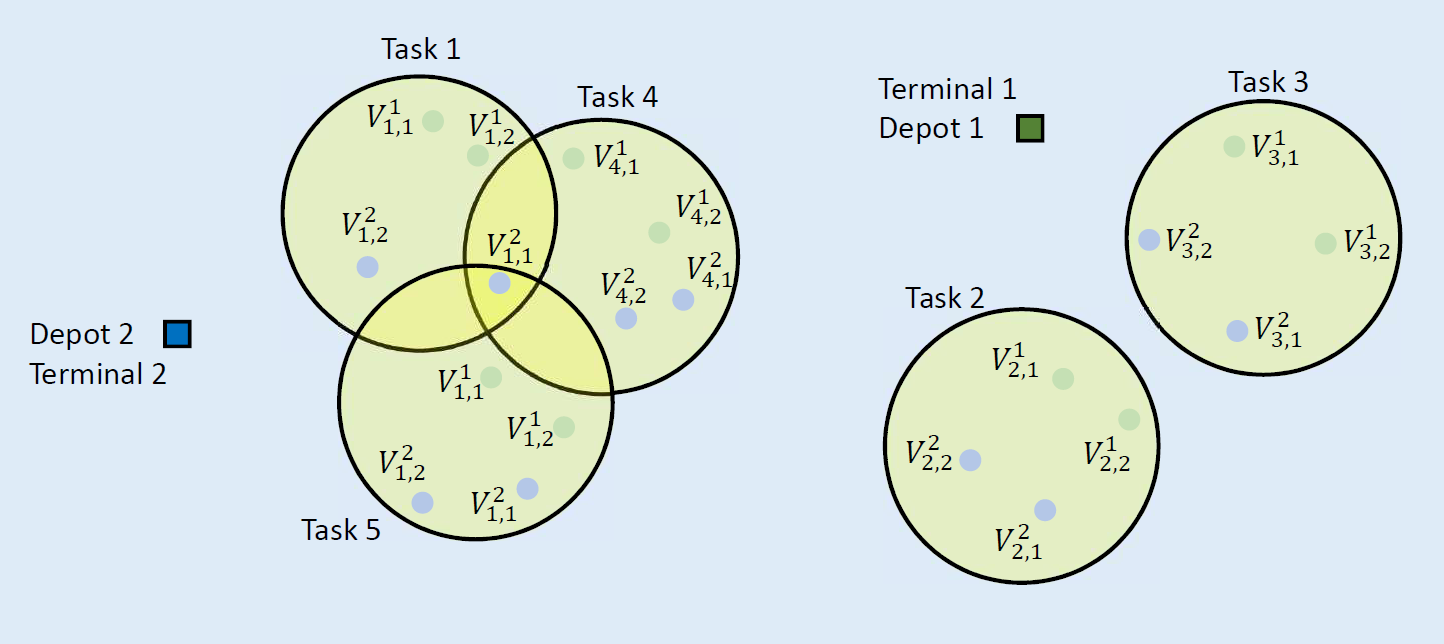}}
			\subfloat[Feasible solution for the scenario, without the necessarily intersecting nodes.]{\label{fig:trans4}\includegraphics[frame,width=.495\linewidth]{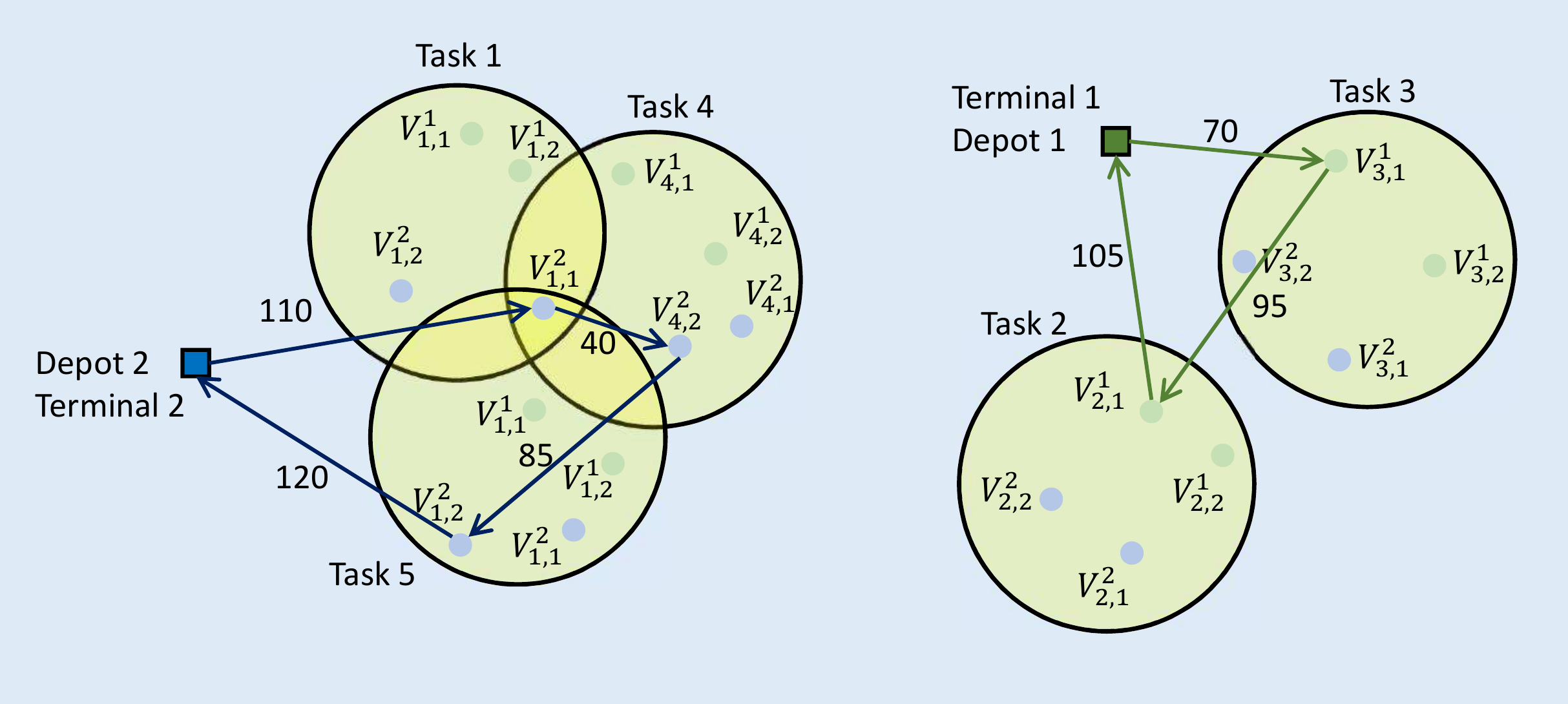}}
		\end{minipage}
		\begin{minipage}{\linewidth}\centering
			\subfloat[Feasible solution for the scenario, with the necessarily intersecting nodes.]{\label{fig:trans5}\includegraphics[frame,width=.495\linewidth]{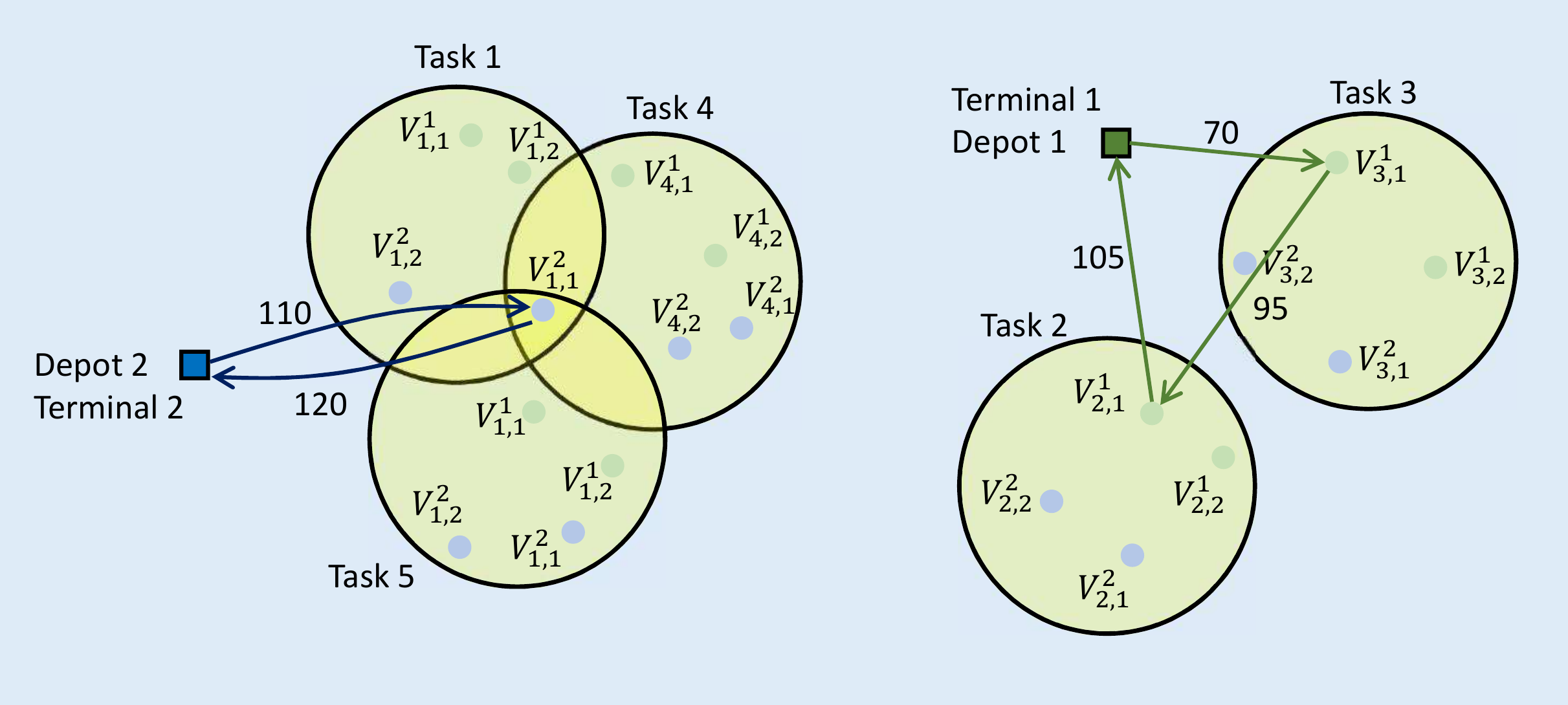}}
			\subfloat[Feasible solution for the scenario, with the necessarily intersecting nodes.]{\label{fig:trans6}\includegraphics[frame,width=.495\linewidth]{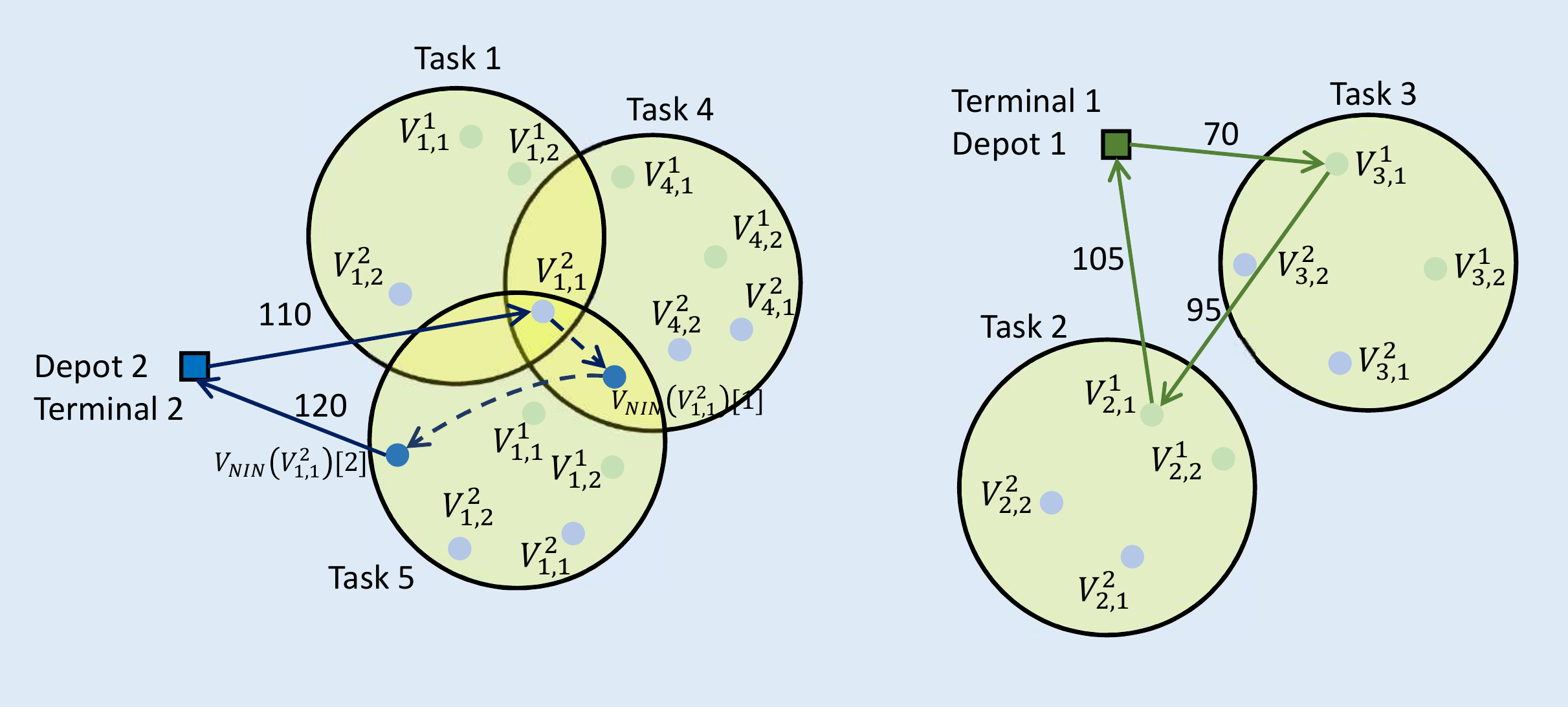}}
		\end{minipage}
		\begin{minipage}{\linewidth}\centering
			\subfloat[Construction of a transformation from GHMDATSP to ATSP: Adding 0-cost edges connecting the nodes within each group.]{\label{fig:trans7}\includegraphics[frame,page=7,width=.495\linewidth]{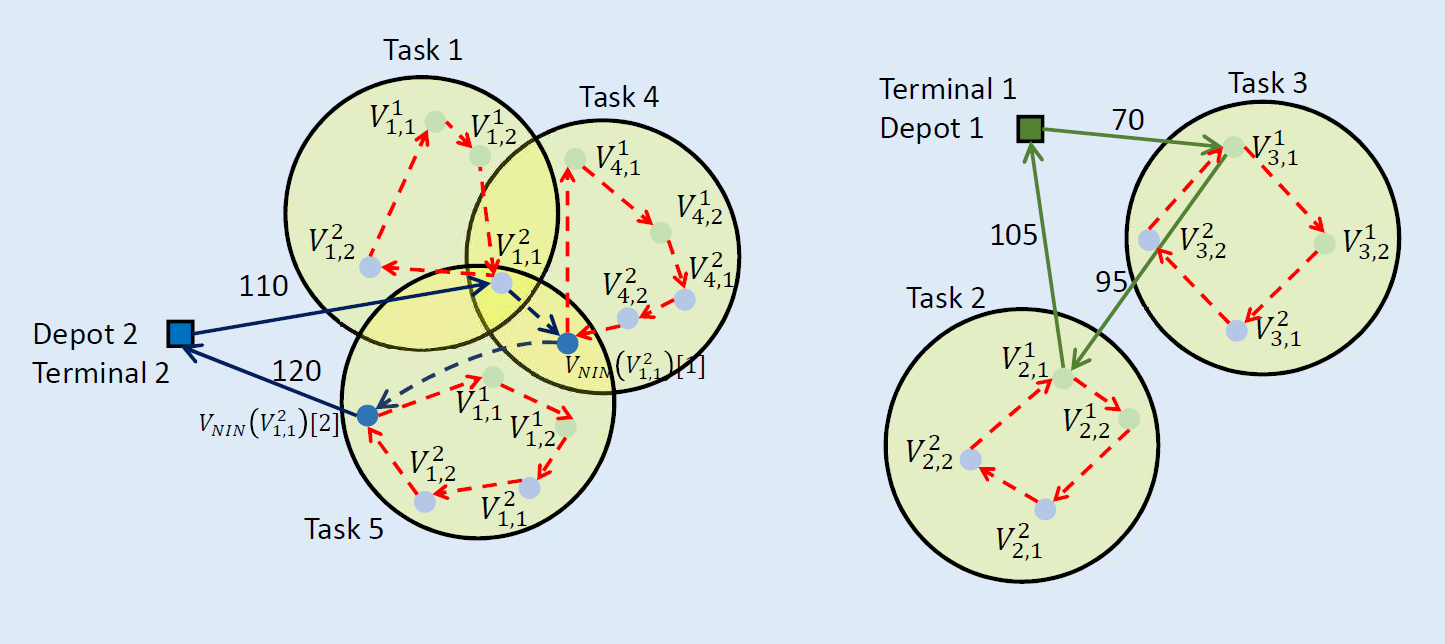}}
			\subfloat[Construction of a transformation from GHMDATSP to ATSP: Corresponding solution to the ATSP.]{\label{fig:trans8}\includegraphics[frame,width=.495\linewidth]{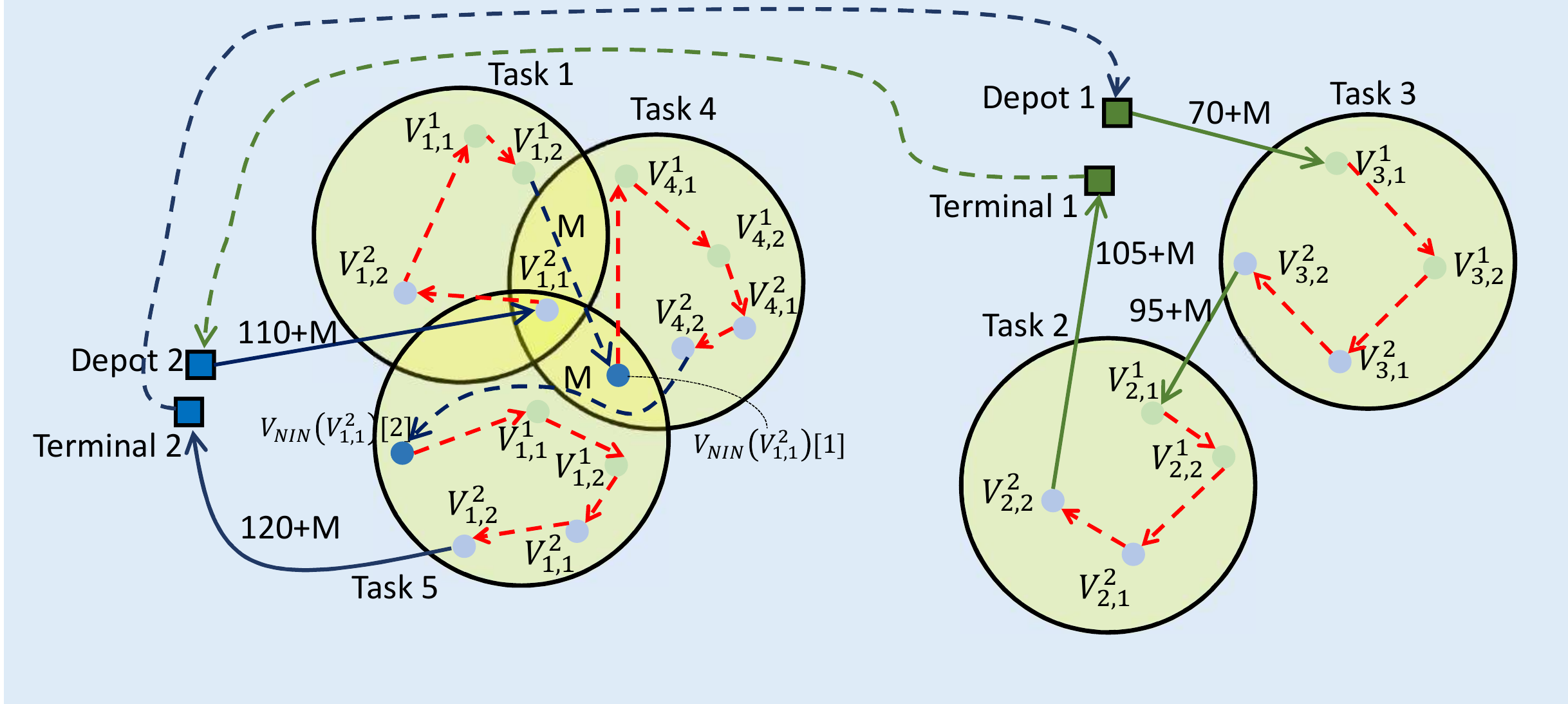}}
		\end{minipage}
		\caption{Procedure for constructing the ATSP solution from the GHMDATSP instance.}
		\label{fig:trans}
	\end{figure*}

	For illustrative purposes, consider a simple scenario where a set of two vehicles must visit 5 tasks.
	We assume that each task, depot, and terminal has two sample nodes, and moving costs are already calculated between any sample nodes.
	The schematic for this scenario is shown in Figures \ref{fig:trans1}, \ref{fig:trans2}, and \ref{fig:trans3}, and a feasible solution is shown in Figure \ref{fig:trans4}.
	In Figure \ref{fig:trans4}, vehicle 1 returns to the terminal after passing tasks 3 and 2, and vehicle 2 traverses the tasks in the order 1, 4, and 5 and then returns to the terminal.
	
	The objective in a given scenario is to find a set of tours whose vehicles have the least cost for traversing all mission areas.
	If we apply the necessarily intersecting node in the scenario, the overall moving cost can be reduced because vehicle 2 passes tasks 4 and 5 spontaneously just by visiting node $ V_{1,1}^2 $ in task 1.
	Therefore, a feasible solution can be obtained as shown in Figure \ref{fig:trans5} by applying the concept.
	To construct the problem with consideration of the necessarily intersecting node, virtual sample nodes for each sample node are generated and added to the corresponding task clusters.
	Referring to Figure \ref{fig:trans6}, in the case of node $ V_{1,1}^2 $, since it originally belongs to task 1 but naturally passes through tasks 4 and 5, the virtual nodes $ V_{NIN}(V_{1,1}^2)[1] $ and $ V_{NIN}(V_{1,1}^2)[2] $ are generated and assigned respectively.
	Then, the nodes from $ V_{1,1}^2 $ to $ V_{NIN}(V_{1,1}^2)[1] $, and from $ V_{NIN}(V_{1,1}^2)[1] $ to $ V_{NIN}(V_{1,1}^2)[2] $ are connected with zero-cost edges.
	The zero-cost edges are represented by dashed lines in Figure \ref{fig:trans6}.\footnotetext{For the sake of simplicity, the illustration only shows serial connections, but it is actually a bit more complicated, which differs from Isaacs or Jang's methods \cite{isaacs2013dubins, jang2016optimal}. The virtual nodes $ V_{nin}(s)[1], \cdots, V_{nin}(s)[p] $ are connected in both directions. See conditions 8 to 10 in Table \ref{table:rule} for details on edge generation related to the virtual nodes.}
	Finally, the starting point of the edge from node $ V_{1,1}^2 $ to the terminal is changed to $ V_{NIN}(V_{1,1}^2)[2] $ while the value of the cost is maintained.
	In this case, the tour of vehicle 2 visits only one task, but it can be assumed that tasks 4 and 5 are also visited via $ V_{NIN}(V_{1,1}^2)[1] $ and $ V_{NIN}(V_{1,1}^2)[2] $, respectively.
	
	To convert the problem into the form of the ATSP, zero-cost edges are created between the sample nodes and $ V_{NIN} $ nodes for each task, depot, and terminal cluster.
	For convenience, assuming that the nodes belonging to each task cluster are arranged according to a user-given rule, the zero-cost edges are generated in an ordered sequence to connect the nodes.
	In this study, the starting node (source node) of a zero-cost edge heading to node $ s $ is referred to as $ s^- $.
	In the ATSP, all nodes must be included in a tour by any arbitrary vehicle.
	Thinking simply with this proposition, the following idea can be formulated:
	Based on the feasible solution shown in Figures \ref{fig:trans6} and \ref{fig:trans7}, all the nodes are traversed along the zero-cost edges after entering one of the nodes in a task cluster, and then come back to the first-entered node and leave the cluster to visit the remaining nodes.
	However, because each node should be visited exactly once and the number of vehicles is assumed to be one in the ATSP, a slight modification to the above idea is necessary.
	For example, for sample nodes $ s_1 $ and $ s_2 $ which belong to different task clusters, the edge from $ s_1 $ to $ s_2 $ in the GHMDATSP instance should be changed because the starting node of the edge should be $ s_1^- $ rather than $ s_1 $ when transforming the instance into an ATSP.
	In addition, to replace the problem from multiple vehicles to a single vehicle, the sample nodes in the terminal cluster and the depot cluster of the next vehicle are connected with zero-cost edges for every vehicle.
	To solve the ATSP-formatted problem, a moderately large number $ M $ is added to each edge that connects the different clusters.
	The ATSP-formatted feasible solution following the above conditions is shown in Figure \ref{fig:trans8}.	
	After transforming the GHMDATSP instance to the form of an ATSP, the number of nodes in the ATSP graph equals the number of all the sample nodes in the GHMDATSP graph.
	The rule for adding edges when transforming the instance of the GHMDATSP into the ATSP is shown in Table \ref{table:rule}.

	\begin{table}[t]
		\centering
		\caption{Transformation rules for assigning weights to edges from the GHMDATSP to the ATSP.}
		\label{table:rule}
		\begin{adjustbox}{max width=\textwidth}
			\begin{tabular}{c|ll|c}
				\hthickline
				\textbf{\#} & \multicolumn{2}{c|}{\textbf{Edge type}} & \textbf{Cost}                   \\\hthickline
				
				\multirow{2}{*}{$ 1 $} & $ s_1^- \in V^k_{(n+1)} \rightarrow s_2 \in V^k_{(n+2)} $ & $ \forall k\in K $ & \multirow{2}{*}{$ M $} \\
				&\multicolumn{2}{l|}{sample nodes in depot(k) $ \rightarrow $ sample nodes in terminal(k)} & \\\hline
				
				\multirow{2}{*}{$ 2 $} & $ s_1^- \in V^k_{(n+2)} \rightarrow s_2 \in C_{k+1,n+1} $ & $ \forall k\in K\notin \{m\}$,\quad $ m $ : last vehicle & \multirow{2}{*}{M} \\
				&\multicolumn{2}{l|}{sample nodes in terminal(k) $ \rightarrow $ sample nodes in depot(k+1)} & \\\hline
				
				\multirow{2}{*}{$ 3 $} & $ s_1^- \in V^m_{(n+2)} \rightarrow s_2 \in C_{1,n+1} $ &           & \multirow{2}{*}{M} \\
				&\multicolumn{2}{l|}{terminal$ (m) \rightarrow $ depot$ (1) $ } & \\\hline
				
				\multirow{2}{*}{$ 4 $} & $ s_1^- \in V^k_{(n+1)} \rightarrow s_2 \in V^k_t $ & $ \forall k \in K, t \in T $ & \multirow{2}{*}{cost$ (s_1,s_2) + M $} \\
				&\multicolumn{2}{l|}{previous node of $ s_1 $ in depot $ (k) \rightarrow $ sample nodes in cluster $ V^k_t $} & \\\hline
				
				\multirow{2}{*}{$ 5 $} & $ s_1^- \in V^k_t \rightarrow s_2 \in V^k_{(n+2)} $ & $ \forall k \in K, t \in T $ & \multirow{2}{*}{cost$ (s_1,s_2) + M $} \\
				&\multicolumn{2}{l|}{previous node of $ s_1 $ in cluster $ V^k_t \rightarrow $ sample nodes in terminal $ (k) $} & \\\hline
				
				\multirow{2}{*}{$ 6 $} & $ s_1^- \rightarrow s_1 \in V^k_t $ & $ k\in K, t\in T \cup D $ & \\
				&\multicolumn{2}{l|}{every nearby nodes for each task cluster, in an ascending order} & \multirow{-2}{*}{0} \\\hline
				
				\multirow{2}{*}{$ 7 $} & $ s_1^- \in V^k_{t_1} \rightarrow s_2 \in V^k_{t_2} $ &  $ k \in K, t_1, t_2 \in T, t_1 \neq t_2 $ &  \\
				&\multicolumn{2}{l|}{previous node of $ s_1 $ in $ V^k_{t_1} \rightarrow $ sample nodes in cluster $ V^k_{t_2} $} &  \multirow{-2}{*}{cost$ (s_1,s_2) + M $}  \\\hline
				
				\multicolumn{4}{c}{}\\
				\multicolumn{4}{c}{If $ s_1 \in V^k_{t_1} $ has $ p $ NINs, ~~~ $ k \in K, ~~t_1, t_2\in T, t_1 \neq t_2, ~~i_1, i_2 \in \{1,\cdots,p\}, i_1 \neq i_2 $} \\\hline
				\multirow{2}{*}{$ 8 $} & $ s_1^- \rightarrow V^{NIN}_{s_1}[i] $ & & \\
				&\multicolumn{2}{l|}{previous node of $ s_1 $ in $ V^k_{t_1} \rightarrow $ NIN nodes of $ s_1 $} & \multirow{-2}{*}{M} \\\hline
				
				\multirow{2}{*}{$ 9 $} & $ V^{NIN}_{s_1}[i_1]^- \rightarrow V^{NIN}_{s_1}[i_2] $ &  &  \\
				&\multicolumn{2}{l|}{edge between NIN nodes of $ s_1 $} & \multirow{-2}{*}{M}                  \\\hline
				
				\multirow{2}{*}{$ 10 $} & $ V^{NIN}_{s_1}[i]^- \rightarrow s_2 \in V^k_{t_2} $ &  &  \\
				&\multicolumn{2}{l|}{previous node of every NIN nodes of $ s_1 \rightarrow $ sample nodes in cluster $ V^k_t $} & \multirow{-2}{*}{cost$ (s_1,s_2) + M $}\\\hthickline      
			\end{tabular}
		\end{adjustbox}
	\end{table}
	
	\subsection{Solving the ATSP and Inversely Transforming it into the GHMDATSP}\label{subsec:proc3}
	
	After converting the instance into the ATSP, we solve the instance.
	To get a high-performance solution in a short time, we exploited the well known state-of-the-art TSP solver which is based on the Lin-Kernighan-Helsgaun (LKH) heuristic \cite{helsgaun2015solving}.
	The Lin-Kernighan (LK) algorithm is the algorithm for solving the symmetric TSP by swapping pairs of sub-tours to make a new tour based on a k-opt which is a generalization of 2-opt and 3-opt.
	The LKH heuristic is a modified version of the LK algorithm which is highly effective in the sense of a computational complexity and the quality of the solution.	
	The solution of the ATSP obtained through the solver is the Hamiltonian path starting from the depot of the first vehicle, going through all the sample nodes, the virtual sample nodes, and the terminal point of the last vehicle, and then into the depot of the first vehicle.
	
	Then the obtained solution is inversely transformed to the form of the GHMDATSP.
	The cost of an edge forming the solution of the ATSP is 0, $ M $, or `cost$(s_1, s_2) + M $', and the meaningful edges are those whose cost is not zero or $ M $.
	In addition, since the destination part of the edges generated in the ATSP transformation process corresponds to the actual nodes, the nonzero-cost edges' destination nodes in the ATSP solution are the actual visiting nodes in the GHMDATSP.
	Finally, the inverse transformation process ends by first disconnecting the edges from the terminal cluster to the depot cluster to separate the solution and second assigning it to each vehicle.
	
	The rest of this section describes the following theorem.\footnotetext[1]{In this section, the form of proof is referenced from \cite{oberlin2009transformation}.}
	
	\begin{thm}\label{thm:opt}
		Given the optimal tour $ tour_{ATSP}^* $ on the transformed ATSP graph, a set of tours $ tour^{1*}, \cdots, tour^{m*} $ can be constructed which is the optimal for the GHMDATSP.
	\end{thm}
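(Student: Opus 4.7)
The plan is to establish a cost-preserving correspondence between feasible tours of the GHMDATSP and feasible Hamiltonian tours of the transformed ATSP, then argue that optimality is preserved under this correspondence. The first step is to show that the constant $M$ in Table \ref{table:rule} is chosen large enough (larger than the total cost of any feasible GHMDATSP tour) so that any optimal ATSP tour must use exactly the minimum possible number of $M$-weighted edges. Because every cluster (each $V^k_t$ as well as each block of NIN virtual nodes) is internally wired with 0-cost edges of the form $s^- \to s$ arranged as a directed cycle by condition 6, the Hamiltonian ATSP tour can enter a cluster once via an $M$-weighted edge, traverse the 0-cost cycle to visit every node, and leave once via another $M$-weighted edge. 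Counting clusters gives a fixed lower bound on the total $M$-penalty incurred, attained precisely by tours that enter each cluster exactly once.

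Next I would construct the forward map from GHMDATSP tours to ATSP tours. Given feasible per-vehicle tours, chain them into a single Hamiltonian cycle using the zero-cost-plus-$M$ linking edges of types 1--3 (depot$\to$terminal of the same vehicle, terminal$(k)\to$depot$(k{+}1)$, and terminal$(m)\to$depot$(1)$). Within each cluster that is entered, walk the internal 0-cost cycle from the node $s^-$ preceding the real entry node $s$ back around to $s$ before taking the outgoing $M$-weighted edge. The resulting ATSP tour is Hamiltonian and its total cost equals the sum of the per-vehicle Dubins costs plus a constant $N M$, where $N$ depends only on the instance. Conversely, given $tour_{ATSP}^*$, I would extract the destinations of its non-zero-cost edges, discard the purely-$M$ linking edges that bridge consecutive vehicles, and group the remainder into a tour per vehicle. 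A routine check shows that the extracted per-vehicle sequences satisfy constraints \eqref{eq:vtassign2}--\eqref{eq:subelim}: each task cluster contributes exactly one real entry (hence one visit), depots and terminals are each visited once per vehicle, in- and out-degrees match, and the big-$M$ argument rules out subtours.

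The optimality argument then follows by a standard two-way inequality. If $tour^{1*},\ldots,tour^{m*}$ is the image of $tour_{ATSP}^*$ under the inverse map, its total cost is $\mathrm{cost}(tour_{ATSP}^*) - NM$. Any strictly better GHMDATSP tour would, under the forward map, yield a Hamiltonian ATSP tour of cost strictly less than $\mathrm{cost}(tour_{ATSP}^*)$, contradicting the optimality of $tour_{ATSP}^*$. Special care is needed for the NIN virtual nodes: when a sample node $s$ is chosen, its associated virtual nodes $V^{NIN}(s)[1],\ldots,V^{NIN}(s)[p]$ should be visited through the bidirectional 0-cost cycle local to $s$ so that the tasks covered spontaneously by $s$ are correctly marked as visited in the GHMDATSP interpretation, and conditions 8--10 must be shown to admit a unique natural insertion of these virtual nodes into the ATSP tour without affecting the real cost.

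The main obstacle is the accounting for the NIN virtual nodes. Unlike the vanilla Noon--Bean transformation, these virtual nodes are attached to a specific sample node $s$ yet represent coverage of distinct tasks, and they must still appear in the ATSP Hamiltonian cycle even when $s$ is not the chosen entry of its own cluster. I expect the crux to be a counting lemma showing that (i) the minimum number of $M$-weighted edges in any Hamiltonian ATSP tour equals the number of clusters plus the number of inter-vehicle links, (ii) this minimum is achievable, and (iii) every tour attaining this minimum decomposes cleanly into per-vehicle GHMDATSP tours under the inverse map. Once this lemma is in place, the theorem follows by cost bookkeeping; without it, the correspondence between the two problems could fail to be a bijection on the optimal level.
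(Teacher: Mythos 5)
Your proposal is correct and follows essentially the same route as the paper: a big-$M$ argument forcing the optimal ATSP tour to enter each cluster exactly once and sweep its zero-cost internal chain (the paper's Lemma~\ref{lem:1}), an inverse extraction into per-vehicle tours (Remark~\ref{remark}) and an explicit forward construction (Lemma~\ref{lem:3}), combined through the constant-offset identity $\textrm{Cost}(tour_{ATSP}) = \sum_{k}\textrm{Cost}^{k} + M(n+2m)$ and the resulting two-way inequality (Lemma~\ref{lem:2} and the theorem's proof). The only cosmetic slip is your count of $M$-edges as ``clusters plus inter-vehicle links,'' which double-counts the depot entries -- the fixed penalty is exactly the number of clusters, $n+2m$ -- and this does not affect the argument since the offset is instance-constant either way.
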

	
	Without loss of generality, $ tour_{ATSP}^* $ starts from the first sample node of vehicle 1's depot cluster, $ V^1_{(n+1),1} $ and each vehicle starts from the first sample node of its depot cluster, i.e. $ V^k_{(n+1),1} $ for vehicle $ k $.
	For the proof of Theorem \ref{thm:opt}, a list of facts about $ tour_{ATSP}^* $ is given in the following lemma.
	
	\begin{lemma}\label{lem:1}
		The optimal tour $ tour_{ATSP}^* $ on the transformed ATSP graph satisfies the following conditions.
		\begin{itemize}
			\item[a.] For each cluster $ V_t ~~ \forall t\in T\cup D $, both the in-degree and out-degree are 1. In other words, there is only a single edge each for entering and leaving the cluster.
			\item[b.] Let $ V^k_{t,i} $ be the first visited sample node in cluster $ V_t $. After $ V^k_{t,i} $, $ tour_{ATSP}^* $ visits all the remaining nodes in cluster $ V_t $ before leaving it.
			\item[c.] For vehicle $ k $, there is no sample node of any other vehicle's depot and terminal cluster between the first sample node of cluster $ V^k_{n+1} $ and the last sample node of cluster $ V^k_{n+2} $.
		\end{itemize}
	\end{lemma}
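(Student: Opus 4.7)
My plan rests on the constant $M$ of Table \ref{table:rule}: by choosing $M$ larger than the sum of all finite (non-$M$) edge-cost contributions in the transformed graph, the ATSP objective becomes dominated by how many weight-$\geq M$ edges the tour uses. All edges in Table \ref{table:rule} except the intra-cluster Rule~6 zero-cost edges have weight at least $M$. I fix such an $M$ once and for all and treat the three parts in turn.

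For part (a), call the number of Hamiltonian-cycle edges leaving (equivalently, entering) a cluster $V_t$ its \emph{entry count}, which is at least $1$ for every cluster since each node is visited. Let $C=|T|+2m$ be the number of clusters, so any feasible tour has total entry count $\geq C$ and hence uses $\geq C$ inter-cluster edges, paying $\geq CM$. I then exhibit one feasible tour whose entry count is exactly $1$ for every cluster, built by starting at $V^1_{(n+1),1}$, traversing the Rule~6 zero-cost cycle through all nodes of depot$(1)$, crossing into a task cluster of vehicle $1$ via a Rule~4 edge, continuing through all task clusters of vehicle $1$ via Rule~6 cycles and Rule~7 transitions (also respecting NIN virtual-node connections), entering terminal$(1)$ via a Rule~5 edge, and then chaining terminal$(k)\to$depot$(k+1)$ by Rule~2 for $k<m$ and closing with a Rule~3 edge from terminal$(m)$ to depot$(1)$. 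This canonical tour has total cost $CM+F_0$ with $F_0$ finite and independent of $M$. Any tour violating (a) has some entry count $\geq 2$ and hence at least $C+1$ inter-cluster edges, so total cost at least $(C+1)M>CM+F_0$ by the choice of $M$. Optimality of $tour^*_{\mathrm{ATSP}}$ therefore forces (a).

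Part (b) is immediate from (a): since $V_t$ is entered once and exited once, the Hamiltonian cycle's sub-path between entry and exit must contain every node of $V_t$ exactly once; calling the entry node $V^k_{t,i}$, the remaining $|V_t|-1$ nodes appear consecutively after it and before the exit. For part (c), I inspect Table \ref{table:rule}: the only edges joining clusters of distinct vehicles are the terminal-to-depot edges of Rules~2 and~3, while Rules~4, 5, 7, 8, 9, 10 all keep both endpoints inside a single vehicle's subgraph (its tasks, NIN virtual nodes, depot, and terminal). Combined with (a), this traps the tour inside vehicle $k$'s subgraph from the moment it enters depot$(k)$ until it exits terminal$(k)$. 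Induction starting at $V^1_{(n+1),1}$ then partitions the tour into contiguous vehicle blocks in the order $1,2,\dots,m$, which is exactly (c).

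The main obstacle I anticipate is making part (a) fully rigorous, in particular verifying that the Rule~8--10 edges involving NIN virtual nodes do not introduce hidden cross-vehicle transitions, and that the canonical tour described above is genuinely feasible regardless of how many sample and NIN virtual nodes each cluster contains. Once that is confirmed, the counting is purely combinatorial, and (b) and (c) fall out as short corollaries of (a) together with the edge-rule inspection.
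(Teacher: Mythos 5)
Your proposal is correct and follows essentially the same route as the paper's proof: the large constant $M$ forces the optimal ATSP tour to use the minimum possible number of inter-cluster edges (one entry and one exit per cluster, traversing the remaining nodes via the Rule~6 zero-cost edges), which gives claims \textit{a.} and \textit{b.}, while claim \textit{c.} follows from inspecting which rules in Table \ref{table:rule} can connect different vehicles' depot and terminal clusters. Your version is in fact somewhat more explicit than the paper's (you fix $M$ quantitatively, count the $|T|+2m$ mandatory inter-cluster edges, and exhibit a canonical feasible tour attaining that count), but the underlying argument is the same.
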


	\begin{proof}
		The cost of any edge edge between different clusters has an additional cost $ M $ on it. 
		Suppose that $ V^k_{t,i} $ is chosen among the nodes in cluster $ V_t $ as a sink node of the edge toward cluster $ V_t $.
		From condition 6 in Table \ref{table:rule}, $ tour_{ATSP}^* $ visits every remaining sample node along zero-cost edges.
		After reaching all of the nodes, $ tour_{ATSP}^* $ leaves cluster $ V_t $.
		If $ tour_{ATSP}^* $ leaves cluster $ V_t $ before visiting every node in it, then the remaining nodes in $ V_t $ can only be visited by one of the edges towards $ V_t $ with the cost at least greater than or equal to $ M $.
		Since $ M $ is a large number, $ tour_{ATSP}^* $ should contain the least number of edges connecting different clusters.
		Therefore claim \textit{a.} and \textit{b.} hold.
		
		The sample nodes in vehicle $ k $'s depot cluster do not have any edge towards any other vehicle's depot and terminal clusters. 
		The sample nodes in vehicle $ k $'s terminal cluster have edges only towards the next vehicle's depot cluster.
		This holds for every vehicle; hence claim \textit{c.} is true.
	\end{proof}
	
	\begin{remark}\label{remark}
		From claim \textit{c}. in Lemma \ref{lem:1}, one of the edges between vehicle $ k $'s terminal cluster and the next vehicle's depot cluster is selected.
		By changing each selected edge's sink node (or destination node) to the first sample node of vehicle $ k $'s depot cluster in $ tour_{ATSP}^* $, we obtain $ m $ disconnected directed tours denoted as $ tour_{ATSP}^{1*}, \cdots, tour_{ATSP}^{m*} $.
		Furthermore, $ tour^k $ is a tour for vehicle $ k $ in the form of a GHMDATSP which is inversely transformed from $ tour_{ATSP}^{k*} $.
	\end{remark}
	
	\begin{lemma}\label{lem:2}
		Given the optimal tour $ tour_{ATSP}^* $ for the transformed graph, the cost of $ tour_{ATSP}^* $, denoted as $ \textrm{Cost}(tour_{ATSP}^*) $, equals  $ \sum_{k=1}^{m} \textrm{Cost}^k + M(n+2m) $ where $ \sum_{k=1}^{m} \textrm{Cost}^k $ is the cost sum of every vehicle.
	\end{lemma}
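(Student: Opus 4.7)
The plan is to decompose $\textrm{Cost}(tour_{ATSP}^*)$ into three pieces: the intra-cluster zero-cost contributions (which vanish), the per-edge $M$-surcharges on inter-cluster edges (which aggregate to $M(n+2m)$), and the residual Dubins terms (which match $\sum_{k=1}^m \textrm{Cost}^k$).

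First, I would count the inter-cluster edges of $tour_{ATSP}^*$. By claim \textit{a.} of Lemma \ref{lem:1}, every cluster has both in-degree and out-degree exactly $1$, so the number of edges whose endpoints lie in different clusters equals the total number of clusters. The graph contains $n$ task clusters together with $2m$ depot/terminal clusters, giving exactly $n+2m$ inter-cluster edges. All remaining tour edges are intra-cluster, and by rule 6 of Table \ref{table:rule} they have cost $0$ and contribute nothing.

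Next, I would inspect rules 1--5 and 7--10 of Table \ref{table:rule}, which are the only rules that create inter-cluster edges, and note that each such rule bakes an additive $M$ into the edge weight. Summing this surcharge over the $n+2m$ inter-cluster edges of $tour_{ATSP}^*$ produces the term $M(n+2m)$. What remains is the sum of the Dubins parts $c_{s_1,s_2}$ that appear only in rules 4, 5, 7, and 10. Invoking the splitting described in Remark \ref{remark}, I would cut the $m$ bridge edges produced by rules 1--3 (which are purely $M$) to obtain the $m$ per-vehicle tours $tour_{ATSP}^{k*}$, and then read off the inverse transformation: rule 4 becomes a depot-to-first-task move, rule 5 a last-task-to-terminal move, rule 7 a task-to-task move, and rule 10 a move out of a sample node whose incoming edge was a virtual-NIN edge. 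In every case the corresponding GHMDATSP transition is from $s_1$ to $s_2$ with Dubins cost $c_{s_1,s_2}$, which is precisely the entry contributed to $\textrm{Cost}^k$ by that tour arc. Summing over arcs and then over vehicles yields $\sum_{k=1}^m \textrm{Cost}^k$, and combining with the $M(n+2m)$ surcharge gives the claimed identity.

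The main obstacle is the bookkeeping around rule 10. There the ATSP edge physically starts at a virtual NIN node $V^{NIN}(s_1)[i]^-$ but its cost is keyed to the original sample node $s_1$, not to the virtual source. I would therefore need to argue, using claim \textit{b.} of Lemma \ref{lem:1} together with the NIN construction of Section \ref{subsec:proc2}, that after visiting all the virtual NIN nodes of $s_1$ in sequence, the tour necessarily exits via an edge whose $c_{s_1,s_2}$ weight is the genuine Dubins cost of the vehicle's physical departure from $s_1$ toward the next cluster. Once this correspondence is established, the three-part decomposition above closes the proof.
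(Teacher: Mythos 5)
Your proposal is correct, and it proves the same identity by the same basic cost bookkeeping, but you organize the count differently from the paper. The paper argues per vehicle: writing vehicle $k$'s segment of $ tour_{ATSP}^* $ as depot $\to$ tasks $\to$ terminal, it derives $ \textrm{Cost}(tour_{ATSP}^{k*}) = \textrm{Cost}^k + M(n^k+2) $ in \eqref{eq:costRel1} (with an idle vehicle contributing $2M$), then sums over $k$ using the fact that each task cluster is covered by exactly one vehicle, so $\sum_k n^k = n$. You instead count the $M$-surcharges globally: claim \textit{a.} of Lemma \ref{lem:1} forces exactly one entering edge per cluster, hence exactly $n+2m$ inter-cluster edges, each carrying $+M$ by Table \ref{table:rule}, while the intra-cluster rule-6 edges are free; the Dubins residues of rules 4, 5, 7, and 10 then reassemble, via Remark \ref{remark} and the inverse transformation, into $\sum_{k=1}^m \textrm{Cost}^k$. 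The two routes are equivalent, but yours buys a little extra: it covers idle vehicles automatically, and it treats the NIN virtual nodes transparently (rule-8 and rule-9 edges are pure-$M$ cluster entries with no Dubins part, and the rule-10 exit is keyed to the physical node $s_1$), whereas the paper's per-vehicle expansion tacitly treats every visited cluster as if it were entered through an ordinary $c(\cdot,\cdot)+M$ edge. The keying issue you flag for rule 10 in fact applies uniformly to rules 4, 5, and 7 as well, and is resolved exactly as you sketch: the only intra-cluster edges are the rule-6 zero-cost cycle, so by claim \textit{b.} a tour entering a cluster at $s$ must leave from $s^-$, and every outgoing inter-cluster edge from $s^-$ carries cost $c(s,\cdot)+M$ by construction, so the chained Dubins parts are precisely the physical tour cost; the paper's own proof does not spell this out either, so your version is, if anything, slightly more careful.
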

		
	\begin{proof}
		Let $ n^k $ denote the number of distinct task clusters visited by vehicle $ k $.
		If $ n^k > 0 $, $ tour^k $ can be represented as $ \left\{ V^k_{(n+1), i_0}, V_{t_1, i_1}, V_{t_2, i_2}, \cdots, V_{t_{n^k}, i_{n^k}}, V^k_{(n+2),i_{n^k+1}} \right\} $ which starts from the depot and ends at the terminal.
		The cost relationship between $ tour_{ATSP}^{k*} $ and $ tour^k $ is as follows.
		\begin{align}\label{eq:costRel1}
			\textrm{Cost}\left( tour_{ATSP}^{k*} \right) = & \left( \underbrace{c\left(V^k_{(n+1), i_0}, V^k_{t_1, i_1}\right) + M}_\text{from depot to task} \right)
												+ \left( \underbrace{\sum_{j=1}^{n^k-1} c(V^k_{t_{j},i_{j}}, V^k_{t_{j+1},i_{j+1}}) + M}_\text{from task to task} \right) \nonumber \\
		   						   & + \left( \underbrace{ c\left(V^k_{t_{n^k},i_{n^k}}, V^k_{(n+2),i_{n^k+1}}\right) + M}_\text{from task to terminal} \right)
		   						   	 + \underbrace{M}_\text{from terminal to depot} \nonumber \\
								 = & \textrm{Cost}\left( tour^k \right) + M\left( n^k + 2 \right) \nonumber \\
								 = & \textrm{Cost}^k + M\left( n^k + 2 \right)
		\end{align}		
		Otherwise, if $ n^k $ is 0, $ tour^k $ consists of two directed edges $ \left(V^k_{(n+1),i_1}, V^k_{(n+2),i_2}\right) $ and $ \left(V^k_{(n+2),i_2}, V^k_{(n+1),i_1}\right) $ where each cost is $ M $, respectively.
		
		As the optimal tour $ tour_{ATSP}^* $ visits each of the sample nodes exactly once, any task $ t \in T $ belongs to exactly one of the vehicles.
		Therefore a set of tours $ \{tour^1, \cdots, tour^m\} $ is a feasible solution to the GHMDATSP.
		The cost relationship between the solution of ATSP and GHMDATSP as follows.
		\begin{align}\label{eq:costRel2}
			\textrm{Cost}\left( tour_{ATSP}^{*} \right) = & \sum_{k=1}^{m} \textrm{Cost}\left( tour_{ATSP}^{k*} \right) \nonumber\\
						= & \sum_{k=1}^{m} \begin{bmatrix}
							\left( c\left(V^k_{(n+1), i_0}, V^k_{t_1, i_1}\right) + M\right)+ \left( \sum_{j=1}^{n^k-1} c(V^k_{t_{j},i_{j}}, V^k_{t_{j+1},i_{j+1}}) + M \right) \\
							+ \left( c\left(V^k_{t_{n^k},i_{n^k}}, V^k_{(n+2),i_{n^k+1}}\right) + M \right) + M
						\end{bmatrix} \nonumber\\
						= & \sum_{k=1}^{m} \textrm{Cost}^k + M(n+2m)
		\end{align}	
	\end{proof}

	\begin{lemma}\label{lem:3}
		Given a set of optimal tours $ \{tour^{1*}, \cdots, tour^{m*}\} $ for the GHMDATSP, a feasible solution $ tour_{ATSP} $ for the transformed graph can be constructed such that $ \sum_{k=1}^{m} \textrm{Cost}^{k*} $ equals $ \textrm{Cost}(tour_{ATSP}) - M(n+2m) $.
	\end{lemma}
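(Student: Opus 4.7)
The plan is to invert the construction in Lemma~\ref{lem:2}: starting from the optimal GHMDATSP tours, I will explicitly assemble an ATSP tour by splicing in zero-cost edges within each cluster and then stitching the vehicle tours together through the terminal-to-depot edges. First, for each vehicle $k$ with $n^k>0$, write $tour^{k*}=\left\{V^k_{(n+1),i_0},V^k_{t_1,i_1},\dots,V^k_{t_{n^k},i_{n^k}},V^k_{(n+2),i_{n^k+1}}\right\}$. For every cluster $V_t$ that is entered at some node $V^k_{t,i_j}$, I traverse the remaining nodes of that cluster in the prescribed cyclic order along the zero-cost edges of type~6 in Table~\ref{table:rule}, ending at the predecessor of $V^k_{t,i_j}$, which by convention is the node $V^k_{t,i_j}{}^-$. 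The outgoing edge of the GHMDATSP, originally $(V^k_{t_j,i_j},V^k_{t_{j+1},i_{j+1}})$ with cost $c(\cdot,\cdot)$, is then replaced by the ATSP edge from $V^k_{t_j,i_j}{}^-$ to $V^k_{t_{j+1},i_{j+1}}$, which by rule~7 carries cost $c(V^k_{t_j,i_j},V^k_{t_{j+1},i_{j+1}})+M$. The depot-to-first-task and last-task-to-terminal edges are handled identically using rules~4 and~5.

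Second, I handle the necessarily intersecting nodes. Whenever a chosen sample node $s$ in $tour^{k*}$ has associated virtual nodes $V^{NIN}(s)[1],\dots,V^{NIN}(s)[p]$ in clusters that are otherwise unvisited by any vehicle, I splice them into the ATSP tour immediately after $s$ via the edges of types~8 and~9; each such splice visits the virtual node and then the zero-cost intra-cluster path required by rule~6, and the outgoing edge of $s$ is redirected to originate from the predecessor of the last NIN node using rule~10. This guarantees every cluster corresponding to a task that is ``covered'' either directly or through NIN is traversed completely.

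Third, to go from $m$ individual tours to one ATSP cycle, I concatenate them in the vehicle index order: from the last sample node of vehicle $k$'s terminal cluster (reached by the intra-cluster zero-cost edges after entering it) I take the rule-2 edge of cost $M$ into vehicle $k{+}1$'s depot cluster, and close the cycle with a single rule-3 edge from vehicle $m$'s terminal back to vehicle~$1$'s depot. For any vehicle with $n^k=0$, the subtour collapses to the pair of edges $(V^k_{(n+1),i_1},V^k_{(n+2),i_2})$ and the corresponding rule-2 (or rule-3) link, each costing $M$, consistent with the degenerate case in Lemma~\ref{lem:2}. By construction every sample node and virtual node appears exactly once and degree conditions are satisfied, so $tour_{ATSP}$ is a feasible ATSP tour.

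Finally, the cost accounting mirrors \eqref{eq:costRel1}--\eqref{eq:costRel2}: each of the $n$ task-cluster entries contributes one extra $M$, and each of the $2m$ depot/terminal transitions contributes one extra $M$, for a total overhead of $M(n+2m)$; the remaining summands reproduce exactly $\sum_{k=1}^{m}\textrm{Cost}^{k*}$. Hence $\textrm{Cost}(tour_{ATSP})=\sum_{k=1}^{m}\textrm{Cost}^{k*}+M(n+2m)$, which is the claimed identity. The main obstacle I expect is the bookkeeping for the NIN splices: I must verify that the $s^-$ redirection and the type~8--10 edges together produce a single Hamiltonian cycle with no residual unvisited virtual nodes, and that the cost of the unique non-zero, non-$M$ outgoing edge from the NIN chain is exactly $c(s,s_2)+M$ as required by rule~10, so that no spurious $M$'s are introduced beyond the $n+2m$ cluster transitions already counted.
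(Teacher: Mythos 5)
Your construction is essentially the paper's own proof: it inverts Lemma~\ref{lem:2} by entering each visited cluster once, sweeping its remaining nodes along the zero-cost rule-6 edges, exiting from the predecessor node with a cost-$(c+M)$ edge (rules 4, 5, 7), stitching the per-vehicle tours with $M$-cost terminal-to-depot edges, and counting exactly $n+2m$ added $M$'s. The only difference is that you make the NIN splicing via rules 8--10 explicit (the paper leaves the virtual nodes to be absorbed into the intra-cluster zero-cost sweeps), and your cost bookkeeping matches the paper's.
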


	\begin{proof}
		If vehicle $ k $ does not visit any sample node, let $ tour_{ATSP}^k $ consist of every sample node in cluster $ V^k_{n+1} $ and $ V^k_{n+2} $ such as $ \left\{ V^k_{(n+1),1}, \cdots, V^k_{(n+1),end}, V^k_{(n+2),1},\right.$  $\left. \cdots, V^k_{(n+2),end} \right\} $ where $ V^k_{(n+1),end} $ and $ V^k_{(n+2),end} $ are the last sample node in each cluster.
		Otherwise, if vehicle $ k $ visits $ n^k $ sample nodes such that $ tour^k $ is  $ \left\{ V^k_{(n+1), i_0}, V_{t_1, i_1}, V_{t_2, i_2}, \cdots, V_{t_{n^k}, i_{n^k}}, V^k_{(n+2),i_{n^k+1}} \right\} $, construct $ tour_{ATSP}^k $ as follows:
		\begin{enumerate}
			\item Set $ tour_{ATSP}^k $ as an empty set.
			\item Add $ V^k_{(n+1), i_0} $ to $ tour_{ATSP}^k $.
			\item For every node in the depot cluster except $ V^k_{(n+1), i_0} $, add and connect to the last element of $ tour_{ATSP}^k $ with a zero-cost edge.
			\item Add $ V_{t_1, i_1} $ and connect to the last element of $ tour_{ATSP}^k $ with cost \\$ c\left( V^k_{(n+1), i_0}, V_{t_1, i_1} \right) + M $.
			\item For $ j $ from 1 to $ n^k-1 $  \begin{enumerate}
				\item For every node in cluster $ V_{t_{j}} $ except $ V_{t_{i},i_{j}} $, add and connect to the last element of $ tour_{ATSP}^k $ with a zero-cost edge.
				\item Add $ V_{t_{i+1},i_{j+1}} $ and connect to the last element of $ tour_{ATSP}^k $ with cost $ c\left( V_{t_{i},i_{j}}, V_{t_{i+1},i_{j+1}} \right) + M $.
			\end{enumerate}
			\item For every node in cluster $ V_{t_{n^k}} $ except $ V_{t_{n^k}, i_{n^k}} $, add and connect to the last element of $ tour_{ATSP}^k $ with a zero-cost edge.
			\item Add $ V^k_{(n+2),i_{n^k+1}} $ and connect to the last element of $ tour_{ATSP}^k $ with cost $ c\left( V_{t_{n^k}, i_{n^k}}, V^k_{(n+2),i_{n^k+1}} \right) + M $.
			\item For every node in the terminal cluster, add and connect to the last element of $ tour_{ATSP}^k $ with a zero-cost edge starting from $ V^k_{(n+2),i_{n^k+1}} $.
		\end{enumerate}
		Then, $ tour_{ATSP} $ consists of $ tour_{ATSP}^1 $ to $ tour_{ATSP}^m $, and edges with cost $ M $ are added between each vehicle's ATSP tour which represent edges from terminal cluster to the next vehicle's depot cluster.
		By using the same arguments of Lemma \ref{lem:2}, $ \sum_{k=1}^{m} \textrm{Cost}^{k*} = \textrm{Cost}(tour_{ATSP}) - M(n+2m) $.
	\end{proof}

	\begin{proof}[Proof of \textbf{Theorem \ref{thm:opt}}]
		Let the set of GHMDATSP tours, $ \{tour^1, \cdots, tour^m\} $, be constructed from the optimal solution of transformed graph, $ tour_{ATSP}^* $, as in Remark \ref{remark}.
		\begin{align}\label{eq:costRel3}
			\sum_{k=1}^{m} \textrm{Cost}^k = & \textrm{Cost}\left( tour_{ATSP}^{*} \right) - M(n+2m) & (\textrm{Lemma \ref{lem:2}}) \nonumber\\
										\leq & \textrm{Cost}\left( tour_{ATSP} \right) - M(n+2m) & \nonumber\\
										   = & \sum_{k=1}^{m} \textrm{Cost}^{k*} & (\textrm{Lemma \ref{lem:3}})
		\end{align}	
		With the relationship above, the set of tours, $ \{tour^1, \cdots, tour^m\} $, is the optimal for the GHMDATSP.
	\end{proof}

	\subsection{Path Refinement}\label{subsec:proc4}
	
	\begin{algorithm*}[]
		\caption{Path Refinement}
		\small
		\begin{algorithmic}[1]
			\Procedure{PathRefinement (\textbf{\textit{path}})}{}
			\State $ unvisited = \{1,\cdots,n\}$
			\For{$ k := 1 $ to $ m $}
			\State $ visitOrder^k \leftarrow \emptyset $, $ visitState^k \leftarrow \emptyset $
			\If {$ path^k = \emptyset $}
			\State continue \textbf{for} loop
			\EndIf			
			
			\Algphase{Phase 1 - Find visiting order and states of given path}
			\State $ visitOrder^k \leftarrow $ \{n+1\} \COMMENT {Save depot information}
			\State $ visitState^k \leftarrow \{path^k[1]\} $
			\ForEach {state $ \textbf{s} $ in $ path^k $} \COMMENT{Path from depot to terminal}
			\ForEach{$ t $ in $ unvisited $}
			\If {footprint touches task $ t $ when vehicle$ ^k $ at $ \textbf{s} $}
			\State $ unvisited \leftarrow unvisited \setminus \{t\} $
			\State $ visitOrder^k \leftarrow visitOrder^k \cup \{t\} $
			\State $ visitState^k \leftarrow visitState^k \cup \{\textbf{p}\} $
			\EndIf
			\EndFor
			\If {$ unvisited = \emptyset $}
			\State break \textbf{for each} loop
			\EndIf
			\EndFor
			\State $ visitOrder^k \leftarrow $ \{n+2\} \COMMENT {Save terminal information}
			\State $ visitState^k \leftarrow \{path^k[\textrm{end}]\} $
			\State $ numState^k = |visitState^k| $	\COMMENT{Number of states to be optimized}
			\Algphase{Phase 2 - Find locally optimal states}
			\Do
			\For {$ i := 1$:$2$:$numState^k, 2$:$2$:$numState^k $}
			\State $ t \leftarrow visitOrder^k[i] $
			\State $ \textbf{p}_{\textrm{prev}} \leftarrow visitState^k [i$-1$] $
			\State $ \textbf{p}_{\textrm{next}} \leftarrow visitState^k [i$+1$] $
			\If {$ t = $ n+1}	\COMMENT{Depot}						
			\State $ \textbf{p}_{\textrm{opt}} \leftarrow localOpt(\textrm{depot}^k, \textrm{vehicle}^k, \textbf{p}_{\textrm{next}}) $
			\ElsIf {$ t = $ n+2}	\COMMENT{Terminal}
			\State $ \textbf{p}_{\textrm{opt}} \leftarrow localOpt(\textrm{terminal}^k, \textrm{vehicle}^k, \textbf{p}_{\textrm{prev}}) $
			\Else	\COMMENT{other tasks}
			\State $ \textbf{p}_{\textrm{opt}} \leftarrow localOpt(\textrm{terminal}^k, \textrm{vehicle}^k, (\textbf{p}_{\textrm{prev}}, \textbf{p}_{\textrm{next}})) $
			\EndIf
			\State $ visitState^k[i] \leftarrow \textbf{p}_{\textrm{opt}} $
			\EndFor
			\State $ path^k \leftarrow $ continuous path from $visitState^k$
			\doWhile {cost of $ path^k $ not converged}
			\EndFor
			\RETURN \textit{\textbf{path}}
			\EndProcedure
		\end{algorithmic}
		\label{alg:PathRefinement}
	\end{algorithm*}

	As an additional step of the proposed procedure, this section suggests an approach to refine the paths for each vehicle to improve the quality of the solution.
	The output of the sampling-based methods for the GHMDATSP is obtained through a limited number of crudely discretized samples in the 3-dimensional space.
	In other words, even if the optimal solution is obtained for a given roadmap, there is some quality difference from the actual optimum solution with given conditions.
	Therefore, to reduce the difference, local optimization is applied to the outputs from the suggested method.
	When performing the local optimization, the parameters are optimized in the continuous state space while assuming that the newly refined path follows the sequence of visiting each task from the given solution.
	The improvement of the solution quality through this step might be limited since the visiting sequence of the vehicle does not change.
	However, the global optimal solution of the instance can be obtained if the quality of the given solution from the algorithms above is sufficiently high.
	In addition, there is an advantage in that the total calculation time can be greatly reduced compared to simply increasing the total number of samples in the instance.
	
	Path refinement proceeds as follows. Since the output of the previous step may not have sample nodes for all tasks due to the concept of the necessarily intersecting region, the visiting sequence and states when the sensor's footprint touches tasks are sequentially generated for each vehicle based on the set of paths in the given solution.
	Considering the states of the depots and the terminals, the total number of states to be optimized is $ n+2m' $ where n is the number of tasks, and $ 2m' $ corresponds to the number of initial and final states for each vehicle assigned at least a single task.
	For the local optimization of each task state, the constraints of the state to be optimized and the neighboring states are required.
	Each is optimized in the direction of decreasing the solution cost where the neighboring states are fixed.\footnotetext[2]{The `fmincon' function in MATLAB is used for the local optimization.}
	Similarly, the depot state is optimized fixing the next state and the terminal state fixing the previous state.
	Optimization is repeated in an alternating order as follows: odd-numbered states are optimized while others are fixed and then even-numbered states are optimized, and the iteration repeats until the cost of the vehicle converges.
	In the simulation, iteration was performed until the difference between the previous cost and the next cost was less than 0.01\%.
	The pseudo-code of this step is shown in Algorithm \ref{alg:PathRefinement}.

	\section{Numerical Experiments}\label{sec:sim}
	
	The aim of this section is to answer the following questions:
	\begin{enumerate}
		\item How does the performance differ by applying the necessarily intersecting node and the path refinement process?
		\item How does the result depend on the orientation and the size of the sensor?
		\item How does the computational time depend on the size of the instance?
		\item When operating multiple vehicles, how do the results differ when the characteristics of the vehicle are equal or not equal?
	\end{enumerate}

	Rather than comparing simulation results for various instances at once, we change the key factors or parameters that affect the problem one by one to understand the tendency of the results of path generation.
	We compare three methods:
	\begin{enumerate}
		\item noNIN: Path generation, without considering the \textit{necessarily intersecting node} (from Section \ref{subsec:proc1} to \ref{subsec:proc3}).
		\item NIN: Path generation, applying the \textit{necessarily intersecting node} (from Section \ref{subsec:proc1} to \ref{subsec:proc3}).
		\item NINPR: Path generation and refinement on the solution of the NIN (Section \ref{subsec:proc4}).
	\end{enumerate}
	The algorithms were implemented in MATLAB environment on a PC with an Intel(R) Core$ ^{\textrm{TM}} $ i7-6700K 4.00GHz CPU and 16.0GB RAM.
	The ATSP instances were solved by LKH\footnotetext[1]{The C code of the LKH heuristic was mex-compiled to use in MATLAB environment.}, and the computational times are expressed in seconds.
	When the instance is for a single vehicle, the noNIN method equals the algorithm proposed in \cite{obermeyer2012sampling}.
	
	\subsection{noNIN, NIN, and NINPR with different sensors}\label{subsec:simul1}

	\begin{figure*}[]
		\centering
		\captionsetup{justification=centering}
		\begin{minipage}{\linewidth}\centering
			\subfloat[Omni-directional, noNIN.]{\label{fig:simul1-1}\includegraphics[width=.325\linewidth]{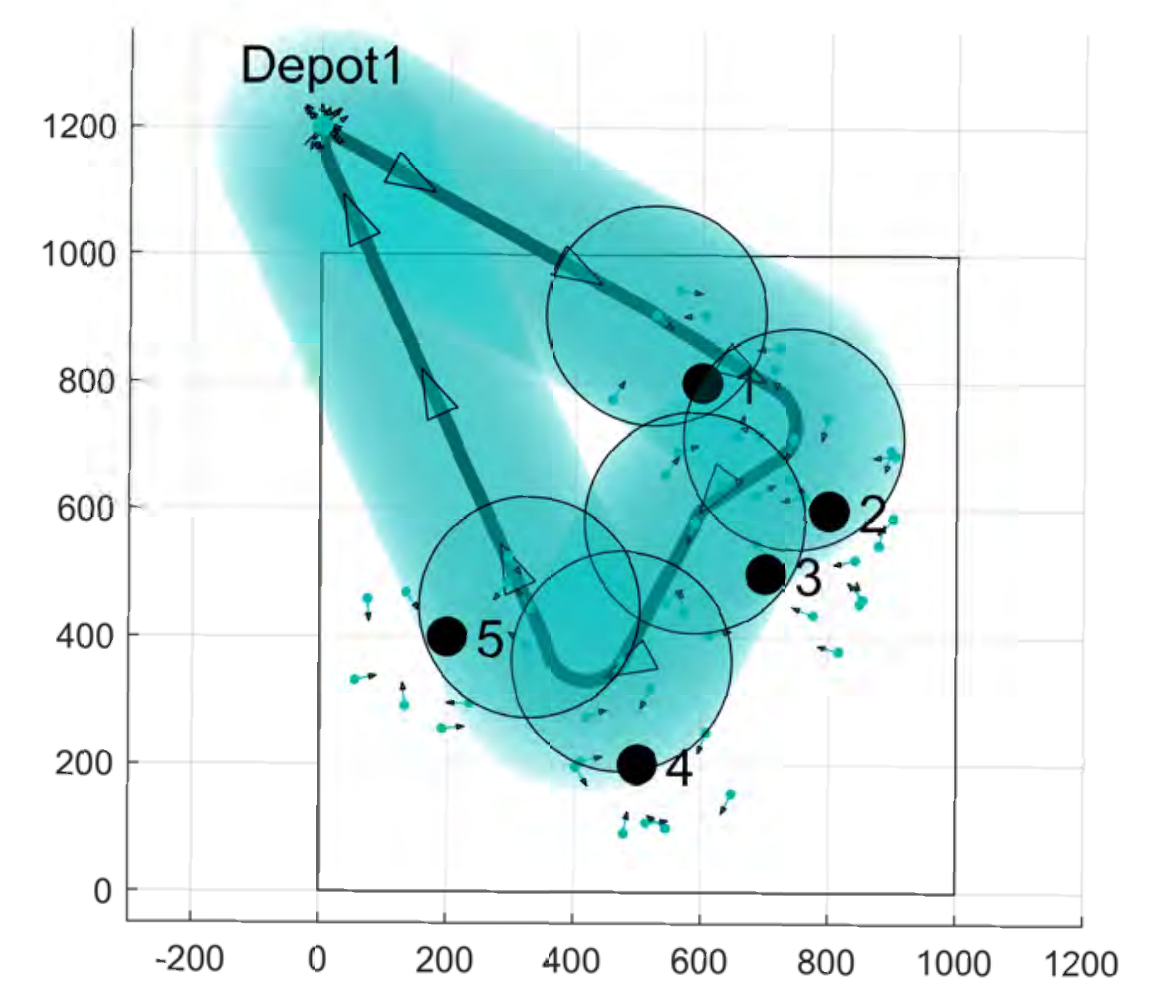}}
			\subfloat[Omni-directional, NIN.]{\label{fig:simul1-2}\includegraphics[width=.325\linewidth]{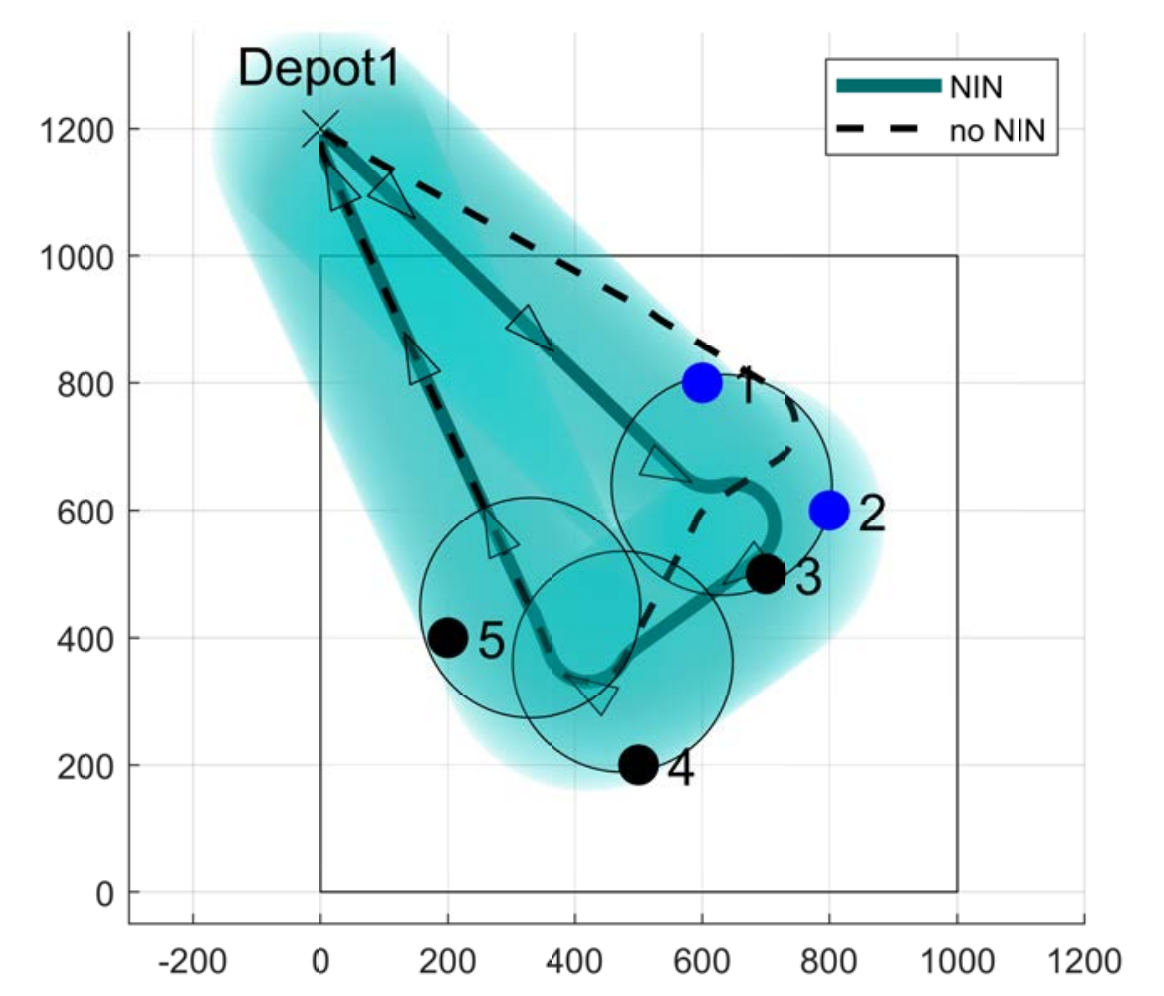}}
			\subfloat[Omni-directional, NIN-PR.]{\label{fig:simul1-3}\includegraphics[width=.325\linewidth]{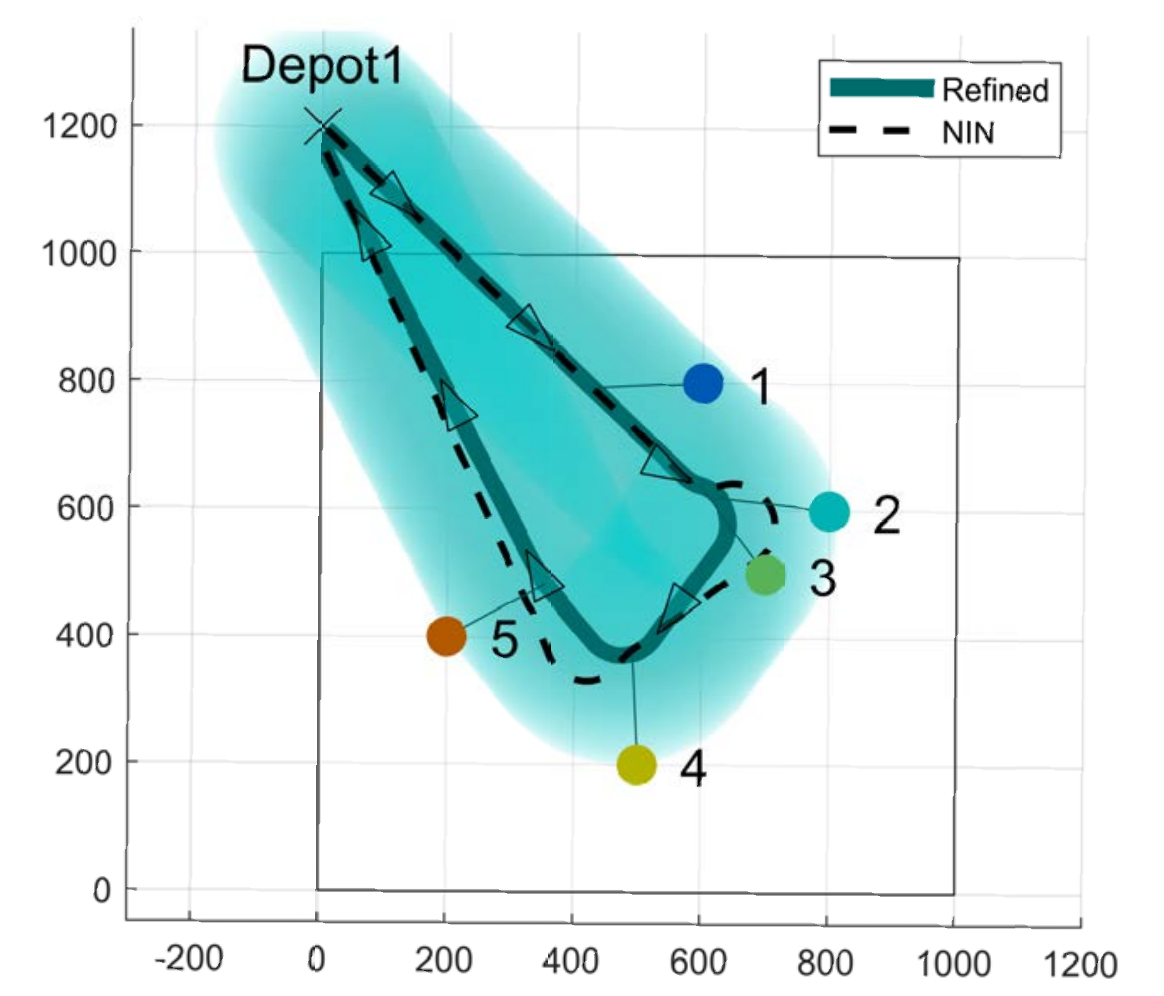}}
		\end{minipage}
		\begin{minipage}{\linewidth}\centering
			\subfloat[Forward, noNIN.]{\label{fig:simul1-4}\includegraphics[width=.325\linewidth]{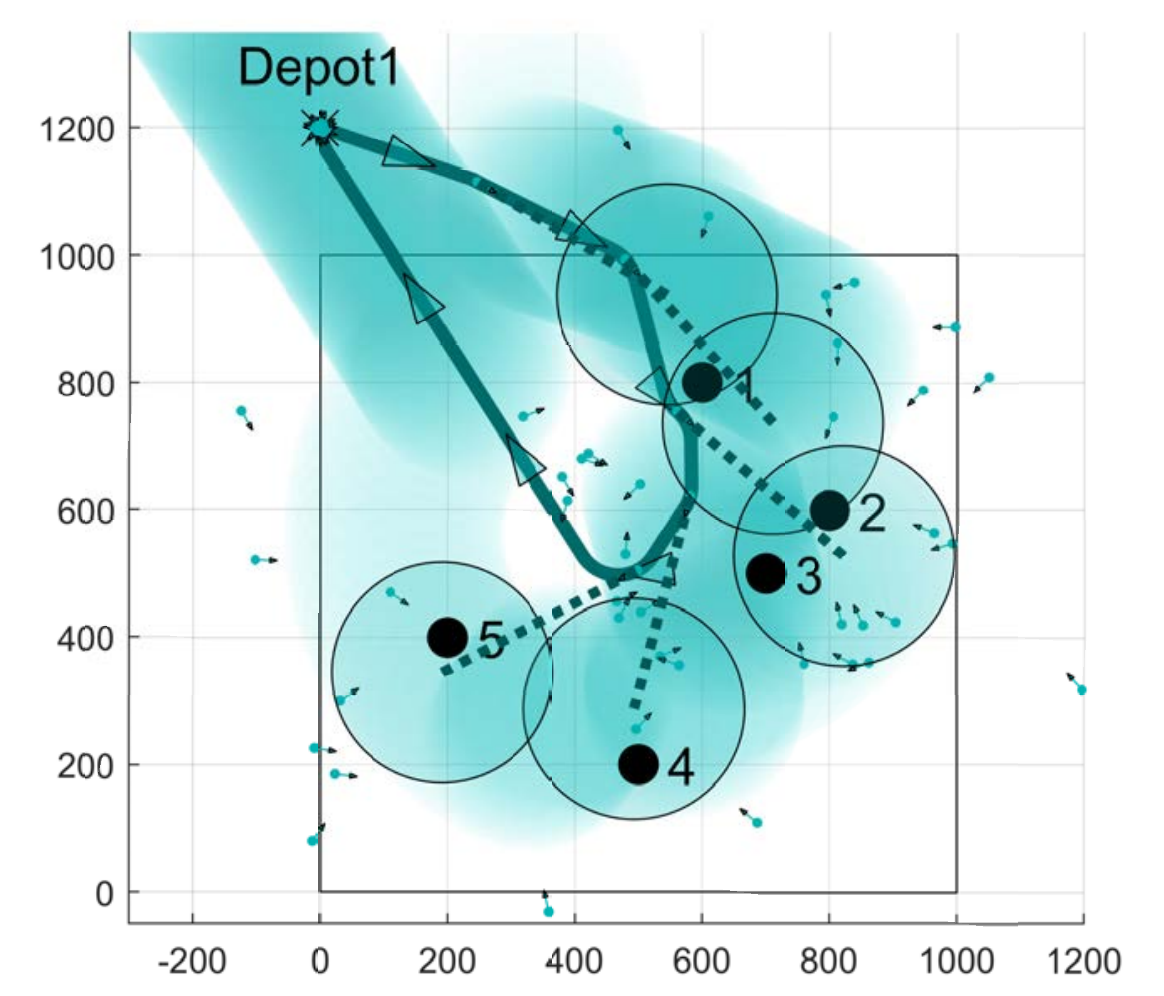}}
			\subfloat[Forward, NIN.]{\label{fig:simul1-5}\includegraphics[width=.325\linewidth]{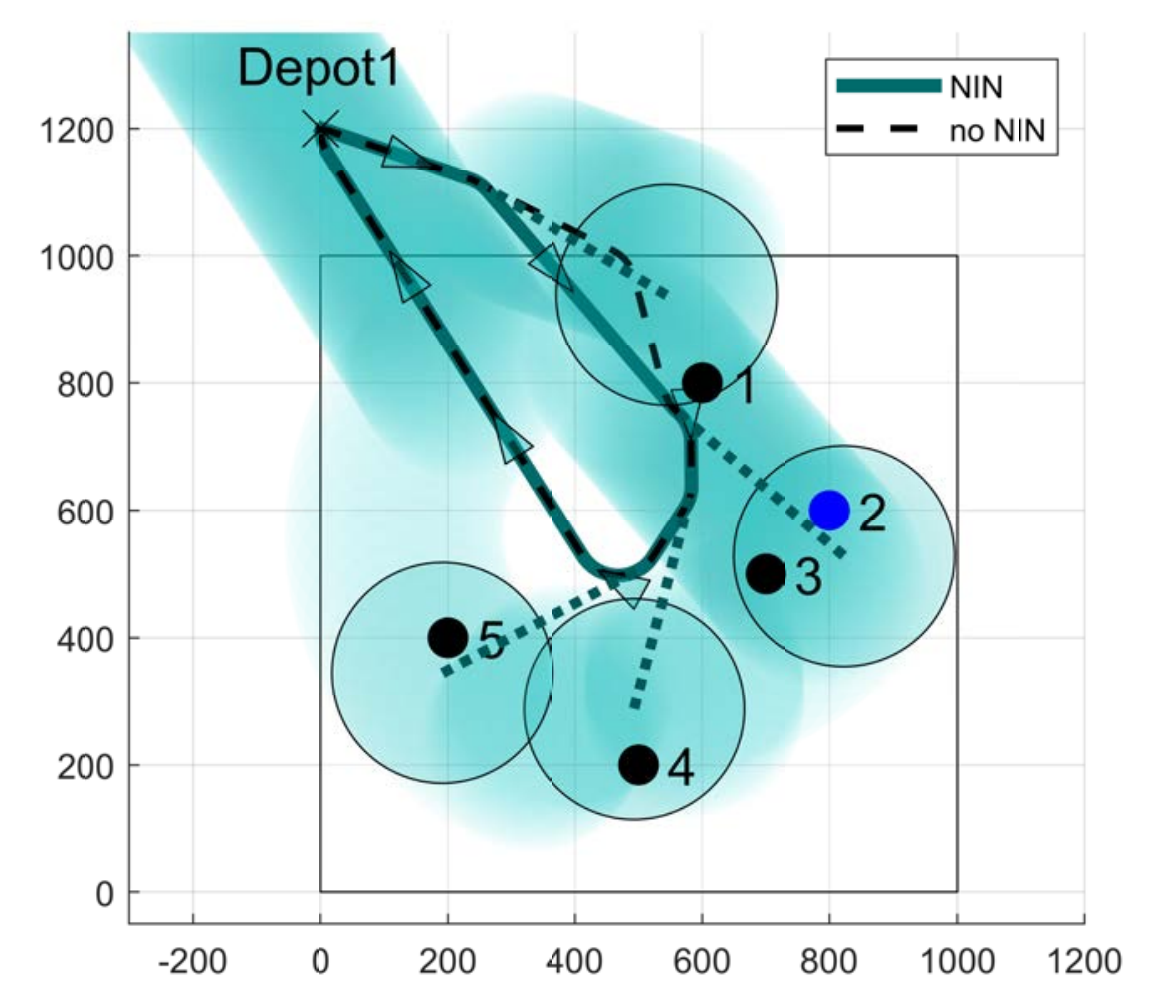}}
			\subfloat[Forward, NIN-PR.]{\label{fig:simul1-6}\includegraphics[width=.325\linewidth]{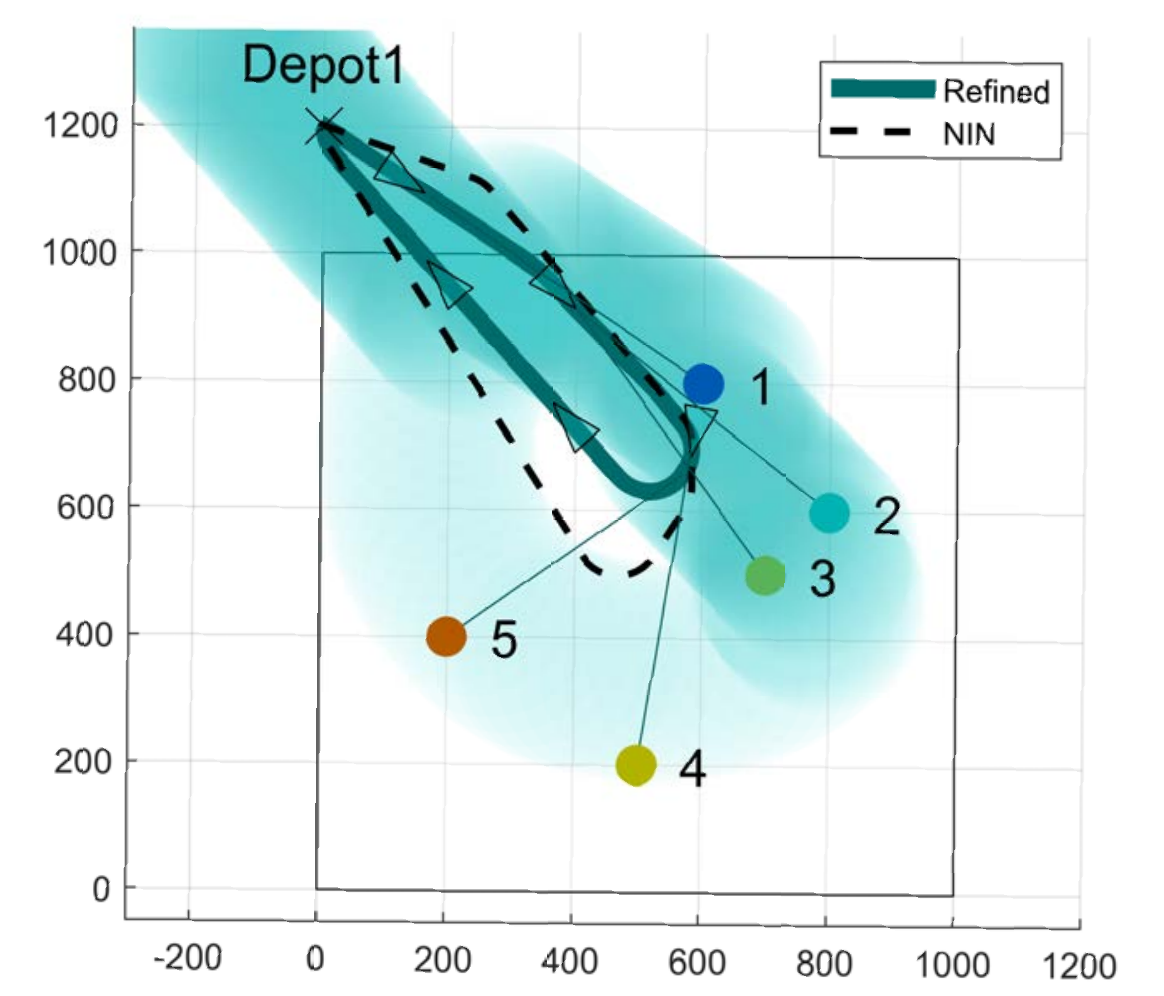}}
		\end{minipage}
		\begin{minipage}{\linewidth}\centering
			\subfloat[Rightward, noNIN.]{\label{fig:simul1-7}\includegraphics[width=.325\linewidth]{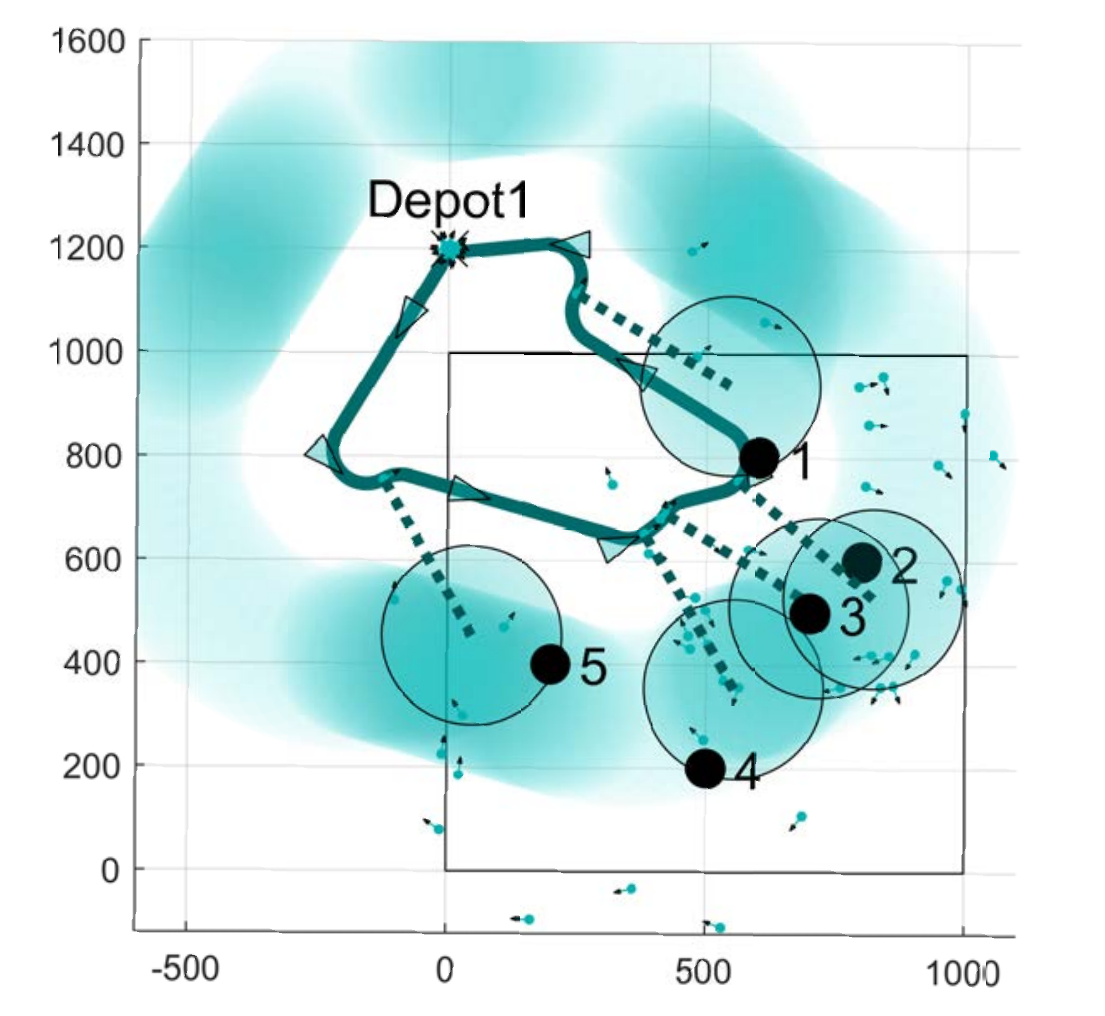}}
			\subfloat[Rightward, NIN.]{\label{fig:simul1-8}\includegraphics[width=.325\linewidth]{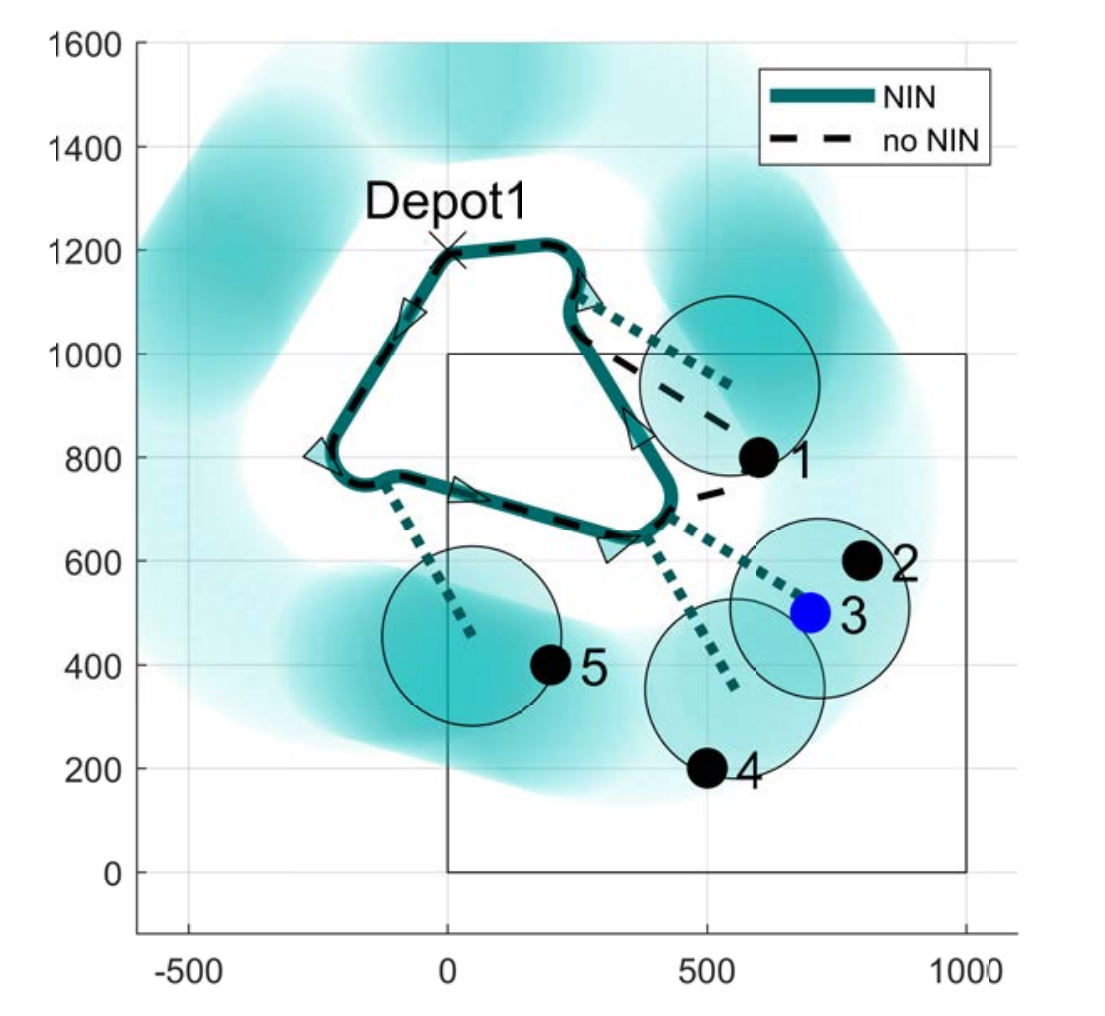}}
			\subfloat[Rightward, NIN-PR.]{\label{fig:simul1-9}\includegraphics[width=.325\linewidth]{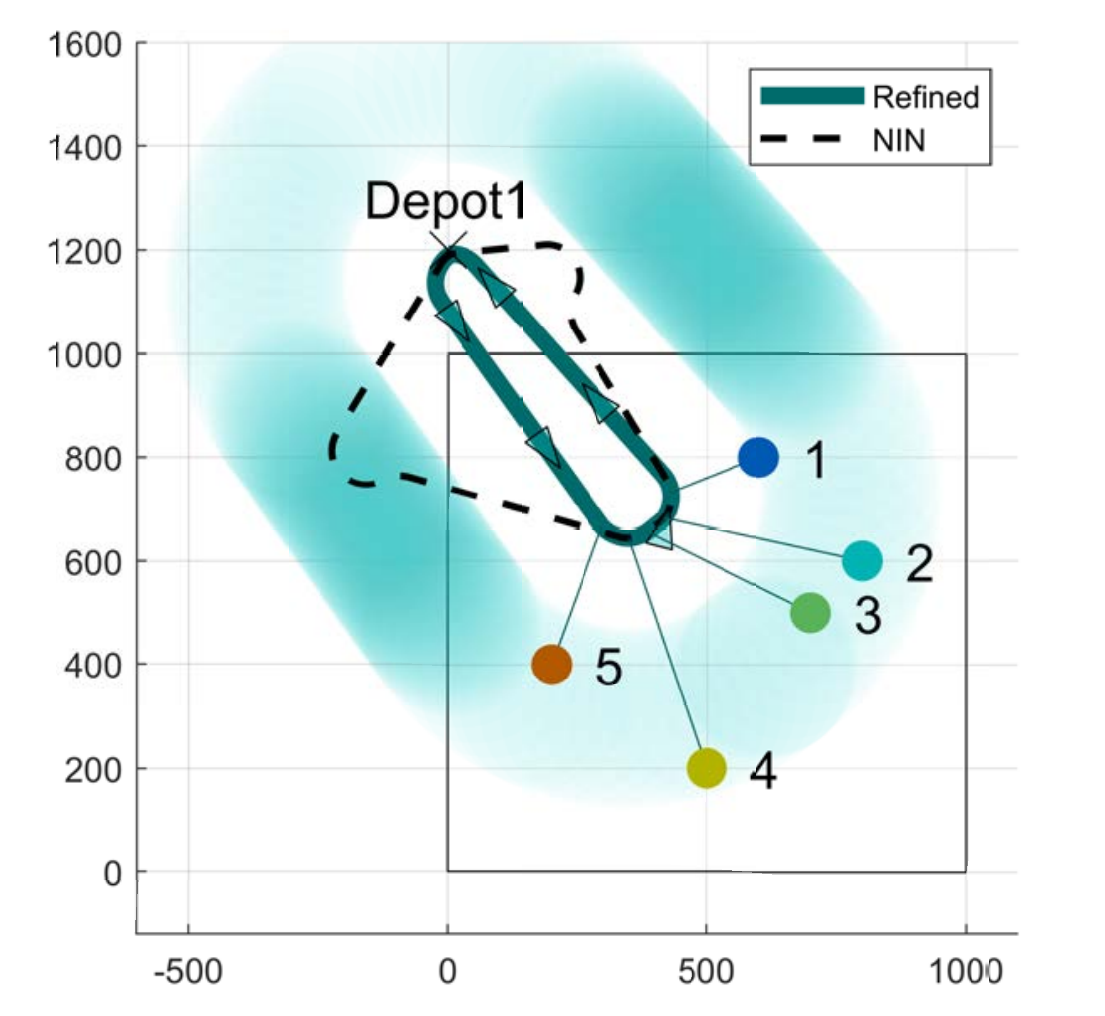}}
		\end{minipage}
		\caption{Comparison among the results of noNIN, NIN, and NINPR.}
		\label{fig:simulComp1}
	\end{figure*}
	
	Each subfigure in Figure \ref{fig:simulComp1} deals with the result of the problem to find the optimal tour where the aerial vehicle departs from the depot, traverses five tasks, and returns to the original location.
	The parameters and assumptions for each instance in Figure \ref{fig:simulComp1} are as follows. 
	The speed of the vehicle is assumed to be 50$ m/s $ and the minimum turning radius is about 66$ m $ when the load factor is fixed as 4. 
	When the operating altitude is 300$ m $ and the minimum and maximum angles between the nadir and vector from the sensor to the footprint, simply a footprint, of the sensor are 30 and 60 degrees, respectively, the minimum and maximum distances from the nadir point of the vehicle to the footprint are approximately 173m and 520m, respectively.
	To simplify the problem, the shape of the sensor foot is assumed to be a circle, so the radius of the footprint is about 173m. For the consistency of the instance, the radius of the sensor foot of the omni directional sensor is also assumed to be 173$ m $.
	The depot is located at $ (0, 1200) $ and the region of interest in a two-dimensional plane is $ [0,1000] \times [0,1000] $.
	
	Figures \ref{fig:simul1-1} to \ref{fig:simul1-3} contain the results of the vehicle equipped with the omni directional sensor.
	Figure \ref{fig:simul1-1} shows the solution obtained by the noNIN method for the instances where 10 sample nodes that have the vehicle's location and heading information are generated for each task, depot, and terminal based on the Halton sequences (length: 2,404).
	The Halton sequences are known to be deterministic, but they are of low-discrepancy that the sequences are more evenly distributed compared to the outputs generated based on the normal distribution.
	Black dots indicate the position of randomly generated tasks within the region of interest, and each sample node is indicated by a small arrow. 
	The five large circles in the figure represent the footprint when the vehicle passes the selected sample node in the solution for each task cluster, and the lightly shaded region along the path represents the trace of the footprint.
	The order of task visits is 1-2-3-4-5.
	The instance in Figure \ref{fig:simul1-2} has the same set of sample nodes as the one in Figure \ref{fig:simul1-1}, and the path obtained through the NIN is indicated by a bold solid line, while the path from the noNIN is indicated by a dashed line (cost: 2,404 $ \rightarrow $ 2,340).
	Only three footprints are shown in Figure \ref{fig:simul1-2}, because the necessarily intersecting region of the sample node of task 3 selected in the solution contains the locations of task 1 and task 2.
	Thus, the vehicle in Figure \ref{fig:simul1-2} can perform all tasks along the generated path without directly visiting the sample nodes of task 1 and task 2.
	Figure \ref{fig:simul1-3} shows the refined path from the path in Figure \ref{fig:simul1-2} indicated by a bold solid line where the solution quality is improved by applying the procedure in Section \ref{subsec:proc4} (cost: 2,340 $ \rightarrow $ 2,131).
	The path in Figure \ref{fig:simul1-2} is shown as a dashed line for comparison.
	In addition, thin solid lines are drawn between the vehicle position and the task position at the moment each task meets the footprint.
	Figure \ref{fig:simul1-3} shows that the result from the NINPR efficiently reduces the cost while visiting all given tasks and the constraints caused by the sensor and the vehicle dynamics.
	
	Figures \ref{fig:simul1-4} to \ref{fig:simul1-6} show the simulation results when the mounted sensor detects the forward direction of the vehicle.
	As in the case of the omni-directional sensor, 10 sample nodes are generated based on the Halton sequences for each task, depot and terminal cluster, and the result obtained through the noNIN is shown in Figure \ref{fig:simul1-4}.
	In the same way as above, the order of task visits is 1-2-3-4-5 (cost: 1,952).
	The path in Figure \ref{fig:simul1-5} is the result from the NIN.
	Because the chosen sample node of task 3 contains the location of task 2 in its necessarily intersecting region, the vehicle skips the sample nodes in task 2 and visits only the sample nodes of task 1 and task 3 (cost: 1,952 $ \rightarrow $ 1,913).
	As shown in Figure \ref{fig:simul1-3}, the path obtained by the NINPR from the result through the NIN (dashed line) is shown by a bold solid line, and it is confirmed that the path is refined with a satisfactory performance (cost: 1,913 $ \rightarrow $ 1,665).
	
	Figures \ref{fig:simul1-7} to \ref{fig:simul1-9} represent the results when the sensor detects in a direction perpendicular to the right side of the vehicle's heading.
	In the same way as above, the result from the noNIN is shown in Figure \ref{fig:simul1-7} after generating 10 sample nodes for each cluster based on Halton sequences (cost: 2,192).
	The order of task visits in path is 5-4-3-2-1 which is the opposite of that of the omni-directional and the forward sensor, and this is because the detection range is relatively wide.
	The reason the vehicle's path is constructed counterclockwise is that the union of the footprint is relatively wider than when it is clockwise.
	The path generated by the NIN is shown in Figure \ref{fig:simul1-8}, and the efficiency comes from the fact that the chosen sample node of task 2 contains task 3 in the necessarily intersecting region (cost: 2,192 $ \rightarrow $ 1,973).
	The refined path through the NINPR is shown in Figure \ref{fig:simul1-9} (cost: 1,973 $ \rightarrow $ 1,469).

	\subsection{Number of tasks and sample nodes per cluster}\label{subsec:simul2}

	\begin{figure*}[t]
		\centering
		\captionsetup{justification=centering}
		\begin{minipage}{\linewidth}\centering
			\subfloat[Cost, computational time, and number of total nodes versus number of nodes per cluster. The number of task is fixed at 15.]{\label{fig:simulComp2-1}\includegraphics[width=.49\linewidth]{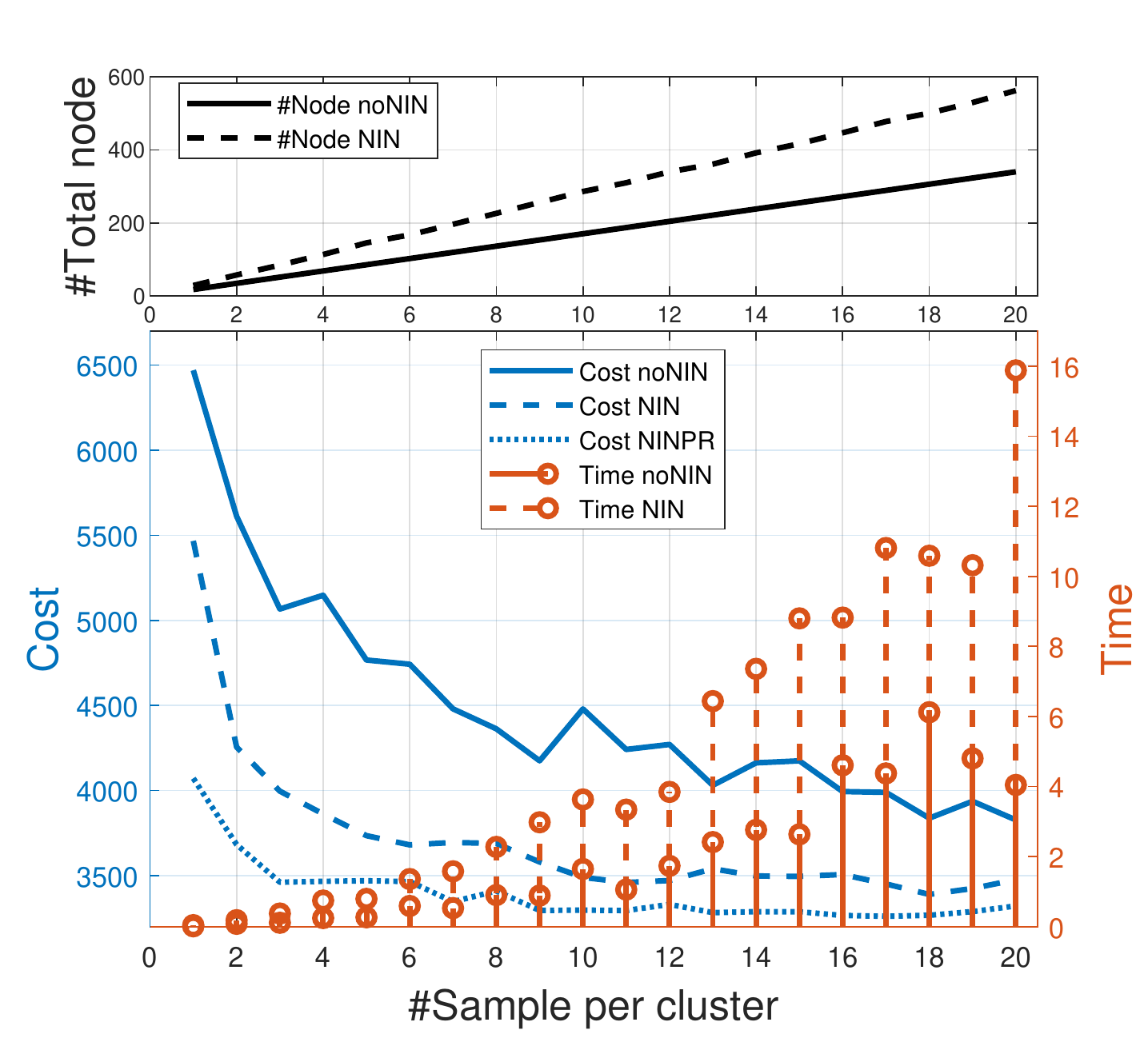}}
			\subfloat[Cost, computational time, and number of total nodes versus number of nodes per cluster. The number of nodes per cluster is fixed at 10.]{\label{fig:simulComp2-2}\includegraphics[width=.49\linewidth]{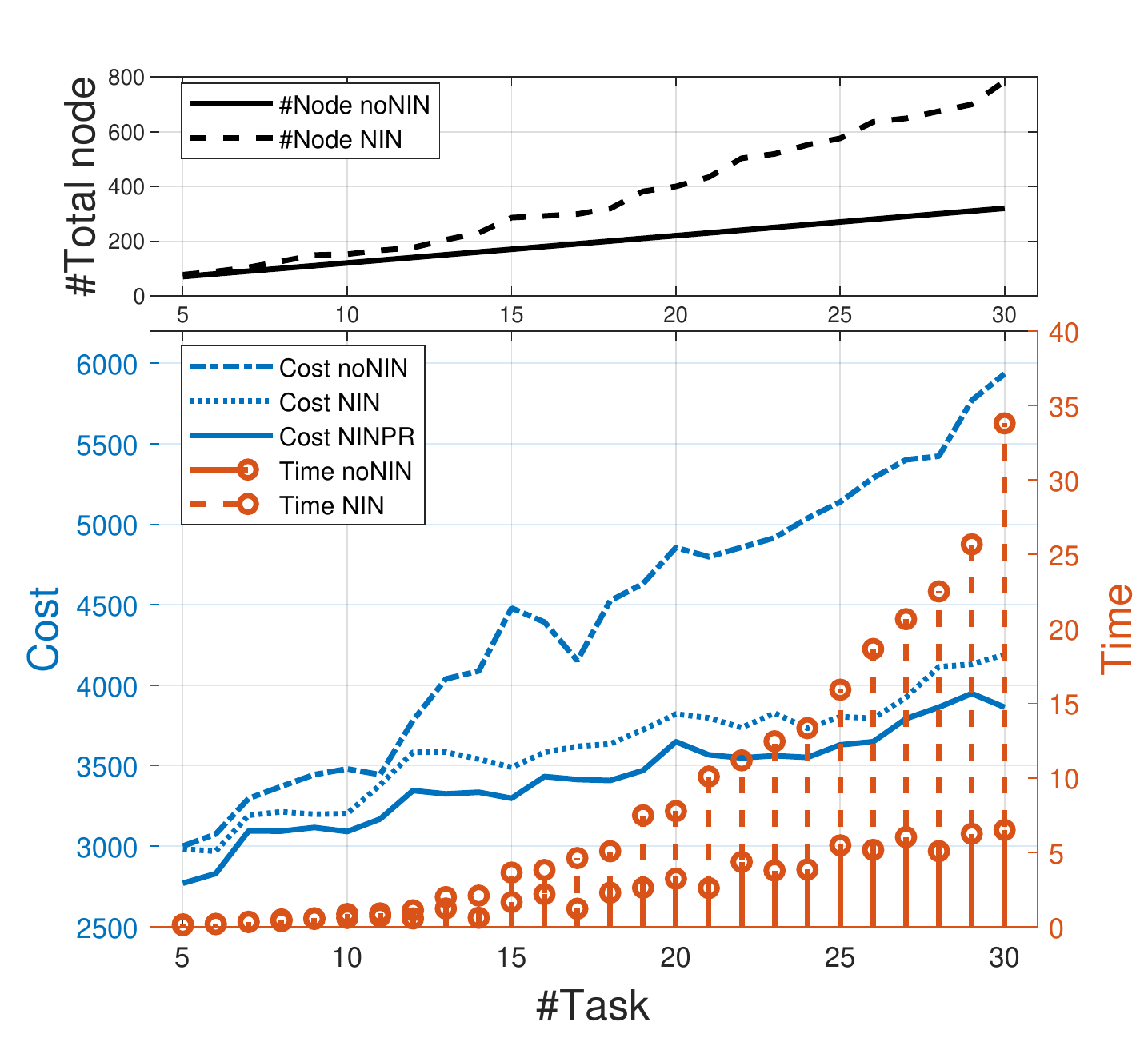}}
		\end{minipage}	
		\caption{Comparison for each method. Single vehicle, altitude: 200$ m $, footprint radius: 115$ m $.}
		\label{fig:simulComp2}
	\end{figure*}
	
	\begin{figure*}[t]
		\centering
		\captionsetup{justification=centering}
		\begin{minipage}{\linewidth}\centering
			\subfloat[Cost, computational time, and number of total nodes versus number of nodes per cluster. The number of task is fixed at 15.]{\label{fig:simulComp3-1}\includegraphics[width=.49\linewidth]{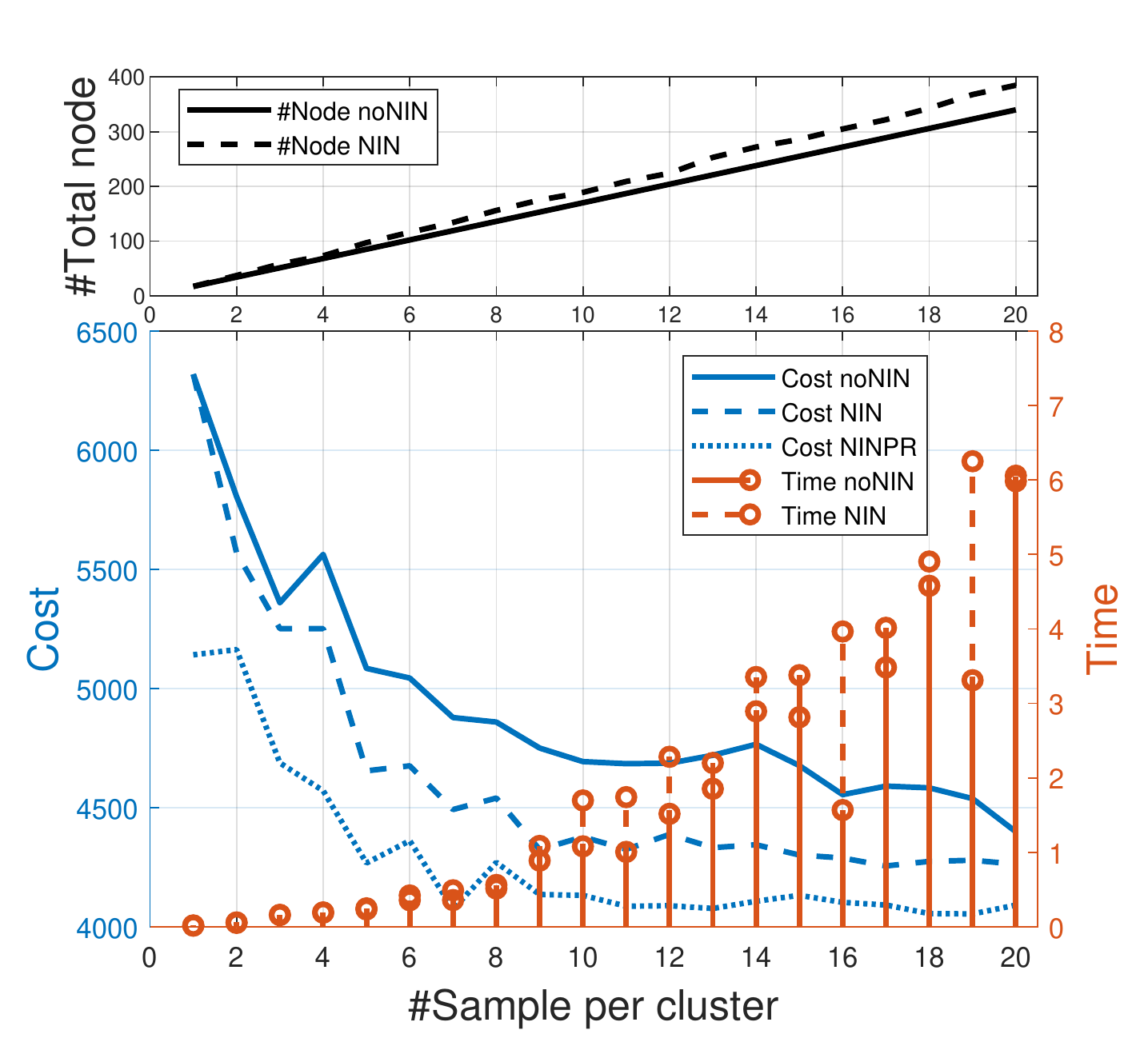}}
			\subfloat[Cost, computational time, and number of total nodes versus number of nodes per cluster. The number of nodes per cluster is fixed at 10.]{\label{fig:simulComp3-2}\includegraphics[width=.49\linewidth]{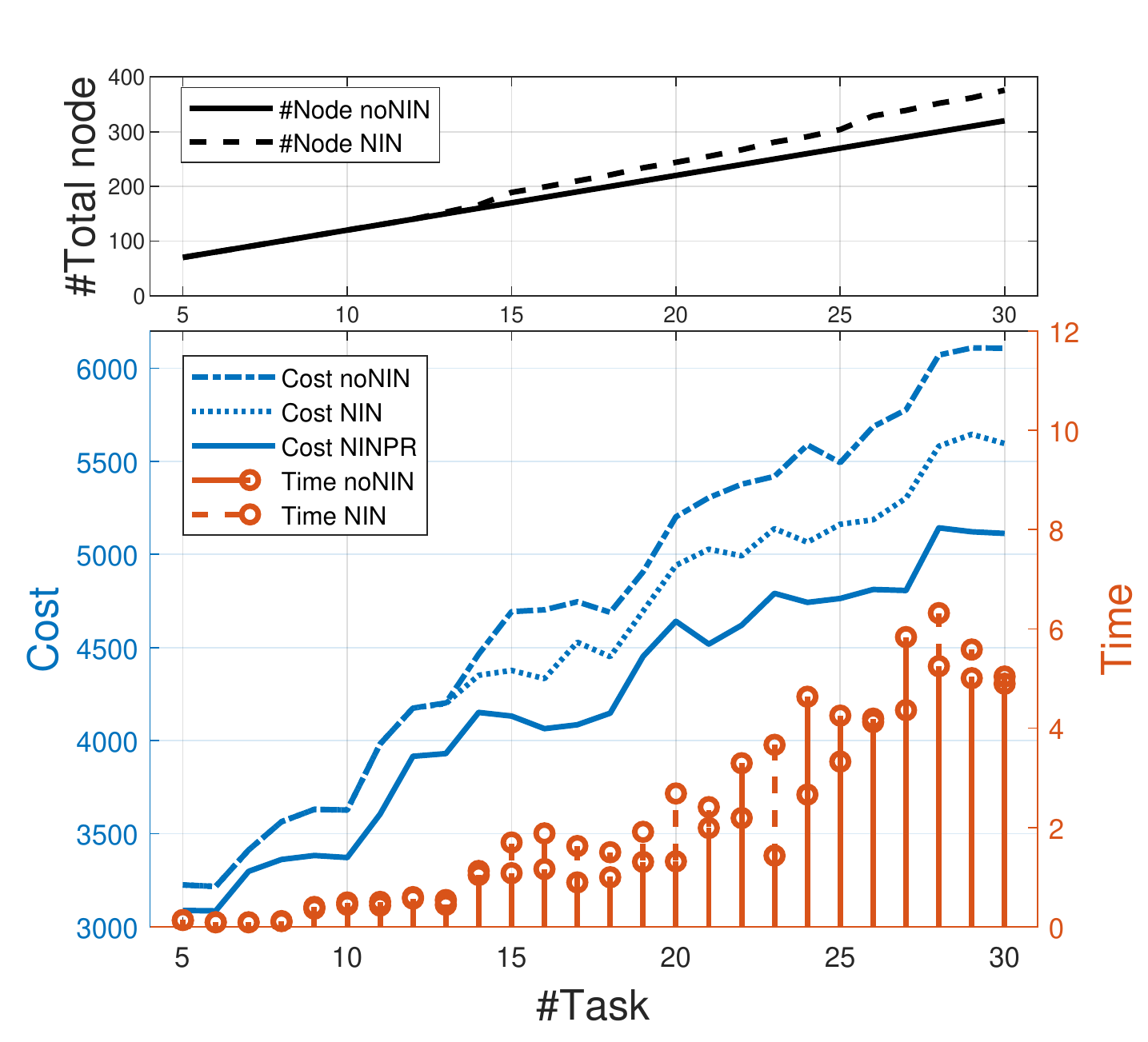}}
		\end{minipage}	
		\caption{Comparison for each method. Single vehicle, altitude: 50$ m $, footprint radius: 29$ m $.}
		\label{fig:simulComp3}
	\end{figure*}

	For each method, the trends of the number of total nodes, solution cost, and the computational time according to the number of sample nodes for each cluster are shown in Figure \ref{fig:simulComp2}.
	To obtain more meaningful characteristics from the results, the parameters of the instances are the same as in Figure \ref{fig:simulComp1} except for the altitude which is changed from 300$ m $ to 200$ m $, and the radius of the footprint is approximately 115$ m $.
	The bottom plot in Figure \ref{fig:simulComp2-1} shows that the cost decreases in the order of the noNIN, NIN, and NINPR given the same number of sample nodes for each cluster, and the cost values from each method decreases as the number of sample nodes increases.
	The cost from the NINPR converges to about 3,200 for 9 or more sample nodes per cluster, and it can be inferred that the path from the NIN has the same task visit sequence.
	Computational time is indicated only for the noNIN and NIN methods because the NINPR is based on the result of the NIN and the time complexity of the path refinement process is $ O(n) $ for $ n $ tasks, and calculation is completed within 1 to 2 seconds.
	The total number of nodes in the instance is proportional to the number of sample nodes in each cluster, and the calculation time of the NIN is larger than the noNIN in every case, which is confirmed in Figure \ref{fig:simulComp2-1}.	
	The virtual nodes are added to the corresponding task clusters if the tasks are included in the necessarily intersecting region of a sample node, so the graph for the NIN has more nodes than the one from the noNIN.
	The time complexity of the LKH heuristic is known as $ O(n^{2.2}) $ where $ n $ is the number of nodes in the graph, and the computational time in the figure roughly follows the above complexity.
	Figure \ref{fig:simulComp2-2} shows the trend of cost and computational time versus the number of tasks for each method while the number of sample nodes for each cluster is fixed as 10.
	The cost values are improved in the order of noNIN, NIN, and NINPR as in the preceding case.
	In addition, the cost efficiency of the noNIN gets worse than that of the other two methods as the number of tasks increases, which suggests a limitation of the method itself.
	For the NIN, the number of total nodes in the instance nonlinearly increases and the computational time grows faster than that of the noNIN method.
	
	In a way similar to above, Figure \ref{fig:simulComp3} shows the results with instance using the same parameters except the operational altitude is changed from 200$ m $ to 50$ m $, and the radius of the footprint is approximately 29$ m $.
	As the sensor footprint becomes smaller and the vehicle needs to get closer to the task, the overall cost value is increased compared to the result in Figure \ref{fig:simulComp2}.
	Moreover, the differences in the cost and computational time between the noNIN and NIN are reduced, since the number of virtual nodes generated through the concept of the necessarily intersecting region is reduced.
	In particular, in Figure \ref{fig:simulComp3-2}, there is no difference in the results of the noNIN and NIN for cases where the number of tasks is 13 or less, because there is no necessarily intersecting node among all the nodes in the graph.

	\subsection{Multiple Vehicles}\label{subsec:simul3}
	
	To find the answer to the last question mentioned above, we use NINPR to find paths for instances of multiple vehicles and analyze the results. 
	The instances with multiple vehicles can be divided into the homogeneous and heterogeneous case in terms of their characteristics.
	Going one step further, the heterogeneity can be divided into structural and functional heterogeneity.
	The vehicles are called structurally heterogeneous if the dynamics or design of the vehicles are different, which leads to differences in motion constraints, speed, or moving cost.
	And a fleet of vehicles is called functionally heterogeneous if different types of sensors are mounted on the vehicles and each sensor can handle specific types of tasks.
	In the following, the results are analyzed in the order of a homogeneous case and heterogeneous case.
	
	\subsubsection{Homogeneous Case}
	
	\begin{figure*}[t]
		\centering
		\captionsetup{justification=centering}
		\begin{minipage}{\linewidth}\centering
			\subfloat[Path output when tasks are close to one of the depots.]{\label{fig:simulComp4-1}\includegraphics[width=.49\linewidth]{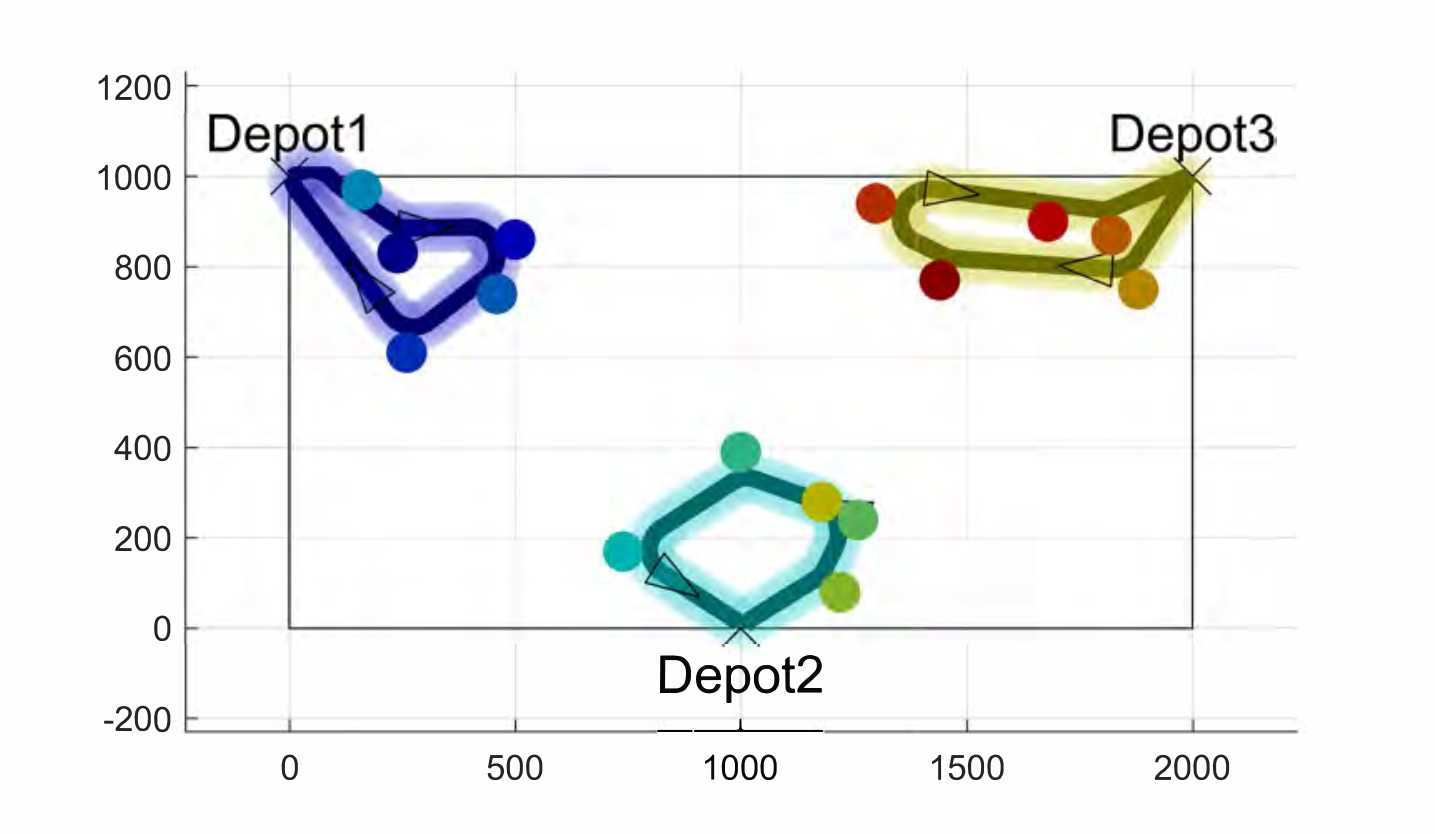}}
			\subfloat[Path output when tasks are dispersed.]{\label{fig:simulComp4-2}\includegraphics[width=.49\linewidth]{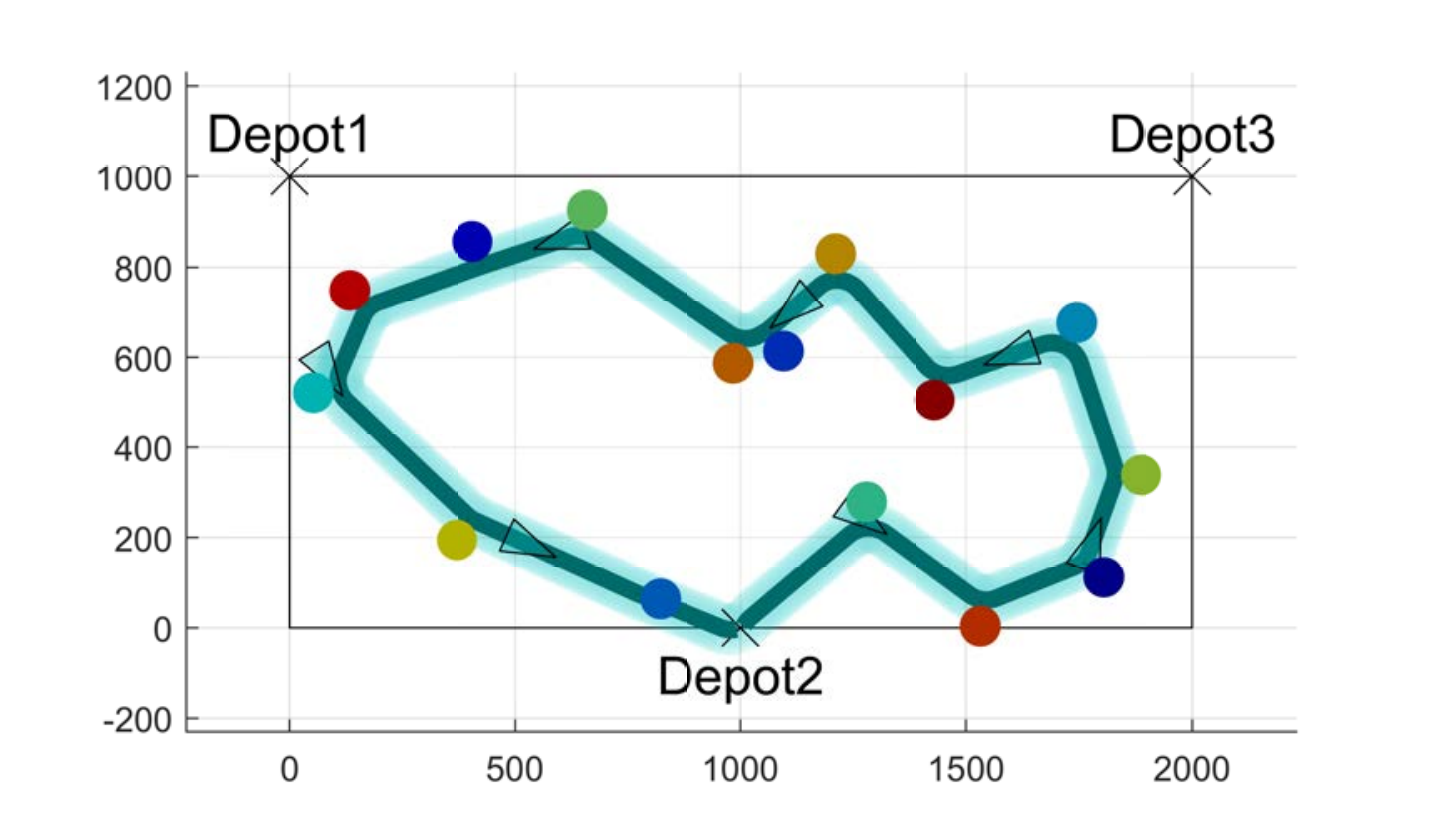}}
		\end{minipage}	
		\caption{Path results from NINPR. Homogeneous vehicles, omni-directional sensor.}
		\label{fig:simulComp4}
	\end{figure*}
	Assume omni-directional sensors are mounted on each vehicle and the radius of the footprint is set identically to be 60$ m $.
	In the instance, the depot locations of each vehicle are (0,1000), (1000,0), and (2000,1000), and the region of interest is $ [0,2000]\times[0,1000] $.
	Figure \ref{fig:simulComp4-1} shows the path generation result when the tasks are close to one of the depots, and Figure \ref{fig:simulComp4-2} shows the result when the tasks are randomly located in the ROI.
	All vehicles participate in fulfilling the mission in Figure \ref{fig:simulComp4-1}, while a path is created only for vehicle 2, which visits every task in Figure \ref{fig:simulComp4-2}.
	Since the objective function of the problem minimizes the sum of the total cost and the upper bound of the tour cost is not specified such as the constrained vehicle routing problem (CVRP), the solution in Figure \ref{fig:simulComp4-2} is considered reasonable considering the problem formulation.
	In a way similar to the result in Figure \ref{fig:simulComp4-2}, all tasks are assigned to only one of the vehicles in most cases when tasks are randomly placed in the ROI.
		
	\subsubsection{Heterogeneous Case}
	
	\begin{figure*}[t]
		\centering
		\captionsetup{justification=centering}
		\begin{minipage}{\linewidth}\centering
			\subfloat[Paths for 30 tasks, functional heterogeneity.]{\label{fig:simulComp5-1}\includegraphics[width=.49\linewidth]{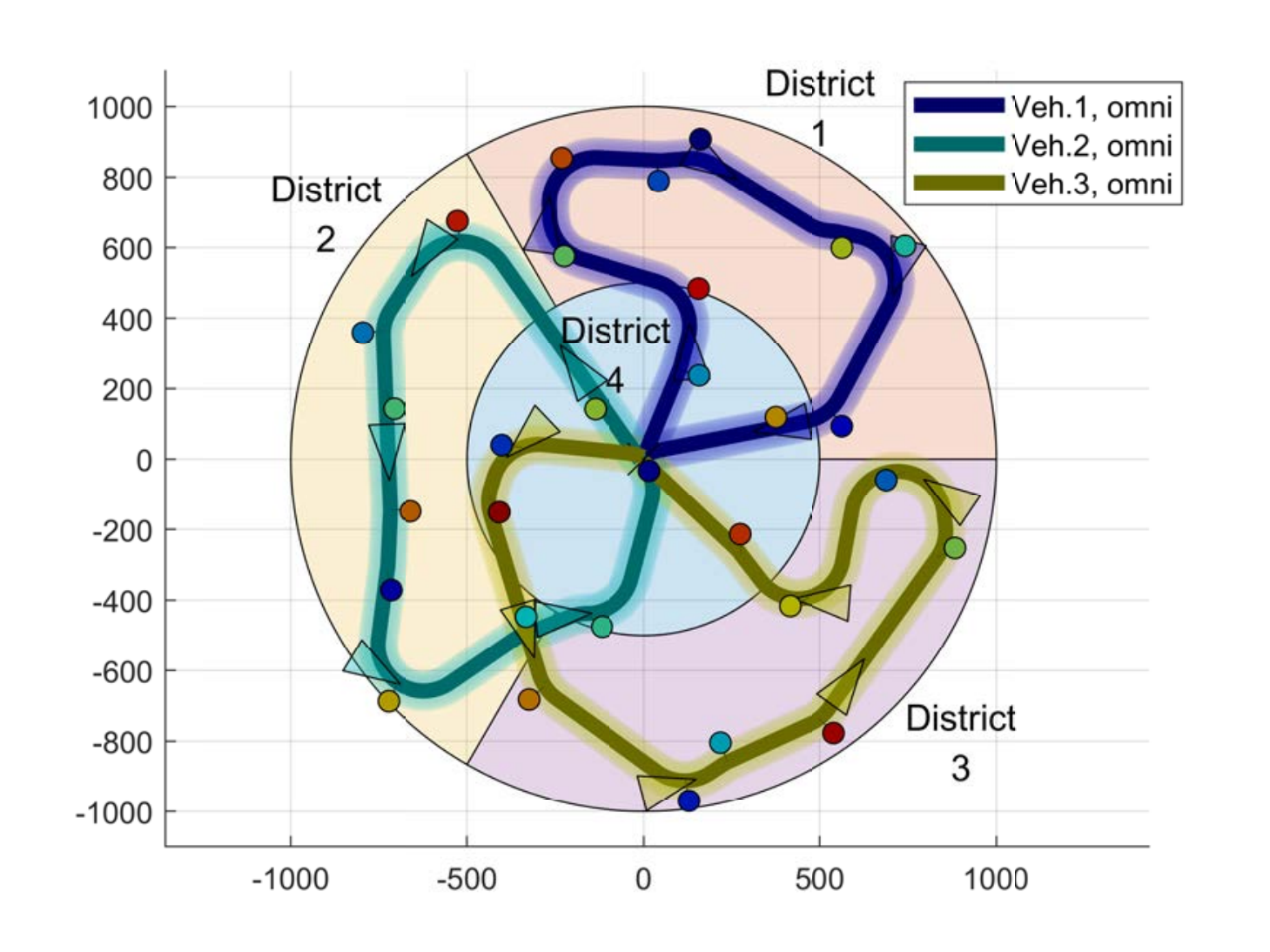}}
			\subfloat[Paths for 100 tasks, functional heterogeneity.]{\label{fig:simulComp5-2}\includegraphics[width=.49\linewidth]{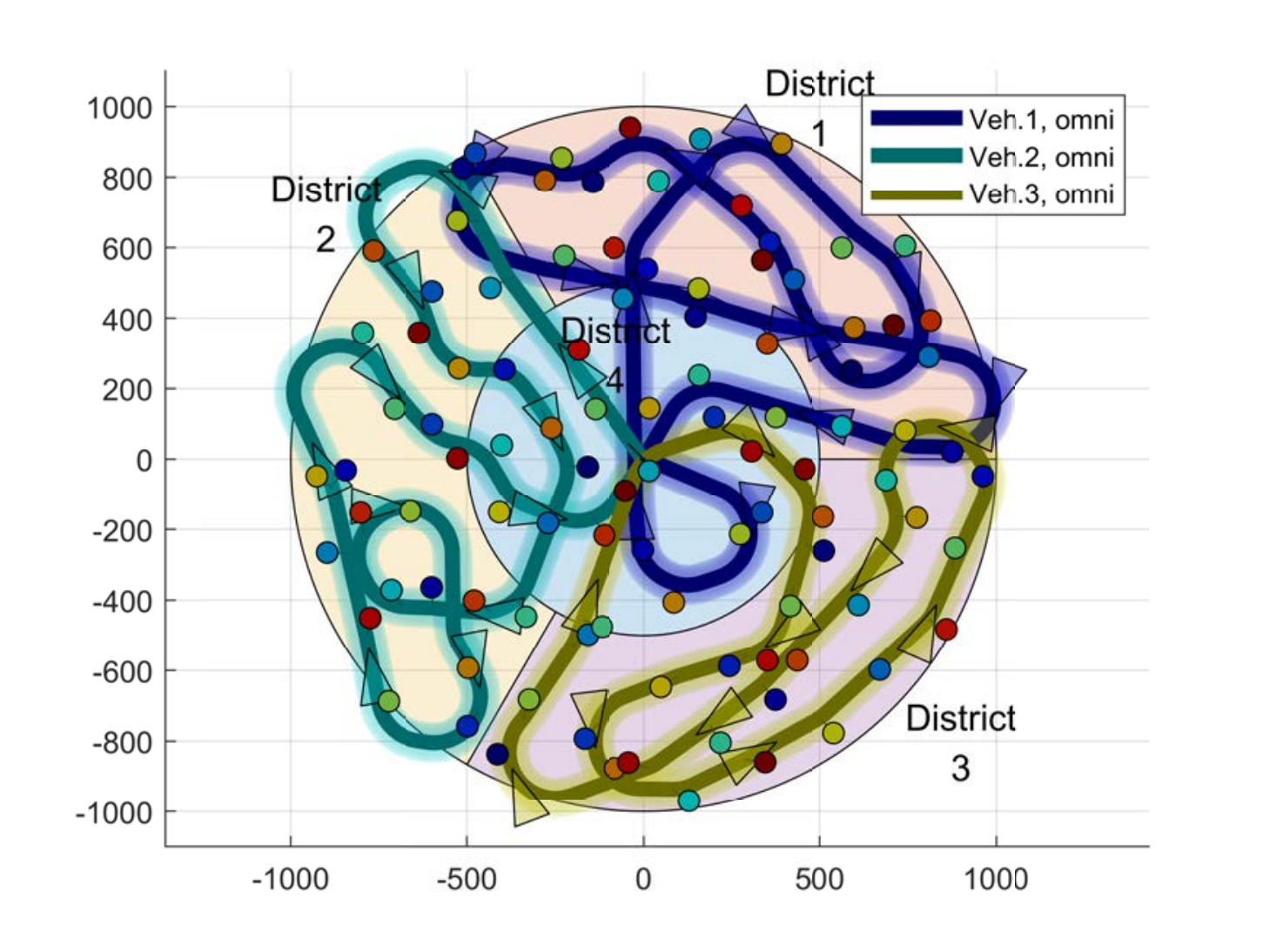}}
		\end{minipage}	
		\begin{minipage}{\linewidth}\centering
			\subfloat[Paths for 30 tasks, functional and structural heterogeneity.]{\label{fig:simulComp5-3}\includegraphics[width=.49\linewidth]{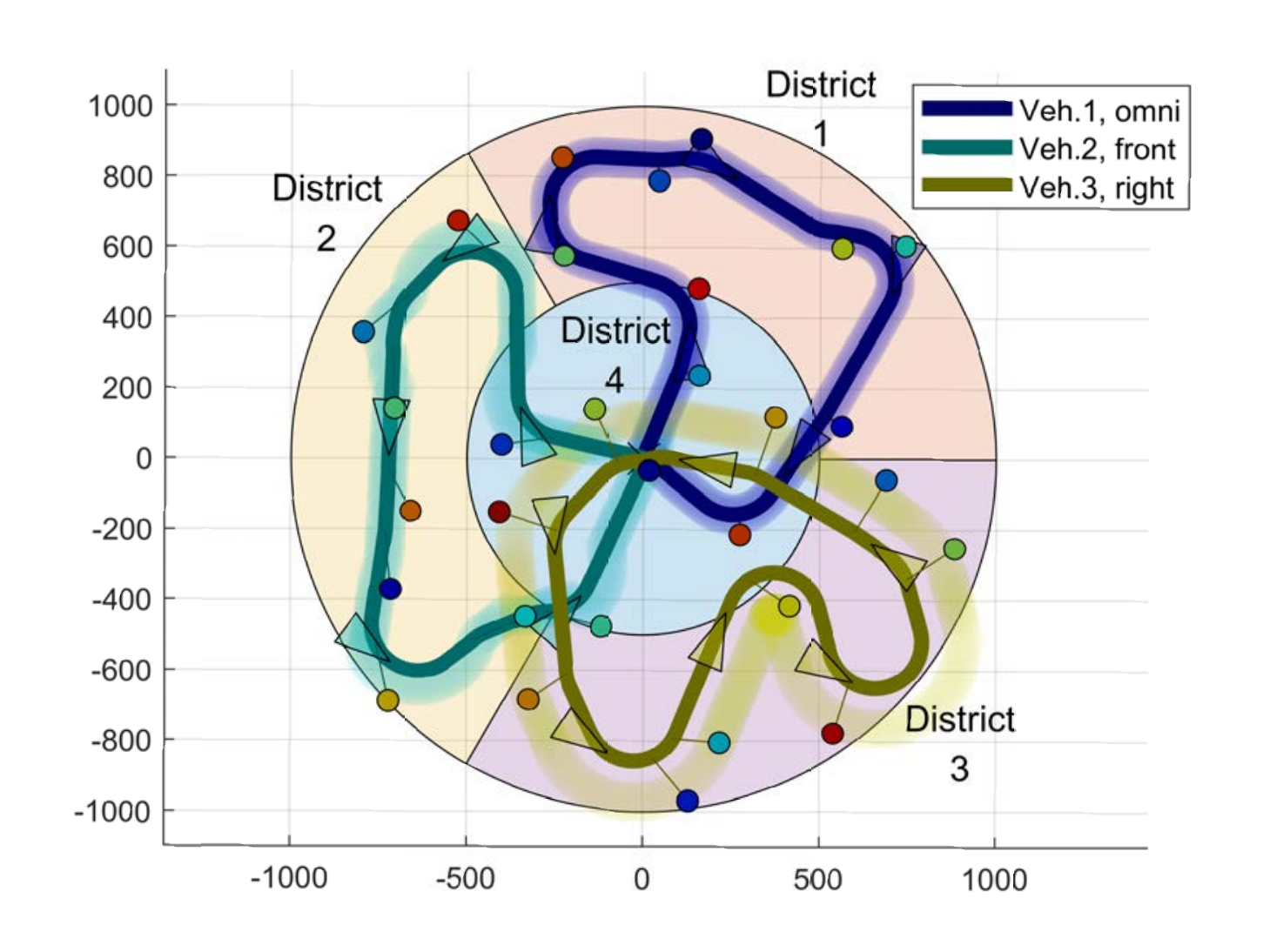}}
			\subfloat[Paths for 100 tasks, functional and structural heterogeneity.]{\label{fig:simulComp5-4}\includegraphics[width=.49\linewidth]{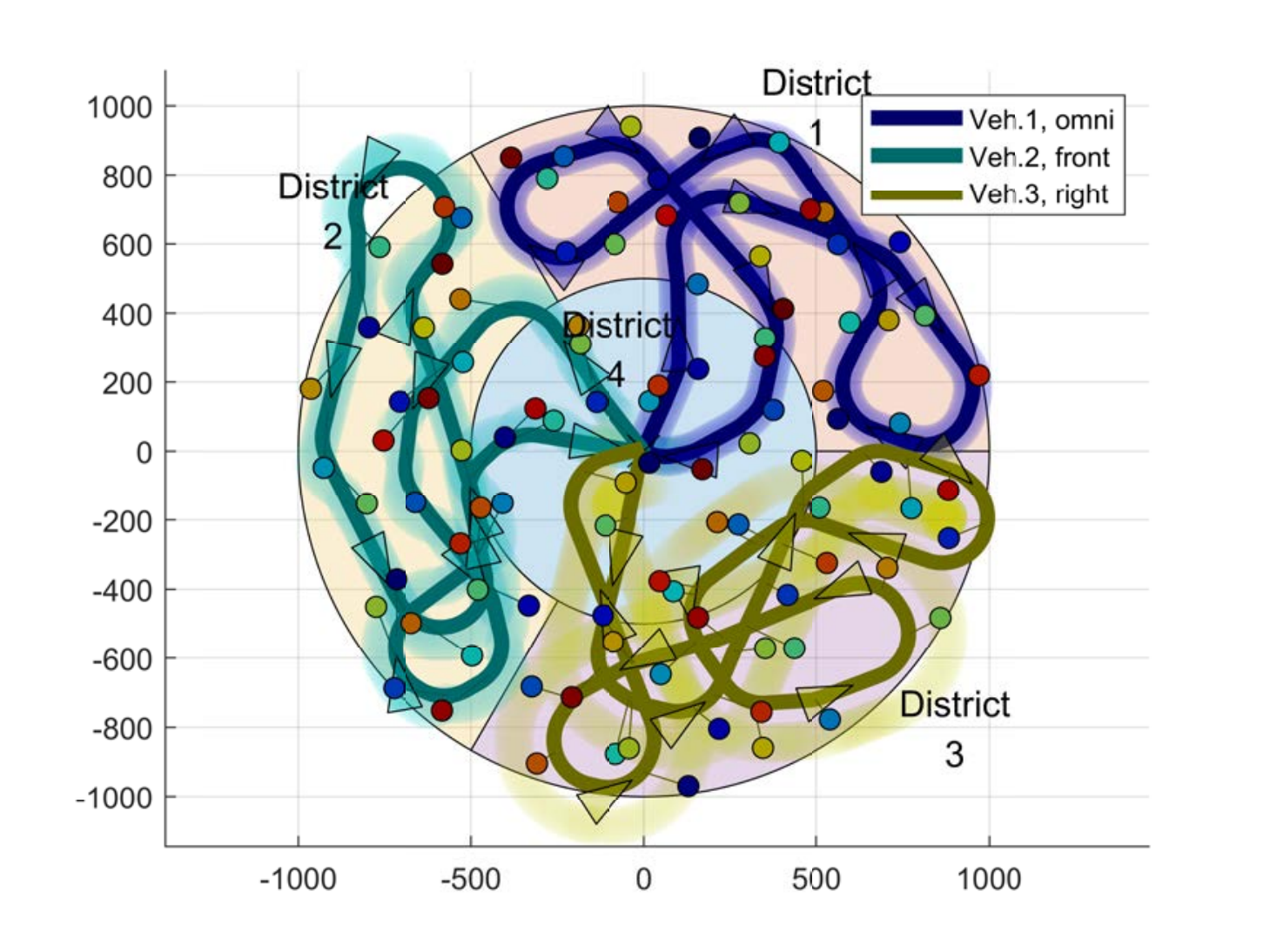}}
		\end{minipage}	
		\caption{Path results from NINPR, heterogeneous case.}
		\label{fig:simulComp5}
	\end{figure*}

	To illustrate functional heterogeneity, we create instances as follows.
	In a way similar to above, three vehicles are under operation and the parameter values for the vehicles remain the same. 
	The changed parts are the locations of depots and the ROI; the depot of each vehicle is equally located at the origin (0,0), and the ROI is a circle with a radius of 1,000$ m $ centered at the origin.
	Tasks are randomly generated in the ROI and each of them belongs to one of the districts based on the location.
	For integer $ i $ from 1 to 3, tasks in district $ i $ can be processed only by vehicle $ i $, and tasks in district 4 are set to be executable by all vehicles.
	A simple and straightforward way to create a functionally heterogeneous instance is that sample nodes are not generated for the tasks that cannot be handled by certain vehicles.
	Figures \ref{fig:simulComp5-1} and \ref{fig:simulComp5-2} show the results of path generation using NINPR when the number of tasks is 30 and 100, respectively.
	Each vehicle travels through the tasks in its district, and the tasks in district 4 are assigned to each vehicle in a way that minimizes the overall cost.

	In order to investigate the characteristics of structural heterogeneity, it is assumed that the direction of the sensor is mounted in the order of omni-directional, forward, and rightward for vehicles 1 to 3.
	The path generation results are shown in Figures \ref{fig:simulComp5-3} and \ref{fig:simulComp5-4} when the number of tasks is 30 and 100, respectively.
	The results confirm that the paths are generated in a way that satisfies the constraints for each vehicle's sensor.

	\section{Conclusion}\label{sec:conc}
	
	This paper described a Mixed-Integer Linear Programming formulation for the multi-vehicle path generation problem as a variant of the traveling salesman problem, which is called the Generalized Heterogeneous Multiple Depot Traveling Salesmen Problem (GHMDATSP).	
	The concept called the necessarily intersecting neighborhood was extended from the omni-directional sensor model to more general cases and was used to increase the performance in terms of path length, especially when the tasks are densely located.
	The key part of the proposed procedure is the transformation method which transforms the GHMDATSP graph to the form of the ATSP to exploit the LKH heuristic, which is the state-of-the-art ATSP solver.
	Local optimization helped improve the performance of the solutions that come from the procedure described in Section \ref{sec:proc}.
	Numerical studies showed the results of various observations of the characteristics of the GHMDATSP with Dubins vehicles.
	Future work will focus on handling different objective functions and more constraints related to capacity or accessible time, and additionaly on development of effective algorithms for larger-scale problems.

	\section*{Appendix - Necessarily intersecting region, frontward sensor direction}\label{app:nir}
	
	\begin{figure}[]
		\centering
		\includegraphics[height=6cm]{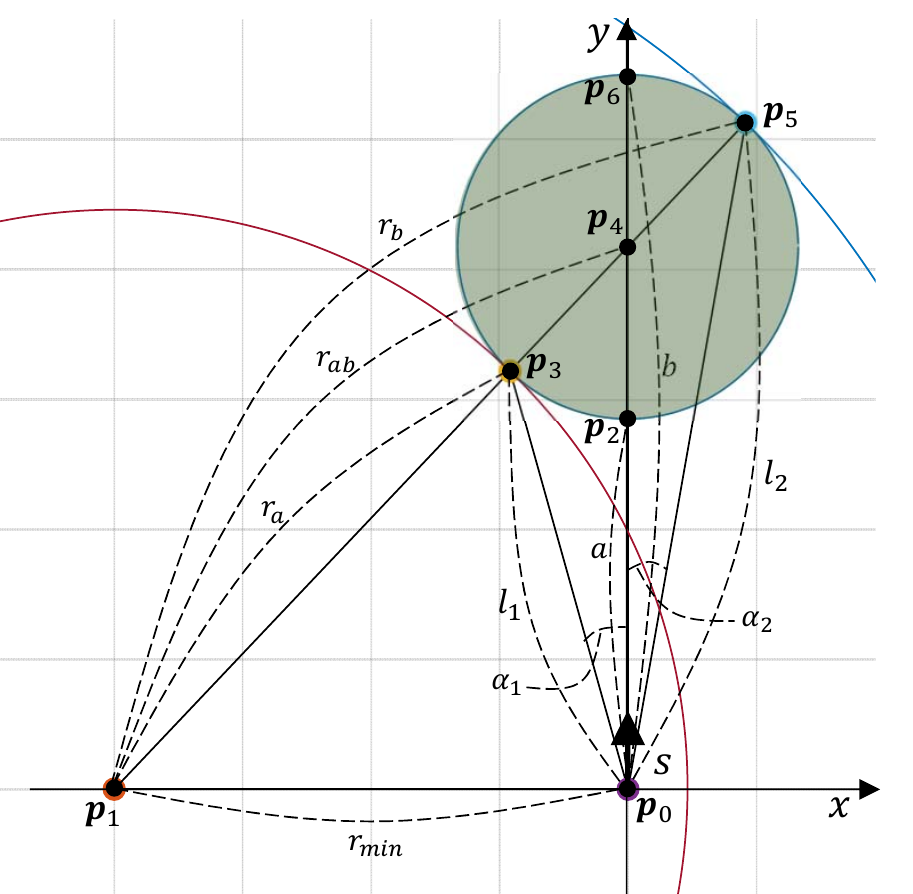}
		\caption{Necessarily intersecting nodes with a directional sensor, facing front-side.}\label{fig:ninFrontSol1}
	\end{figure}	

	Given $ a, b, r_{min}, $ and a sample node $ s $ located on $ \textbf{p}_0 $ (the arrow starting from $ \textbf{p}_0 $), we need to calculate the lengths of line segments $ \overline{\textbf{p}_0 \textbf{p}_3} $ and $ \overline{\textbf{p}_0 \textbf{p}_5} $ which are $ l_1 $ and $ l_2 $, and the angles $ \angle \textbf{p}_3 \textbf{p}_0 \textbf{p}_4 $ and $ \angle \textbf{p}_5 \textbf{p}_0 \textbf{p}_4 $ which are $ \alpha_1, \alpha_2 $ to get an equation of the colored area in Figure \ref{fig:ninFront}.
	For ease of computation, let us assume $ \textbf{p}_0 = (0,0) $ and a heading of sample node $ s $ is directly towards y-axis $ \textbf{p}_1 = (-r_{min},0) $, .
	Before obtaining the equations of the above parameters, let $ r_a $, $ r_{ab} $, and $ r_b $ the length of $ \overline{\textbf{p}_1 \textbf{p}_3} $, $ \overline{\textbf{p}_1 \textbf{p}_4} $, and $ \overline{\textbf{p}_1 \textbf{p}_5} $, respectively.
	Since $ \triangle \textbf{p}_0 \textbf{p}_1 \textbf{p}_4 $ is a right triangle and the length of $ \overline{\textbf{p}_0 \textbf{p}_4} $ is $ \dfrac{a+b}{2} $, $ r_{ab} = \sqrt{r_{min}^2 + \left(\dfrac{a+b}{2}\right)^2} $.
	Using $ r_{ab} $, $ r_a = r_{ab} - \left( \dfrac{b-a}{2} \right) $ and $ r_b = r_{ab} + \left( \dfrac{b-a}{2} \right) $.
	We solve the system of equations in Eq. \eqnref{eq:appSys1} to get the coordinates of point $ \textbf{p}_3 = \left( x_{p_3}, y_{p_3} \right) $.
	\begin{equation}\label{eq:appSys1}
		\left\{\begin{matrix}
			\left( x - x_{p_1} \right)^2 + \left( y - y_{p_1} \right)^2 = r_a^2 \\
			\left( x - x_{p_4} \right)^2 + \left( y - y_{p_4} \right)^2 = \left( \dfrac{b-a}{2} \right)^2
		\end{matrix}\right.
	\end{equation}
	Then the coordinates are as follows:
	\begin{align}
		\left( x_{p_3}, y_{p_3} \right) = \left( \dfrac{r(a-b)}{\sqrt{*}}, \dfrac{(a+b)\left( (a-b)\sqrt{*}+* \right)}{2*} \right) \\
		\textrm{where } * = a^2+2ab+b^2+4r^2. \nonumber
	\end{align}
	From $ \left( x_{p_3}, y_{p_3} \right) $ we obtain the expressions of $ \alpha_1 $ and $ l_1 $ using $ a,b, $ and $ r $ easily by using $ tan^{-1} $ operator and the Pythagorean theorem.
	\begin{eqnarray}
		\alpha_1 = & tan^{-1}\left( \dfrac{2\sqrt{X} \cdot r(b-a)}{(a+b)\left( X+(a-b)\sqrt{X} \right)} \right) \\
		\l_1 = & \sqrt{ \dfrac{r^2(a-b)^2}{*} + \left( \dfrac{(a+b)\left( (a-b)\sqrt{*}+* \right)}{2*} \right)^2 }
	\end{eqnarray}

	In a similar way, we can obtainthe expressions of $ \alpha_2 $ and $ l_2 $ using $ a,b, $ and $ r $ starting from the coordinates of point $ \textbf{p}_5 = \left( x_{p_5}, y_{p_5} \right) $.
	The coordinates are obtained with the following system of equations:
	\begin{equation}\label{eq:appSys2}
		\left\{\begin{matrix}
			\left( x - x_{p_1} \right)^2 + \left( y - y_{p_1} \right)^2 = r_b^2 \\
			\left( x - x_{p_4} \right)^2 + \left( y - y_{p_4} \right)^2 = \left( \dfrac{b-a}{2} \right)^2
		\end{matrix}\right.
	\end{equation}
	Then the coordinates are as follows:
	\begin{equation}
		\left( x_{p_5}, y_{p_5} \right) = \left( \dfrac{r(a-b)}{\sqrt{*}}, \dfrac{(a+b)\left( (a-b)\sqrt{*}+* \right)}{2*} \right)
	\end{equation}
	Finally $ \alpha_2 $ and $ l_2 $ are as follows:
	\begin{eqnarray}
		\alpha_1 = & tan^{-1}\left( \dfrac{2\sqrt{X} \cdot r(b-a)}{(a+b)\left( X+(b-a)\sqrt{X} \right)} \right) \\
		\l_1 = & \sqrt{ \dfrac{r^2(a-b)^2}{*} + \left( \dfrac{(a+b)\left( (b-a)\sqrt{*}+* \right)}{2*} \right)^2 }
	\end{eqnarray}

	\begin{figure}[]
		\centering
		\includegraphics[height=6cm]{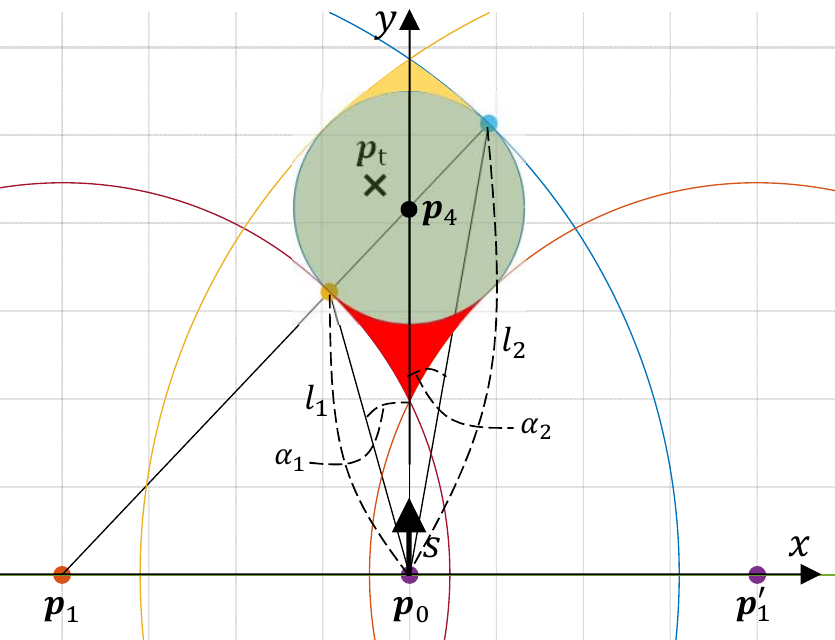}
		\caption{Necessarily intersecting nodes with a directional sensor, facing front-side.}\label{fig:ninFrontSol2}
	\end{figure}	
	From the above results, the necessarily intersecting region by sample node $ s $ is a region which satisfies one of the inequalities in Eq. \eqref{eq:ineq}.
	\begin{subequations}\begin{align}
		r_a < \left|\left| \textbf{p}_t - \textbf{p}_0 \right|\right| < l_1, & \qquad |\angle(\textbf{p}_t-\textbf{p}_0) - \angle(\textbf{p}_4-\textbf{p}_0) | < \alpha_1 \label{eq:ineq1}\\
		l_2 < \left|\left| \textbf{p}_t - \textbf{p}_0 \right|\right| < r_b, & \qquad |\angle(\textbf{p}_t-\textbf{p}_0) - \angle(\textbf{p}_4-\textbf{p}_0) | < \alpha_2 \label{eq:ineq2}\\
		\left|\left| \textbf{p}_t - \textbf{p}_4 \right|\right| < r_{sen} \span \label{eq:ineq3}
	\end{align}\label{eq:ineq}\end{subequations}
	In Figure \ref{fig:ninFrontSol2}, Eqs. \eqref{eq:ineq1} and \eqref{eq:ineq2} correspond to the red-colored and yellow-colored areas, respectively, and Eq. \eqref{eq:ineq3} corresponds to the green-colored area which is a footprint of the sensor when the vehicle is at $ \textbf{p}_0 $ and heading towards the direction of the positive y-axis.

	
	\bibliographystyle{aiaa}
	\bibliography{bibtex_database}
	
\end{document}